\setlist{
topsep=-0.1em,   
itemsep=-0.3em, 
leftmargin=1.3pc, 
 }
\newtheorem{theorem}{Theorem}
\newtheorem{result}{Main Result}
\newtheorem{corollary}{Corollary}
\newtheorem{lemma}{Lemma}
\newtheorem{definition}{Definition}
\newtheorem*{theorem*}{Theorem}
\newtheorem*{lemma*}{Lemma}
\newtheorem*{claim*}{Claim}
\newtheorem*{corollary*}{Corollary}
\newcommand{\ones}{\mathbf{1}}
\newcommand{\real}{\mathbb{R}}
\newcommand{\Loss}{\mathcal{L}}
\newcommand{\N}{\mathcal{N}}
\newcommand{\I}{\mathrm{Id}}
\DeclareMathOperator{\sign}{sign}
\newcommand{\ww}{\mathbf{w}}
\newcommand{\ee}{\mathbf{e}}
\newcommand{\vv}{\mathbf{v}}
\newcommand{\aaa}{\mathbf{a}}
\newcommand{\xx}{\mathbf{x}}
\newcommand{\sss}{\mathbf{s}}
\newcommand{\qq}{\mathbf{q}}
\newcommand{\zz}{\mathbf{z}}
\newcommand{\yy}{\mathbf{y}}
\newcommand{\bb}{\mathbf{b}}
\newcommand{\cc}{\mathbf{c}}
\newcommand{\blambda}{\boldsymbol  \lambda }
\newcommand{\bepsilon}{\boldsymbol  \varepsilon }
\newcommand{\0}{\mathbf{0}}
\newcommand{\veps}{\varepsilon}
\newcommand{\R}{\mathbb{R}}
\newcommand{\cl}{ \mathrm{cl} }
\newcommand{\bomega}{\boldsymbol \omega}
\newcommand{\x}{\xx}
\newcommand{\Sc}{\mathcal{S}}
\newcommand{\M}{\mathbb{M}}
\newcommand{\E}{\mathcal{E}}
\newcommand{\vrho}{\varrho}
\icmltitlerunning{The Multilinear Structure of ReLU Networks}
\begin{document}

\twocolumn[
\icmltitle{The Multilinear Structure of ReLU Networks}

\icmlsetsymbol{equal}{*}

\begin{icmlauthorlist}
\icmlauthor{Thomas Laurent}{equal,lmu}
\icmlauthor{James H. von Brecht}{equal,lbs}
\end{icmlauthorlist}

\icmlaffiliation{lmu}{Department of Mathematics, Loyola Marymount University, Los Angeles, CA 90045, USA}
\icmlaffiliation{lbs}{Department of Mathematics and Statistics, California State University, Long Beach, Long Beach, CA 90840, USA}

\icmlcorrespondingauthor{Thomas Laurent}{tlaurent@lmu.edu}
\icmlcorrespondingauthor{James H. von Brecht}{james.vonbrecht@csulb.edu}


\vskip 0.3in
]

\printAffiliationsAndNotice{\icmlEqualContribution}

\begin{abstract}
We study the loss surface of neural networks equipped with a hinge loss criterion and ReLU or leaky ReLU nonlinearities. Any such network defines a piecewise multilinear form in parameter space.  By appealing to harmonic analysis we show that all local minima of such network are non-differentiable, except for those minima that occur in a region of parameter space where the loss surface is perfectly flat. Non-differentiable minima are therefore not technicalities or pathologies; they are heart of the problem when investigating the loss of ReLU networks. As a consequence, we must employ techniques from nonsmooth analysis to study these loss surfaces. We show how to apply these techniques in some illustrative cases.
\end{abstract}

\section{Introduction}

Empirical practice tends to show that modern neural networks have relatively benign loss surfaces, in the sense that training a deep network proves less challenging than the non-convex and non-smooth nature of the optimization would na\"{i}vely suggest. Many theoretical efforts, especially in recent years, have attempted to explain this phenomenon and, more broadly, the successful optimization of deep networks in general 
\cite{gori1992problem, choromanska2015loss, kawaguchi2016deep, safran2016quality, mei2016landscape, soltanolkotabi2017learning, soudry2017exponentially, du2017convolutional,
zhong2017recovery, tian2017analytical, li2017convergence, zhou2017landscape, brutzkus2017sgd}.
The properties of the loss surface of neural networks remain poorly understood despite these many efforts. Developing of a coherent mathematical understanding of them is therefore one of the major open problems in deep learning.

We focus on investigating the loss surfaces that arise from feed-forward neural networks where rectified linear units (ReLUs) $\sigma(x) :=\max(x,0)$ or leaky ReLUs 
$
\sigma_\alpha(x):= \alpha \min(x,0) + \max(x,0)
$
account for all nonlinearities present in the network. 
We allow the transformations defining the hidden-layers of the network to take the form of fully connected affine transformations or convolutional transformations. By employing a ReLU-based criterion we then obtain a loss with a consistent, homogeneous structure for the nonlinearities in the network. We elect to use the binary hinge loss
\begin{equation}
\ell( \hat y ,  y)  := \sigma\big( 1 - y \hat y \big) \label{hl1}
\end{equation}
for binary classification, where $\hat y$ denote the scalar output of the network and $y \in \{-1,1\}$ denotes the target. Similarly, for multiclass classification we use
the multiclass hinge loss,
\begin{equation} \textstyle
 \ell( \hat \yy , r_0)=  \sum_{r\neq r_0} \sigma\big( 1 +  \hat y_r-   \hat y_{r_0}   \big)   \label{hl2}
\end{equation}
where $\hat \yy = (\hat y_1, \ldots, \hat y_R) \in \real^R$ denotes the vectorial output of the network  and  $r_0 \in \{1,\ldots,R\}$ denotes the target class.
 
 \begin{figure}[t]
\begin{center}
\subfigure[Parameter Space $\Omega$]{
\begin{tikzpicture}[scale=0.5]
\draw[red,  thick, domain=-3:-0.65, samples=500] plot (\x,    {2/ (\x)})    ;
\draw[red,  thick, -] (0,-3) -- (0, 3) ; 
\draw[red,  thick, domain=0.37:3, samples=500] plot (\x,    {- 1.125/ (\x)})    ;
\draw[ thick, -] (-3,-3) -- (3 , -3) -- (3,3) -- (-3,3) -- (-3,-3) ; 
\node [below] at (-2,-1.6) {{$\Omega_1$}};
\node [below] at (-1.4,1) {{$\Omega_2$}};
\node [below] at (1.4,1.4) {{$\Omega_3$}};
\node [below] at (2,-1.4) {{$\Omega_4$}};
\end{tikzpicture}
} \hspace{0.4cm}
\subfigure[Loss Surface $\Loss(\bomega)$]{ \includegraphics[trim=240 110 140 100,clip,width=1.5in]{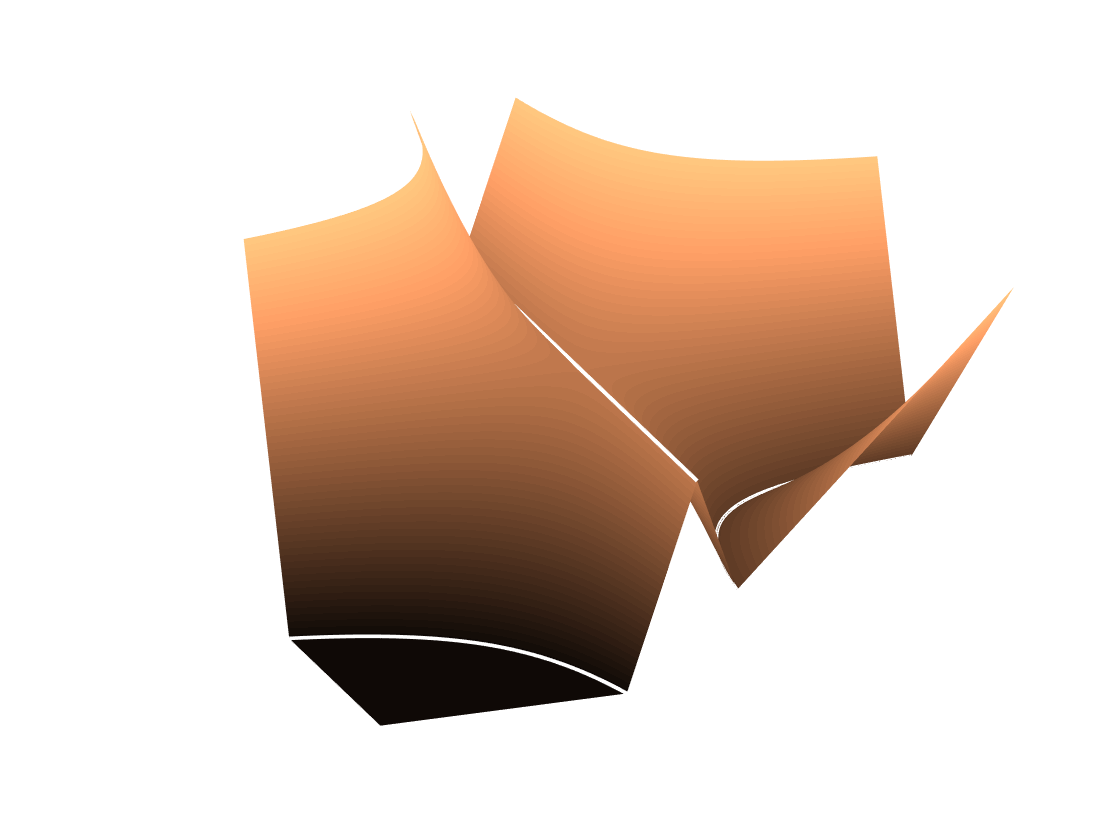} }
\caption{(a): Parameter space $\Omega = \R^{2}$ decomposes into a partition of four open cells $\Omega_{u}$ and a closed boundary set $\N$ (solid red lines). (b):  The loss surface is smooth inside each $\Omega_u$ and non-differentiable on $\N$. 
 It has two types of local minima, flat minima (cell $\Omega_1$) and sharp minima
(boundary between cells $\Omega_3$ and $\Omega_4$). The sharp minima must have non-zero loss.
 }
 \label{artificial_2d_pic}
\end{center}
\end{figure}

To see the type of structure that emerges in these networks, let $\Omega$ denote the space of network parameters and let $\Loss(\bomega)$ denote the loss. Due to the choices (\ref{hl1},\ref{hl2}) of network criteria, all nonlinearities involved in $\Loss(\bomega)$ are piecewise linear. These nonlinearities encode a partition of parameter space $ \Omega = \Omega_1 \cup \cdots \cup \Omega_{M}\cup \N$
into a finite number of open  \emph{cells} $\Omega_u$ and a closed set $\N$ of cell boundaries (c.f. figure \ref{artificial_2d_pic})\nolinebreak. A cell $\Omega_u$ corresponds to a given activation pattern of the nonlinearities, and so $\Loss(\bomega)$ is smooth in the interior of cells and (potentially) non-differentiable on cell boundaries. This decomposition provides a description of the smooth (i.e. $\Omega \backslash \N$) and non-smooth (i.e. $\N$) parts of parameter space.

  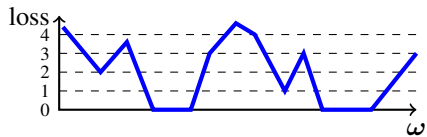
\begin{figure}[t]
\begin{center}
\begin{tikzpicture}[scale=0.5]
\draw[thick,<->] (0,2.5) -- (0,0) -- (9.5,0);
\draw[dashed,-] (0,0.5) -- (9.5,0.5);
\draw[dashed,-] (0,1) -- (9.5,1);
\draw[dashed,-] (0,1.5) -- (9.5,1.5);
\draw[dashed,-] (0,2) -- (9.5,2);
\node [below] at (9.5,-0.1) {{$\bomega$}};
\node [left] at (0,2.5) {{loss}};
\node [left] at (0,0) {\scriptsize{0}};
\node [left] at (0,0.5) {\scriptsize{1}};
\node [left] at (0,1) {\scriptsize{2}};
\node [left] at (0,1.5) {\scriptsize{3}};
\node [left] at (0,2) {\scriptsize{4}};
\draw[ultra thick, blue,-] (0.1,2.2) -- (1.1 , 1.0) --(1.8, 1.8)--(2.5, 0) -- (3.5,0) -- (4, 1.5) -- (4.7, 2.3) -- (5.2, 2)-- (6, 0.5) -- (6.5,1.5) -- (7,0) -- (8.3,0)--(9.5,1.5) ;
\end{tikzpicture}
\end{center}
\vspace{-0.5cm}
\caption{For a certain class of networks, flat local minima are always optimal whereas sharp ones are always sub-optimal.}
\label{fig:cartoon}
\end{figure}

We begin by showing that the loss restricted  to a cell $\Omega_u$ is a multilinear form. As multilinear forms are harmonic functions, an appeal to the strong maximum principle shows that non-trivial optima of the loss must happen on cell boundaries (i.e. the non-differentiable region $\N$ of the parameter space). 
In other words, ReLU networks with hinge loss criteria {\bf do not have differentiable local minima}, except for those trivial ones that occur in regions of parameter space where the loss surface is perfectly flat. Figure \ref{artificial_2d_pic}b) provides a visual example of such a loss.

As a consequence the loss function has only two types of local minima. They are
\begin{itemize}
\item {{\bf Type I (Flat)}: Local minima that occur in a flat (i.e. constant loss) cell or on the boundary  of a flat cell.} 
\item {{\bf Type II (Sharp)}: Local minima on $\N$ that  are not on the boundary  of any flat cell. }
\end{itemize}
We then investigate type I and type II local minima in more detail. The investigation reveals a clean dichotomy. First and foremost,
\begin{result} $\Loss(\bomega)>0$ at any type II local minimum.\end{result}
Importantly, if zero loss minimizers exist (which happens for most modern deep networks) then sharp local minima are always sub-optimal. This result applies to a quite general class of deep neural networks with fully connected or convolutional layers equipped with either ReLU or leaky ReLU nonlinearities. To obtain a converse we restrict our attention to  fully connected  networks with leaky ReLU nonlinearities. Under mild assumptions on the data we have
 \begin{result} $\Loss(\bomega)=0$ at any type I local minimum, while $\Loss(\bomega)>0$ at any type II local minimum.
 \end{result}
Thus flat local minima are always optimal whereas sharp minima are always sub-optimal in the case where zero loss minimizers exist. Conversely, if zero loss minimizers do not exist then all local minima are sharp. See figure \ref{fig:cartoon} for an illustration of such a loss surface.
 
 All in all these results paint a striking picture. Networks with ReLU or leaky ReLU nonlinearities and hinge loss criteria have only two types of local minima. Sharp minima always have non-zero loss; they are undesirable. Conversely, flat minima are always optimal for certain classes of networks. In this case the structure of the loss (flat v.s. sharp) provides a  perfect characterization of their quality (optimal v.s.  sub-optimal).
 
This analysis also shows that local minima generically occur in the non-smooth region of  parameter space. Analyzing them requires an invocation of machinery from non-smooth, non-convex analysis. We show how to apply these techniques to study non-smooth networks in the context of binary classification. We consider three specific scenarios to illustrate how nonlinearity and data complexity affect the loss surface of multilinear networks ---
\begin{itemize}
\item {\bf Scenario 1:} A deep linear network with arbitrary data.
\item {\bf Scenario 2:} A network with one hidden layer, leaky ReLUs and linearly separable data.
\item {\bf Scenario 3:} A network with one hidden layer, ReLUs and linearly separable data.
\end{itemize} 
The nonlinearities $\sigma_{\alpha}(x)$ vary from the linear regime $(\alpha = 1)$ to the leaky regime $(0<\alpha<1)$ and finally to the ReLU regime $(\alpha = 0)$ as we pass from the first to the third scenario. We show that no sub-optimal local minimizers exist in the first two scenarios. When passing to the case of paramount interest, i.e. the third scenario, a bifurcation occurs. Degeneracy in the nonlinearities (i.e. $\alpha=0$) induces sub-optimal local minima in the loss surface. We also provide an explicit description of  all such sub-optimal local optima. They correspond to the occurence of \emph{dead data points}, i.e. when some data points do not activate any of the neurons of the hidden layer and are therefore ignored by the network. Our results for the second and third scenarios provide a mathematically precise formulation of a commonplace intuitive picture. A ReLU can completely ``turn off," and sub-optimal minima correspond  precisely to situations in which a data point turns off all ReLUs in the hidden layer. As leaky ReLUs have no completely ``off'' state, such networks therefore have no sub-optimal minima.

Finally, in section 4 we conclude by investigating the extent to which these phenomena do, or do not, persist when passing to the multiclass context. The loss surface of a multilinear network with the multiclass hinge loss \eqref{hl2} is fundamentally different than that of a binary classification problem. In particular, the picture that emerges from our two-class results does not extend to the multiclass hinge loss. Nevertheless, we show how to obtain a similar picture of critical points by modifying the training strategy applied to multiclass problems.

Many recent works theoretically investigate the loss surface of ReLU networks. The closest to ours is
\cite{safran2016quality}, which uses ReLU nonlinearities to partition the parameter space into basins that, while similar in spirit, differ from our notion of cells. Works such as \cite{keskar2016large, chaudhari2016entropy} have empirically investigated the notion of ``width" of a local minimizer. Conjecturally, a ``wide" local minimum should generalize better than a ``narrow" one and might be more likely to attract the solution generated by a stochastic  gradient descent algorithm. Our flat and sharp local minima are reminiscent of these notions. Finally, some prior works have proved variants of our results in smooth situations. For instance, \cite{brutzkus2017sgd} derives results about the smooth local minima occurring in scenarios 2 and 3, but they do not investigate non-differentiable local minima. Additionally, \cite{kawaguchi2016deep} considers our first scenario with a mean squared error loss instead of the hinge loss, while \cite{frasconi1997successes} considers our second scenario with a smooth version of the hinge loss and with sigmoid nonlinearities. Our non-smooth analogues of these results require fundamentally different techniques. We prove all lemmas, theorems and corollaries in the appendix.

\section{Global Structure of the Loss}
We begin by describing the global structure of ReLU networks with hinge loss that arises due to their piecewise multilinear form.
Let us start by rewriting \eqref{hl2} as
\begin{align}\label{eq:lossdef}
\
\ell( \hat \yy ,  \yy) &= -1+\sum_{r=1}^R \sigma\big( 1 +  \hat y_r- \langle\yy, \hat \yy\rangle \big) \nonumber  \\
&= -1+\Big\langle \;  \ones \; , \;  \sigma \Big( \;  (\I - \ones \otimes \yy)   \hat \yy +  \ones  \;\Big) \Big\rangle
\end{align}
where we now view the target $\yy \in \{0,1\}^R$ as a one-hot vector that encodes for the desired class. The term $\ones \otimes \yy$ denotes the outer product between the constant vector $\mathbf{1} = (1,\ldots,1)^{T}$ and the target, while $\langle\yy, \hat \yy\rangle$ refers to the usual Euclidean inner product. We consider a collection  $(\xx^{(i)}, \yy^{(i)})$ of $N$ labeled data points fed through a neural network with $L$ hidden layers, 
\begin{align}
 &\xx^{(i,\ell )} \,= \sigma_\alpha( W^{(\ell)} \xx^{(i,\ell-1)}+ \bb^{(\ell)})  \qquad \text{for} \quad \ell \in [L] \nonumber \\ 
 &\hat \yy^{(i)} \;\;\,=  V \xx^{(i,L)}+ \cc,  \label{bobo} 
\end{align}
so that for $\ell \in [L]:=\{1,\ldots,L\}$ each $\xx^{(i,\ell)}$ refers to feature vector of the $i^{{\rm th}}$ data point at the $\ell^{ {\rm th}}$ layer (with the convention that $\xx^{(i,0)}=\xx^{(i)}$) and  $\hat \yy^{(i)}$ refers to the output of the network for the $i^{ {\rm th}}$ datum. By \eqref{eq:lossdef} we obtain
\begin{equation}\label{full_loss}
\Loss( \bomega ) =  -1+ \sum_i  \mu^{(i)}  \big\langle   \ones  ,  \sigma \big(   (\I - \ones \otimes \yy^{(i)})   \hat \yy^{(i)} +  \ones  \big) \big\rangle
\end{equation}
for the loss $\Loss(\bomega)$. The positive weights $\mu^{(i)} > 0$   sum to one, say $\mu^{(i)}=1/N$ in the simplest case, but we allow for other choices to handle those situations, such as an unbalanced training set, in which non-homogeneous weights could be beneficial.  
 The matrices $W^{(\ell)}$ and vector $\bb^{(\ell)}$ appearing in  \eqref{bobo}   define the affine transformation at layer $\ell$ of the network, and $V$ and $\cc$ in \eqref{bobo} denote the weights and bias of the output layer.
 We allow for fully-connected as well as structured models, such as convolutional networks, by imposing the assumption that each $W^{(\ell)}$ is a matrix-valued function that depends \emph{linearly} on some set of parameters $\omega^{(\ell)}$ ---
$$
W^{(\ell)}\big( c \omega^{(\ell)} +  d \hat \omega^{(\ell)} \big) = c W^{(\ell)}\big( \omega^{(\ell)}\big) +   d W^{(\ell)}\big( \hat \omega^{(\ell)}\big);
$$
thus the collection 
$$\bomega = (\omega^{(1)},\ldots,\omega^{(L)},V,\bb^{(1)},\ldots,\bb^{(L)},\cc) \in \Omega$$ represents the parameters of the network and $\Omega$ denotes parameter space. As the slope $\alpha$ of the nonlinearity decreases from $\alpha = 1$ to $\alpha = 0$ the network transitions from a deep linear architecture to a standard ReLU network. Finally, we let $d_\ell$ denote the dimension of the features at layer $\ell$ of the network, with the convention that $d_0=d$ (dimension of the input data) and $d_{L+1}=R$ (number of classes). We use $D=d_1+\ldots+d_{L+1}$ for the total number of neurons. 

\subsection{Partitioning $\Omega$ into Cells}
The nonlinearities $\sigma_\alpha(x)$ and $\sigma(x)$ account for the only sources of nondifferentiabilty in the loss of a ReLU network. To track these potential sources of nondifferentiability, for a given a data point $\xx^{(i)}$ we define the functions
\begin{align}\label{eq:sigs0}
\sss^{(i,\ell)}(\bomega) &:= \sign( W^{(\ell)} \xx^{(i,\ell-1)}+ \bb^\ell)   \qquad \text{for  } \quad \ell \in [L] \nonumber \\
\sss^{(i,L+1)}(\bomega) &:= \sign\left( \;  (\I - \ones\otimes \yy^{(i)})  \;  \hat\yy^{(i)}  +  \ones  \;\right),
\end{align}
where $\sign(x)$ stands for the signum function that vanishes at zero. The function $\sss^{(i,\ell)}$ describes how data point $\x^{(i)}$ activates the $d_{\ell}$ neurons at the $\ell^{{\rm th}}$ layer, while $\sss^{(i,L+1)}(\bomega)$ describes the corresponding ``activation'' of the loss. These activations take one of three possible states, the fully active state (encoded by a one), the fully inactive state (encoded by a minus one), or an in-between state (encoded by a zero). We then collect all of these functions into a single \emph{signature function}
\begin{align*}
 \Sc(\bomega)= \Big( \sss^{(1,1)}(\bomega), \ldots , \sss^{(1,L+1)}(\bomega);   \ldots \ldots; \\
  \sss^{(N,1)}(\bomega), \ldots , \sss^{(N,L+1)}(\bomega)\Big)
\end{align*}
to obtain a function $\Sc: \Omega \mapsto \{ -1  , \, 0  ,\,  1\}^{ND}$ since there are a total of  $D$  neurons and $N$  data points. If  $\Sc(\bomega)$ belongs to the subset $\{-1,1\}^{ND}$ of  $\{ -1  , \, 0  ,\,  1\}^{ND}$ then none of the $ND$ entries of $\Sc(\bomega)$ vanish, and as a consequence, all of the nonlinearities are differentiable near $\bomega$; the loss $\Loss$ is smooth near such points. With this in mind, for a given $u \in \{-1,1\}^{ND}$ we define the cell $\Omega_u$ as the (possibly empty) set
$$
\Omega_{u} := \Sc^{-1}(u) := \left\{ \bomega \in \Omega: \Sc(\bomega)=u \right\}
$$
of parameter space. By choice  $\Loss$ is smooth on each non-empty cell $\Omega_u,$ and so the cells $\Omega_u$  provide us with a partition of the parameter space
$$
\Omega=  \left(  \bigcup_{u \in \{-1,1\}^{ND} } \, \Omega_u \right) \;\;  \bigcup \;\;   \mathcal{N}
$$
into smooth and potentially non-smooth regions. The set $\mathcal{N}$ contains those $\bomega$ for which at least one of the $ND$ entries of $\Sc(\bomega)$ takes the value $0,$ which implies that at least one of the nonlinearities is non-differentiable at such a point. Thus $\mathcal{N}$ consists of points at which the loss is potentially non-differentiable. The following lemma collects the various properties of the cells $\Omega_u$ and of $\mathcal{N}$ that we will need.
\begin{lemma} \label{topology}
For each $u \in \{-1,1\}^{ND}$ the cell $\Omega_u$ is an open set. If $u \neq u'$ then $\Omega_u$ and $\Omega_{u'}$ are disjoint. The set 
$\mathcal{N}$ is  closed and has Lebesgue measure $0$.
\end{lemma}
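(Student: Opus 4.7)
The proof hinges on a single observation: every pre-activation $W^{(\ell)}\xx^{(i,\ell-1)} + \bb^{(\ell)}$ and every entry of the output activation $(\I - \ones \otimes \yy^{(i)})\hat\yy^{(i)} + \ones$ depends continuously on $\bomega$, since each is obtained by composing the continuous map $\sigma_\alpha$ with affine transformations layer by layer. I would first establish this continuity by induction on $\ell$, taking as base case $\xx^{(i,0)} = \xx^{(i)}$ (constant in $\bomega$) and using the inductive step $\xx^{(i,\ell)} = \sigma_\alpha(W^{(\ell)}\xx^{(i,\ell-1)} + \bb^{(\ell)})$.

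With continuity in hand the three topological claims fall out directly. A point $\bomega$ lies in $\Omega_u$ for $u \in \{-1,1\}^{ND}$ exactly when a finite collection of strict inequalities (one $>0$ or $<0$ per entry of every pre-activation, with sign dictated by $u$) all hold simultaneously. Each such strict inequality cuts out an open set by continuity, so $\Omega_u$ is a finite intersection of open sets, hence open. Disjointness is immediate from $\Sc$ being a function. Since $\bigcup_u \Omega_u$ is then a finite union of open sets, its complement $\mathcal{N}$ is closed.

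For the measure-zero claim I would write $\mathcal{N} = \bigcup_{i,\ell,k} Z_{i,\ell,k}$, where $Z_{i,\ell,k}$ is the zero set of the $k$-th entry of the $\ell$-th pre-activation for datum $i$, and argue via Fubini that each $Z_{i,\ell,k}$ is null. The trick is to freeze every parameter except those at layer $\ell$: then $\xx^{(i,\ell-1)}$ reduces to a fixed vector and the pre-activation collapses to the affine function $\omega^{(\ell)}, \bb^{(\ell)} \mapsto W^{(\ell)}(\omega^{(\ell)})_k\,\xx^{(i,\ell-1)} + b^{(\ell)}_k$ whose coefficient in $b^{(\ell)}_k$ equals $1$. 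A nonconstant affine function has a proper hyperplane as its zero set, hence a null slice in $(\omega^{(\ell)}, \bb^{(\ell)})$-space, and Fubini upgrades this to a null set in $\Omega$. A finite union of null sets is null, so $|\mathcal{N}| = 0$.

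The one subtlety I anticipate handling is the output-layer case $\ell = L+1$, where the row of $\I - \ones \otimes \yy^{(i)}$ corresponding to the target class $r_0$ annihilates $\hat y_{r_0}$ so that the $r_0$-th component of the activation is identically $1$; this simply means $Z_{i,L+1,r_0} = \emptyset$ and contributes nothing. For $r \neq r_0$ the activation is $1 + \hat y_r - \hat y_{r_0}$, which is affine in $\cc$ with nonzero gradient $\ee_r - \ee_{r_0}$, so the Fubini argument goes through verbatim with $(V,\cc)$ in place of $(\omega^{(\ell)}, \bb^{(\ell)})$.
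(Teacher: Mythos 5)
Your proof is correct, and for the openness, disjointness, and closedness claims it coincides with the paper's argument (continuity of the features in $\bomega$, strict inequalities cut out open sets, $\mathcal{N}$ is the complement of a finite union of opens). The only real divergence is in the measure-zero step. The paper observes that each vanishing condition can be solved for the relevant bias coordinate, $\bb^{(\ell)}_{j} = -\langle \ww^{(\ell)}_{j}, \xx^{(i,\ell-1)}\rangle$ or $\cc_{s} = -\big(1 + \langle \vv_s - \vv_{r}, \xx^{(i,L)}\rangle\big)$, so that each piece of $\mathcal{N}$ sits inside a Lipschitz graph over the remaining parameters; it then invokes $(n_p-1)$-rectifiability to conclude nullity. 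You instead freeze all parameters outside layer $\ell$, note the pre-activation is then affine in $(\omega^{(\ell)}, \bb^{(\ell)})$ with unit coefficient on $b^{(\ell)}_k$, and apply Fubini to the resulting hyperplane slices. Both arguments pivot on exactly the same structural fact --- the bias enters each vanishing equation with coefficient one, so the equation is never degenerate --- but your Fubini version is more self-contained, avoiding any appeal to rectifiability, while the paper's Lipschitz-graph phrasing is slightly more robust in that it does not require identifying a sub-block of parameters on which the map is affine. Your handling of the output layer (the $r_0$-th component is identically $1$, so its zero set is empty, and the $r \neq r_0$ components have gradient $\ee_r - \ee_{r_0} \neq \0$ in $\cc$) is a detail the paper glosses over but gets right implicitly.
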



\subsection{Flat and Sharp Minima} 
Recall that a function 
 $
 \phi: \real^{d_1} \times \ldots \times \real^{d_n} \to \real
 $ is a multilinear form if it is linear with respect to each of its inputs when the other inputs are fixed. That is, 
 \begin{align*}
\phi(\vv_1,  \ldots, c \vv_k+ d \ww_k, \ldots , \vv_n) &=  c \phi(\vv_1,  \ldots, \vv_k, \ldots , \vv_n) \\
&+ d\phi(\vv_1,  \ldots, \ww_k, \ldots , \vv_n).
\end{align*}
Our first theorem forms the basis for our analytical results. It states that, up to a constant, the loss restricted to a fixed  cell $\Omega_u$ is a sum of multilinear  forms.
\begin{theorem}[Multilinear Structure of the Loss] \label{theorem1}
\mbox{}
For each cell $\Omega_u$ there exist multilinear forms $\phi_0^u, \ldots , \phi_{L+1}^u$ and a constant $\phi_{L+2}^u$  such that
\begin{align*}
\Loss |_{\Omega_u}(\omega^{(1)}, \ldots, \omega^{(L)},V,\bb^{(1)}, \ldots,\bb^{(L)},\cc)=\\
    \phi_0^u(\omega^{(1)}, \omega^{(2)}, \omega^{(3)}, \omega^{(4)} \ldots, \omega^{(L)},V)& \\
+ \phi_1^{u}(\bb^{(1)}, \omega^{(2)}, \omega^{(3)}, \omega^{(4)} \ldots, \omega^{({L})},V)& \\
+ \phi_2^{u}(\bb^{(2)}, \omega^{(3)}, \omega^{(4)} \ldots, \omega^{({L})},V)& \\
\vdots \qquad \qquad & \\
+ \phi_{L-1}^{u}(\bb^{({L-1})},\omega^{(L)},V)&\\
+ \phi_L^{u}(\bb^{({L})},V)& \\
+ \phi_{L+1}^{u}(\cc)& \\
+ \phi_{L+2}^{u}.
\end{align*}
\end{theorem}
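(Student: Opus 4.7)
My plan rests on the observation that inside an open cell $\Omega_u$ every coordinate of the signature $\Sc(\bomega)=u$ lies in $\{-1,+1\}$. Consequently each hidden nonlinearity $\sigma_\alpha$ and the outer $\sigma$ appearing in \eqref{full_loss} reduces on $\Omega_u$ to multiplication by a fixed diagonal matrix determined by $u$: the diagonal entry is $1$ (resp.\ $\alpha$) wherever the corresponding signature component is $+1$ (resp.\ $-1$) for $\sigma_\alpha$, and $1$ (resp.\ $0$) for $\sigma$. The strategy is to exploit these constant slopes to rewrite the recursion \eqref{bobo} as a pure composition of affine maps and then to read off the claimed multilinear decomposition.

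First I would define, for each data point $i$ and layer $\ell$, the fixed diagonal matrices $D^{(i,\ell)}_u$ (for the hidden $\sigma_\alpha$) and $E^{(i)}_u$ (for the outer $\sigma$) prescribed by $u$, so that on $\Omega_u$ the feature recursion becomes affine: $\xx^{(i,\ell)} = D^{(i,\ell)}_u W^{(\ell)}(\omega^{(\ell)})\,\xx^{(i,\ell-1)} + D^{(i,\ell)}_u \bb^{(\ell)}$. Setting $T^{(i,\ell)} := D^{(i,\ell)}_u W^{(\ell)}(\omega^{(\ell)})$ and iterating, a short induction yields
$$
\xx^{(i,L)} \;=\; T^{(i,L)}\!\cdots\!T^{(i,1)}\xx^{(i)} \;+\; \sum_{\ell=1}^{L} T^{(i,L)}\!\cdots\!T^{(i,\ell+1)} D^{(i,\ell)}_u \bb^{(\ell)},
$$
with the convention that an empty product equals the identity. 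Because $W^{(\ell)}$ depends linearly on $\omega^{(\ell)}$ by hypothesis and $D^{(i,\ell)}_u$ is constant on the cell, each factor $T^{(i,\ell)}$ is linear in $\omega^{(\ell)}$. Thus the leading summand is multilinear in $(\omega^{(1)},\ldots,\omega^{(L)})$, while the $\ell$-th bias summand is multilinear in $(\bb^{(\ell)},\omega^{(\ell+1)},\ldots,\omega^{(L)})$.

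Second, I would replace the outer ReLU in \eqref{full_loss} by $E^{(i)}_u$ to obtain
$$
\Loss|_{\Omega_u} \;=\; \kappa_u \;+\; \sum_i \bigl\langle \aaa_u^{(i)},\,\hat\yy^{(i)}\bigr\rangle,
$$
where $\kappa_u$ and the vectors $\aaa_u^{(i)}$ are explicit constants on the cell built from $E^{(i)}_u$, $\yy^{(i)}$ and $\mu^{(i)}$. Substituting $\hat\yy^{(i)} = V\xx^{(i,L)}+\cc$ together with the expansion of $\xx^{(i,L)}$ above and distributing the inner products introduces a trailing factor of $V$ into every non-constant term. Collecting the resulting terms by which bias (or none) they contain then produces the multilinear forms $\phi_0^u,\ldots,\phi_L^u$ with precisely the listed argument lists, the linear form $\phi_{L+1}^u(\cc)$ coming from the $\cc$ contribution, and the additive constant $\phi_{L+2}^u = \kappa_u$.

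The mathematical content is essentially captured by the constant-slope observation of the first paragraph; the main obstacle is not conceptual but notational, namely keeping the indices of the telescoping products and bias summands straight across the unrolled recursion. The linearity of $W^{(\ell)}$ with respect to $\omega^{(\ell)}$, which holds by assumption for both fully connected and convolutional parametrizations, is exactly what converts the composition of affine maps into a sum of multilinear, rather than merely polynomial, forms.
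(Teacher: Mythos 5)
Your proposal is correct and follows essentially the same route as the paper's proof: both replace the nonlinearities on a cell by fixed diagonal matrices (your $D^{(i,\ell)}_u, E^{(i)}_u$ are the paper's $\Lambda^{(i,\ell,u)}, \mathcal{E}^{(i,u)}$), use the assumed linearity of $W^{(\ell)}$ in $\omega^{(\ell)}$, and expand the resulting composition of affine maps into a sum of multilinear forms grouped by which bias they contain. Your explicit unrolling of the feature recursion into a telescoping sum is just a more detailed rendering of the expansion the paper leaves implicit.
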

The proof relies on the fact that the signature function $\mathcal{S}(\bomega)$ is constant inside a fixed cell $\Omega_u,$ and so the network reduces to a succession of affine transformations. These combine to produce a sum of multilinear forms. Appealing to properties of multilinear forms then gives two important corollaries. Multilinear forms are harmonic functions. Using the strong maximum principle for harmonic functions\footnote{The strong maximum principle states that a non-constant harmonic function cannot attain a  local minimum or a local maximum at an interior point of an open, connected set.} we show that $\Loss$ does not have differentiable optima, except for the trivial flat ones.
 \begin{corollary}[No Differentiable Extrema] \label{max_principle}
 Local minima and maxima of the loss \eqref{full_loss} occur only on the boundary set $\mathcal{N}$ or on those cells $\Omega_u$ where the loss is constant. In the latter case, $\Loss |_{\Omega_u}(\bomega) = \phi_{L+2}^u$.
\end{corollary}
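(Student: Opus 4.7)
The statement is vacuous at points of $\mathcal{N}$, so suppose $\bomega^*$ is a local extremum lying in some cell $\Omega_u$, which is an open set by Lemma~\ref{topology}. The plan is to upgrade ``local extremum inside an open cell'' to ``$\Loss$ is identically the constant $\phi_{L+2}^u$ on the whole cell'' in three stages: verify that $\Loss|_{\Omega_u}$ extends to a harmonic polynomial on $\Omega$, apply the strong maximum principle on a small ball around $\bomega^*$ to obtain local constancy, and then use polynomial rigidity to extend and identify the constant.

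For harmonicity, I would expand each multilinear form $\phi_j^u$ from Theorem~\ref{theorem1} in coordinates: multilinearity in the block variables $(\vv_1, \ldots, \vv_n)$ forces $\phi_j^u$ to be a sum of monomials $c_{i_1 \cdots i_n}\, v_{1, i_1} v_{2, i_2} \cdots v_{n, i_n}$ in which every scalar coordinate appears to degree at most one. Consequently every pure second partial derivative of $\phi_j^u$ vanishes identically, so $\Delta \phi_j^u \equiv 0$, and the constant $\phi_{L+2}^u$ is trivially harmonic. Hence $\Loss|_{\Omega_u}$ agrees on $\Omega_u$ with the harmonic polynomial $P(\bomega) := \sum_{j=0}^{L+1} \phi_j^u(\bomega) + \phi_{L+2}^u$ defined on all of $\Omega$.

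Now pick an open ball $B \subset \Omega_u$ centered at $\bomega^*$ on which $\bomega^*$ remains a global extremum of $\Loss|_B$. Since $P$ is harmonic on the connected open set $B$ and attains its extreme value at the interior point $\bomega^*$, the strong maximum principle forces $P$ to be constant on $B$. A polynomial that is constant on a nonempty open subset of $\Omega$ is the constant polynomial, so $P \equiv C$ on $\Omega$, and in particular $\Loss|_{\Omega_u} \equiv C$. Evaluating $P$ at $\bomega = \mathbf{0}$ annihilates every $\phi_j^u$ with $j \leq L+1$ because each such form then has at least one zero input vector, leaving $C = \phi_{L+2}^u$ as required. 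The step most likely to trip one up is the harmonicity check, because shared variables such as $V$ appear across several of the forms in Theorem~\ref{theorem1}; within any single form, however, each scalar coordinate still appears to degree at most one, which is exactly what makes the pure second partial derivatives vanish. The remaining ingredients---the strong maximum principle on a ball and the polynomial identity principle---are classical.
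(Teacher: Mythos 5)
Your proof is correct, and its first half coincides with the paper's: you establish harmonicity of the piecewise-multilinear loss by noting that every scalar coordinate appears to degree at most one in each form (so all pure second partials vanish, and linearity of the Laplacian handles the shared variables such as $V$), and you then invoke the strong maximum principle on a small ball to force local constancy at any interior extremum. Where you diverge is in passing from ``constant on a ball'' to ``equal to $\phi_{L+2}^u$ on the whole cell.'' The paper expands $\Loss|_{\Omega_u}(\hat\bomega + \delta \bomega)$ in powers of $\delta$, matches like powers, and runs an induction that peels off the forms one at a time --- using that $\phi_0^u$ alone depends on $\omega^{(1)}$ and $\phi_1^u$ alone on $\bb^{(1)}$ --- to conclude that \emph{each} $\phi_\ell^u$, $0 \le \ell \le L+1$, is identically the zero function. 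You instead observe that $\Loss|_{\Omega_u}$ is the restriction of a globally defined polynomial, apply the polynomial identity principle to upgrade constancy on a ball to constancy on all of $\Omega$, and identify the constant by evaluating at the origin, where every genuine multilinear form vanishes because each of its arguments does. Your route is shorter and sidesteps the delicate variable-separation step, and it proves the corollary exactly as stated (including for possibly disconnected cells, which both arguments handle). The trade-off is that you only conclude that the \emph{sum} $\sum_{j\le L+1}\phi_j^u$ vanishes identically, whereas the paper's induction gives the strictly stronger fact that each form vanishes individually --- a fact the appendix reuses later in the lemma supporting Theorem~\ref{typeI}. For the present corollary, your conclusion suffices.
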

Our second corollary reveals the saddle-like structure of the loss.
\begin{corollary}[Saddle-like Structure of the Loss] \label{saddle_like} If $\bomega \in \Omega\setminus \mathcal{N}$ and the Hessian matrix $D^2 \Loss(\bomega)$ does not vanish, then it must have at least one strictly positive and one strictly negative eigenvalue.
\end{corollary}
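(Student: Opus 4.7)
The plan is to show that the Hessian of $\Loss$ at any smooth point is traceless and use this together with symmetry to obtain the sign constraint on eigenvalues. The key fact to exploit is that each summand appearing in Theorem \ref{theorem1} is a multilinear form, and multilinear forms are harmonic functions of their joint arguments. Indeed, if $\phi(\vv_1,\ldots,\vv_n)$ is multilinear then, for any variable $\vv_k$ with the others held fixed, $\phi$ is affine in $\vv_k$; hence $\partial^2 \phi / \partial v_{k,j}^2 = 0$ for every coordinate $v_{k,j}$. Summing over all coordinates of all $\vv_k$ yields $\Delta \phi \equiv 0$. The same reasoning applies to each $\phi_j^u$ appearing in Theorem \ref{theorem1}, noting that the arguments $(\bb^{(1)},\bb^{(2)},\ldots,\cc,\omega^{(1)},\ldots,\omega^{(L)},V)$ together comprise all of the coordinates on $\Omega$, so the Laplacian with respect to the full parameter vector $\bomega$ annihilates each term.

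Once harmonicity of $\Loss|_{\Omega_u}$ is established, the first step of the proof proper is: for $\bomega \in \Omega \setminus \mathcal{N}$, let $u = \mathcal{S}(\bomega)$, so $\bomega \in \Omega_u$. By Lemma \ref{topology} the cell $\Omega_u$ is open, so $\Loss$ and $\Loss|_{\Omega_u}$ coincide in a neighborhood of $\bomega$ and hence $D^2\Loss(\bomega) = D^2(\Loss|_{\Omega_u})(\bomega)$. Since $\Loss|_{\Omega_u}$ is harmonic and equals the sum of the $\phi_j^u$ plus a constant, we obtain
\begin{equation*}
\operatorname{tr}\bigl(D^2 \Loss(\bomega)\bigr) = \Delta \Loss(\bomega) = 0.
\end{equation*}

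The final step is a piece of linear algebra. The Hessian $H := D^2 \Loss(\bomega)$ is symmetric and therefore diagonalizable over $\R$ with real eigenvalues $\lambda_1,\ldots,\lambda_p$ whose sum equals $\operatorname{tr}(H) = 0$. If $H \neq 0$ then at least one eigenvalue is nonzero, since a symmetric matrix with only zero eigenvalues would itself be zero. Pick such an eigenvalue; without loss of generality it is strictly positive. Because $\sum_i \lambda_i = 0$, at least one other eigenvalue must then be strictly negative, which is the desired conclusion.

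There is no genuine obstacle here; the only delicate point is verifying that one may take the Laplacian of the cell-wise representation supplied by Theorem \ref{theorem1} over the full parameter vector $\bomega$ and recover the Laplacian of $\Loss$ itself. This is legitimate because the cells $\Omega_u$ are open (Lemma \ref{topology}), so second derivatives of $\Loss$ at an interior smooth point agree with those of the multilinear expansion, and because the decomposition in Theorem \ref{theorem1} uses disjoint groups of coordinates as arguments of each $\phi_j^u$, which allows the Laplacians of the individual summands to be added without cross terms.
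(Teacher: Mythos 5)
Your proof is correct and follows essentially the same route as the paper: establish that the multilinear structure forces the Hessian of $\Loss$ to be traceless on the open cell containing $\bomega$, then conclude from symmetry and $\operatorname{tr}(H)=0$ that a nonzero Hessian must have eigenvalues of both signs. The extra care you take in identifying $D^2\Loss(\bomega)$ with the Hessian of the cell-wise representation and in summing the Laplacians of the individual $\phi_j^u$ is sound but does not change the argument.
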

These corollaries have implications for  various optimization algorithms. At a local minimum $D^{2}\Loss$ either vanishes (flat local minima) or does not exist (sharp local minima). Therefore local minima do not carry any second order information. Moreover, away from minima the Hessian is never positive definite and is typically indefinite. Thus an optimization algorithm using second-order (i.e. Hessian) information must pay close attention to both the indefinite and non-differentiable nature of the loss.

To investigate type I/II minima in greater depth we must there exploit the multilinear structure of $\Loss$ itself. Our first result along these lines concerns type II local minima.
\begin{theorem}\label{typeII}
 If $\bomega$ is a type II local minimum then $\Loss(\bomega)>0$.
\end{theorem}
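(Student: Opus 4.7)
My plan is to prove the contrapositive: any local minimum $\bomega$ with $\Loss(\bomega)=0$ must lie in the closure of a flat cell, and is therefore type I. Because $\Loss\geq 0$ everywhere, $\Loss(\bomega)=0$ forces every hinge term in \eqref{full_loss} to vanish, which by \eqref{eq:lossdef} amounts to $\hat y_r^{(i)}-\hat y_{r_0(i)}^{(i)}\leq -1$ for all $i$ and all $r\neq r_0(i)$ (and analogously $y^{(i)}\hat y^{(i)}\geq 1$ in the binary case \eqref{hl1}). These strict inequalities are the only structural fact I need to harness.

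The key construction is a scaling of the output layer only: define $\bomega_\epsilon$ by replacing $V$ with $(1+\epsilon)V$ and $\cc$ with $(1+\epsilon)\cc$, leaving every other parameter unchanged. Because the hidden features $\xx^{(i,\ell)}$ do not depend on $V$ or $\cc$, they remain fixed and the outputs scale linearly, $\hat\yy^{(i)}(\bomega_\epsilon)=(1+\epsilon)\hat\yy^{(i)}(\bomega)$. Each hinge argument for $r\neq r_0(i)$ becomes $1+(1+\epsilon)(\hat y_r^{(i)}-\hat y_{r_0}^{(i)})\leq -\epsilon$, which is strictly negative for every $\epsilon>0$. By continuity the same strict negativity holds on some open neighborhood $U$ of $\bomega_\epsilon$ in parameter space, and hence $\Loss\equiv 0$ on all of $U$.

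I would then use this to exhibit a flat cell. For any cell $\Omega_u$ meeting $U$, the intersection $U\cap\Omega_u$ is a nonempty open subset of $\Omega_u$ on which $\Loss|_{\Omega_u}$ vanishes; Theorem \ref{theorem1} tells us $\Loss|_{\Omega_u}$ is a polynomial, and a polynomial vanishing on a nonempty open set is identically zero, so $\Omega_u$ is flat. Thus $\bomega_\epsilon\in \overline{\Omega_u}$. Sending $\epsilon\to 0^+$ along a subsequence (only finitely many cells meet a neighborhood of $\bomega$) produces a single flat cell with $\bomega\in \overline{\Omega_u}$, so $\bomega$ is type I, completing the contrapositive. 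The one delicate point is that $\bomega_\epsilon$ may itself lie on $\N$ whenever the hidden-layer signatures $\sss^{(i,\ell)}(\bomega)$ carry zeros, because those zeros are inherited by $\bomega_\epsilon$; the polynomial-vanishing step above sidesteps this by extracting the flat cell from $U$ rather than insisting $\bomega_\epsilon$ itself sit in an open cell.
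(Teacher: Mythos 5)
Your proof is correct, and while it opens with the same move as the paper -- proving the contrapositive by scaling the output layer $(V,\cc)\mapsto(1+\epsilon)(V,\cc)$ so that every hinge argument becomes strictly negative -- it finishes by a genuinely different route. The paper insists on landing the perturbed point inside an \emph{open} cell: it additionally shifts every hidden bias $\bb^{(\ell)}\mapsto\bb^{(\ell)}+\veps^{(\ell)}$ with an inductively chosen cascade $\veps^{(L)}>\veps^{(L-1)}>\cdots>\veps^{(1)}>0$, together with Lipschitz estimates on $\Delta\xx^{(i,\ell)}$, precisely to push every hidden activation strictly off zero; flatness of the resulting cell then follows because all output signatures equal $-1$ there. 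You sidestep that entire construction: you let the scaled point sit on $\N$ if it wants to, observe that $\Loss\equiv 0$ on an open neighborhood $U$ by continuity, and conclude that any cell meeting $U$ is flat because the restricted loss is a polynomial (Theorem \ref{theorem1}) vanishing on a nonempty open set. This is shorter and cleaner, at the price of leaning on two structural facts the paper's proof does not need at this point: the polynomial identity theorem and (implicitly) Lemma \ref{topology}, which guarantees $U\not\subseteq\N$ so that at least one cell actually meets $U$ -- you should state that explicitly. Your final limiting step is also fine but deserves one more sentence: take the neighborhoods $U_{\epsilon_k}$ inside balls of radius $\to 0$ about $\bomega_{\epsilon_k}$, so that after the pigeonhole extraction of a single recurring flat cell $\Omega_u$ you can pick points $p_k\in U_{\epsilon_k}\cap\Omega_u$ with $p_k\to\bomega$, giving $\bomega\in\overline{\Omega_u}$. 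With those two routine details filled in, the argument is complete.
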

 Modern networks of the form \eqref{full_loss} typically have zero loss global minimizers.  For any such network type II (i.e. sharp) local minimizers are therefore always sub-optimal. A converse of theorem \ref{typeII} holds for a restricted class of networks. That is, type I (i.e. flat) local minimizers are always optimal. To make this precise we need a mild assumption on the data.
 \begin{definition}
Fix $\alpha > 0$ and a collection of weighted data points $(\mu^{(i)},\xx^{(i)}, \yy^{(i)})$. The weighted data are {\bf rare} if there exist $N$ coeffecients $\lambda^{(i)} \in \{1,\alpha,\ldots,\alpha^{L}\}$ and a non-zero collection of $NR$ scalars $\veps^{(i,r)}\in \{0,1\}$ so that the system
\begin{align*}
\veps^{(i)} &= \sum_{ r : \yy^{(i)}_r = 0} \veps^{(i,r)} \\
\sum_{ i:\yy^{(i)}_{r} = 1 } \lambda^{(i)} \mu^{(i)}\veps^{(i)} \xx^{(i)} &= \sum_{ i:\yy^{(i)}_{r} = 0 } \lambda^{(i)} \mu^{(i)} \veps^{(i,r)}\xx^{(i)}
\end{align*}
\begin{align}\label{eq:genericdef}
\sum_{i: \yy^{(i)}_{r} = 1 } \lambda^{(i)} \mu^{(i)}\veps^{(i)} &= \sum_{i: \yy^{(i)}_{r} = 0 }\lambda^{(i)} \mu^{(i)}\veps^{(i,r)}
\end{align}
holds $\forall r \in [R]$. The data are {\bf generic} if they are not rare.
\end{definition}
As the possible choices of $\lambda^{(i)},\veps^{(i,r)}$ take on at most a finite set of values, rare data points $(\mu^{(i)},\xx^{(i)},\yy^{(i)})$ must satisfy one of a given finite set of linear combinations. Thus \eqref{eq:genericdef} represents the exceptional case,  and most data are generic. For example, if the $\xx^{(i)} \sim X^{(i)}$ come from indepenendent samples of atomless random variables $X^{(i)}$ they are generic with probability one. Similarly, a small perturbation in the weights $\mu^{(i)}$ will usually transform data from rare to generic.
\begin{theorem} \label{typeI}
Consider the loss  \eqref{full_loss} for a fully connected network. Assume that $\alpha>0$ and that the data points $(\xx^{(i)},\yy^{(i)})$ are generic. Then $\Loss(\bomega)=0$ at any type I local minimum.
\end{theorem}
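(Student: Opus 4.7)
The plan is to convert the ``loss constant on a flat cell'' hypothesis into an algebraic system on the sign pattern that is precisely the rare-data system \eqref{eq:genericdef}, and then invoke genericity to rule out a nonzero constant.

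First I reduce to a statement about multilinear forms. A type I local minimum $\bomega^*$ lies in a non-empty cell $\Omega_u$ (or on its boundary) on which $\Loss \equiv C$, and by continuity $\Loss(\bomega^*) = C$; the goal is to show $C = 0$. By Theorem \ref{theorem1}, $\Loss|_{\Omega_u} = \sum_{k=0}^{L+1} \phi_k^u + \phi_{L+2}^u$, and each of the parameters $\omega^{(1)}, \bb^{(1)}, \ldots, \bb^{(L)}, \cc$ appears in exactly one summand---namely $\phi_0^u, \phi_1^u, \ldots, \phi_L^u, \phi_{L+1}^u$ respectively. Varying one of these ``isolating'' parameters on the open set $\Omega_u$, using linearity of $\phi_k^u$ in that parameter together with the fact that a multilinear form vanishing on a non-empty open set vanishes identically, forces each $\phi_k^u \equiv 0$ as a multilinear form on the whole of parameter space.

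Second, I identify the constant $\phi_{L+2}^u$. On the cell, each top-level hinge nonlinearity collapses to multiplication by a fixed $\veps^{(i,r)} \in \{0,1\}$ encoded by $u$, so the restricted loss reads $\sum_i \mu^{(i)} \sum_{r \neq r_0(i)} \veps^{(i,r)}\bigl(1 + \hat y^{(i)}_r(\bomega) - \hat y^{(i)}_{r_0(i)}(\bomega)\bigr)$, where $r_0(i)$ is the target class. Because each $\hat y^{(i)}_r(\bomega)$ has no constant term in its multilinear expansion, the only contribution to $\phi_{L+2}^u$ comes from the ``$+1$'' in each summand, giving $\phi_{L+2}^u = \sum_i \mu^{(i)} \veps^{(i)}$ with $\veps^{(i)} = \sum_{r: \yy^{(i)}_r = 0} \veps^{(i,r)}$, exactly as in \eqref{eq:genericdef}.

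Third, and most crucially, I extract the vector and scalar equations of \eqref{eq:genericdef} for a \emph{common} multiplier by evaluating $\phi_0^u \equiv 0$ and $\phi_1^u \equiv 0$ on a ``path-concentrated'' test configuration. Fix any sequence of hidden neurons $(j_1,\ldots,j_L)$. Set $W^{(\ell)}$ to have a single $1$ at position $(j_\ell, j_{\ell-1})$ for $\ell \geq 2$, let $V_{r,j_L}$ be the only nonzero entry of $V$, and put $\cc = \0$; then test $\phi_0^u$ by placing a single $1$ at position $(j_1,j_0)$ in $W^{(1)}$ and test $\phi_1^u$ by placing a single $1$ at position $j_1$ in $\bb^{(1)}$. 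In both tests the signal propagated to the output picks up the common scalar $\Lambda^{(i)} := \prod_{\ell=1}^L \lambda^{(i)}_{j_\ell,\ell} \in \{1,\alpha,\alpha^2,\ldots,\alpha^L\}$, where $\lambda^{(i)}_{j,\ell}\in\{1,\alpha\}$ records whether neuron $j$ of layer $\ell$ is active for datum $i$. Separating $\phi_0^u \equiv 0$ by the components of $\xx^{(i)}$ produces the vector equation of \eqref{eq:genericdef}, while $\phi_1^u \equiv 0$ produces the scalar equation, both with the same $\lambda^{(i)} = \Lambda^{(i)}$.

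Finally, if $C > 0$ then $\sum_i \mu^{(i)} \veps^{(i)} > 0$ forces some $\veps^{(i,r)}$ to be nonzero, and the previous step exhibits admissible $\lambda^{(i)} \in \{1,\alpha,\ldots,\alpha^L\}$ together with a nonzero collection $\{\veps^{(i,r)}\}$ satisfying \eqref{eq:genericdef}, making the data rare and contradicting genericity. Hence $C = 0$. The hard part is the third step: one must track the path test carefully to check that the same hidden-layer path yields matching $\Lambda^{(i)}$ in the vector and scalar equations and that the product lies in $\{1,\alpha,\ldots,\alpha^L\}$; this is also where $\alpha > 0$ enters essentially, since for $\alpha = 0$ the products $\Lambda^{(i)}$ can collapse to zero and the rare system no longer captures the flat-cell constraint.
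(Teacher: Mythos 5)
Your proposal is correct and follows essentially the same route as the paper's proof: reduce the flat-cell hypothesis to the identical vanishing of the multilinear forms $\phi_0^u,\ldots,\phi_{L+1}^u$, extract the per-path coefficient identities (your elementary-matrix ``path-concentrated'' test configurations are just a direct version of the paper's iterated diagonal-matrix lemma), match them to the rare-data system \eqref{eq:genericdef} with a common $\lambda^{(i)} \in \{1,\alpha,\ldots,\alpha^L\}$, and invoke genericity together with $\alpha>0$ to force every $\veps^{(i,r)}$ to vanish. The identification of the constant term $\phi_{L+2}^u = \sum_i \mu^{(i)}\veps^{(i)}$ and the final contradiction also match the paper's argument.
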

For most data we may pair this result with its counterpart for fully connected networks and obtain a clear picture. Desirable (zero loss) minima are always flat, while undesireable (positive loss) minima are always sharp. Analyzing sub-optimal minima therefore requires handling the non-smooth case, and we now turn to this task.

\section{Critical Point Analysis}
In this section we use machinery from non-smooth analysis (see  chapter 6 of \cite{borwein2010convex} for a good reference) to study critical points of the loss surface of such piecewise multilinear networks. We consider three scenarios by traveling from the deep linear case ($\alpha = 1$) and passing through the leaky ReLU case ($0<\alpha<1$) before arriving at the most common case ($\alpha =0$) of ReLU networks. We intend this journey to highlight how the loss surface changes as the level of nonlinearity increases. A deep linear network has a trivial loss surface, in that local and global minima coincide (see theorem 100 in the appendix for a precise statement and its proof). If we impose further assumptions, namely linearly separable data in a one-hidden layer network, this benign structure persists into the leaky ReLU regime. When we arrive at $\alpha = 0$ a bifurcation occurs, and sub-optimal local minima suddenly appear in classical ReLU networks.

To begin, we recall that for a Lipschitz but non-differentiable function $f(\bomega)$ the \emph{Clarke subdifferential} $\partial_{0} f( \bomega)$ of $f$ at a point $\bomega \in \Omega$ provides a generalization of both the gradient $\nabla f(\bomega)$ and the usual subdifferential $\partial f(\bomega)$ of a convex function.
The Clarke subdifferential is defined as follow (c.f. page 133 of \cite{borwein2010convex}):

\begin{definition}[Clarke Subdifferential and Critical Points] 
\label{clarke}
Assume that a function $f : \Omega \mapsto \R$ is locally Lipschitz around $\bomega \in \Omega,$ and differentiable on $\Omega  \setminus \mathcal{M}$  where  $\mathcal{M}$ is a set of Lebesgue measure zero. Then the convex hull
$$
\partial_{0} f(\bomega) := \mathrm{c.h.}\left\{ \lim_{k} \nabla f(\bomega_k) : \bomega_{k} \to \bomega, \bomega_{k} \notin \mathcal{M} \right\}
$$
is the \textbf{Clarke Subdifferential} of $f$ at $\bomega$. In addition, if
\begin{equation}
\0 \in \partial_{0} f(\bomega), \label{crit}
\end{equation}
then $\bomega$ is  a \textbf{critical point} of $f$ in the Clarke sense.
\end{definition}

The definition of critical point is a consistent one, in that \eqref{crit} must hold whenever $\bomega$ is a local minimum (c.f. page 125 of \cite{borwein2010convex}). Thus the set of all critical points contains the set of all local minima. Figure \ref{clarke} provides an illustration of the Clarke Subdifferential. It depicts a  function $f: \real^2 \mapsto \real$ with global minimum at the origin, which therefore defines a critical point in the Clarke sense. While the gradient of $f(\x)$ itself does not exist at $\0$, its restrictions $f_{k} := f|_{\Omega_k}$ to the four cells $\Omega_k$ neighboring $\0$ have well-defined gradients $\nabla f_{k}(\0)$ (shown in red) at the critical point. By definition the Clarke subdifferential $\partial_{0}f (\0)$ of $f$ at $\0$ consists of all convex combinations
$$
\theta_{1} \nabla f_{1}(\0) + \theta_{2} \nabla f_{2}(\0) + \theta_{3} \nabla f_{3}(\0) + \theta_{4} \nabla f_{4}(\0)
$$
of these gradients; that some such combination vanishes (say, $\frac1{2}\nabla f_{1}(\0) + \frac1{2}\nabla f_{3}(\0) = \0$) means that $\0$ satisfies the definition of a critical point. Moreover, an element of the subdifferential $\partial_{0}f$ naturally arises from gradient descent. A gradient-based optimization path $\x^{(j+1)} = \x^{(j)} - dt^{(j)}\nabla f(\x^{(j)})$ (shown in blue) asymptotically builds, by successive accumulation at each step, a convex combination of the $\nabla f_{k}$ whose corresponding weights $\theta_k$ represent the fraction of time the optimization spends in each cell.
\begin{figure}[t]
\begin{center}
\includegraphics[width=1.3in]{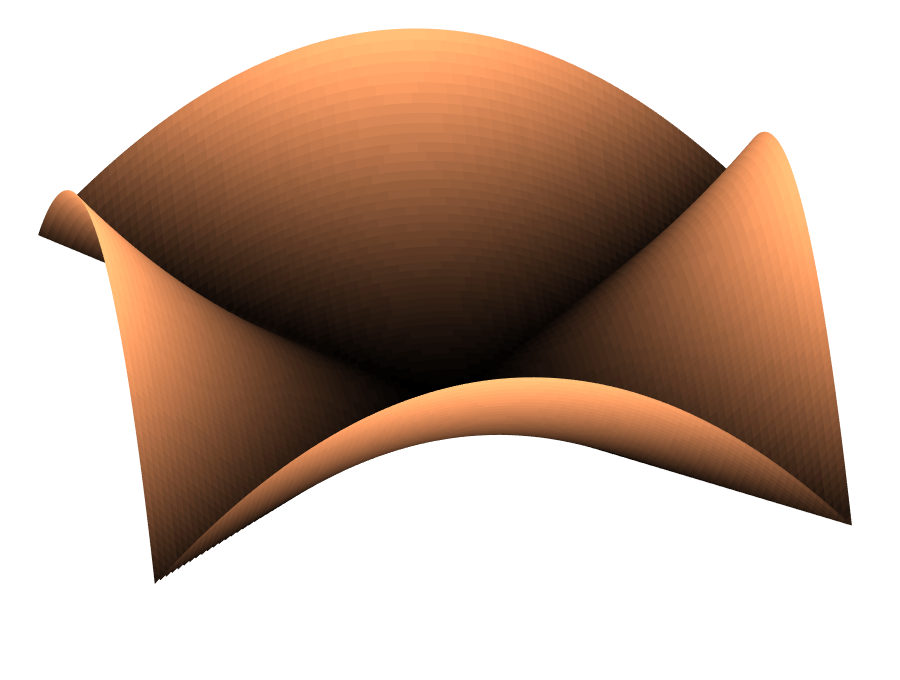}\quad\includegraphics[width=1.4in]{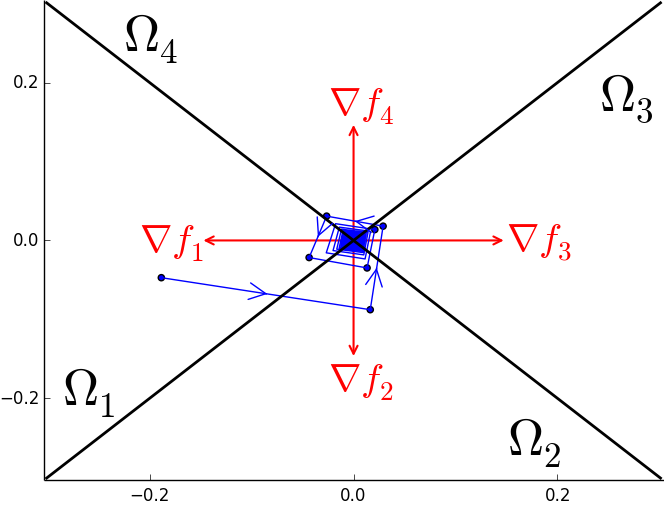}
\caption{Illustration of the Clarke Subdifferential}\label{clarke}
\end{center}
\end{figure}

We may now show how to apply these tools in the study of ReLU Networks. We first analyze the leaky regime $(0 < \alpha < 1)$ and then analyze the ordinary ReLU case $(\alpha = 0).$

\textbf{Leaky Networks ($0 < \alpha < 1$):} Take $0 < \alpha < 1$ and consider the corresponding loss $\Loss(W,\vv,\bb,c)=$
\begin{equation} \label{lll}
\sum \mu^{(i)}  \; \sigma \left[ \; 1 -y^{(i)} \left\{ \vv^T  \sigma_\alpha ( W  \xx^{(i)}+\bb)  + c     \right\}    \right]
\end{equation}
associated to a fully connected network with one hidden layer. We shall also assume the data $\{\x^{(i)}\}$ are linearly separable. In this setting we have
 \begin{theorem}[Leaky ReLU Networks] \label{crit_point_leaky_relu}
Consider the loss \eqref{lll} with  $\alpha>0$ and data $\xx^{(i)}, i \in [N] $ that are linearly separable. Assume that $\bomega=(W,\vv,\bb,c)$ is any critical point of the loss in the Clarke sense. Then either $\vv = \0$ or $\bomega$ is a global minimum.
\end{theorem}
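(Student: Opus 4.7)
The plan is to translate $\0 \in \partial_{0}\Loss(\bomega)$ into scalar equations for the partial derivatives in $c$, $\bb$ and $W$, and combine them using linear separability of the data together with the bound $\sigma_{\alpha}' \geq \alpha > 0$. Throughout I write $f(\xx) = \vv^T\sigma_\alpha(W\xx+\bb) + c$ and $\qq^{(i)} = W\xx^{(i)} + \bb$.

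\emph{Extracting the subgradient equations.} First I would unpack $\partial_{0}\Loss(\bomega)$ as a convex hull of limits of ordinary gradients computed at nearby smooth parameters (where every $q^{(i)}_k$ and $h_i := 1 - y^{(i)}f(\xx^{(i)})$ is nonzero) and push the limit through the chain rule applied to each summand $\ell_i = \sigma(h_i)$. This produces nonnegative numbers $\tau^{(i)} \in [0,1]$ (convex combinations of $\sigma'(h_i)$ over adjacent cells) and coupled selections $\beta^{(i)}_k \geq 0$ (convex combinations of the products $\sigma'(h_i)\,\sigma_{\alpha}'(q^{(i)}_k)$) such that
\begin{equation*}
\sum_i \mu^{(i)} y^{(i)} \tau^{(i)} = 0, \quad v_k \sum_i \mu^{(i)} y^{(i)} \beta^{(i)}_k = 0, \quad v_k \sum_i \mu^{(i)} y^{(i)} \beta^{(i)}_k\,\xx^{(i)} = \0
\end{equation*}
for every $k$. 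Since $\sigma_{\alpha}' \geq \alpha$ on every cell, one also has the inequality $\beta^{(i)}_k \geq \alpha\,\tau^{(i)}$, and by continuity of $\sigma'$ away from zero, $\tau^{(i)} = 1$ whenever $h_i > 0$.

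\emph{Exploiting $\vv \neq \0$ and separability.} Since $\vv \neq \0$, pick $k^*$ with $v_{k^*} \neq 0$ and divide the $b_{k^*}$ and $W_{k^*,\cdot}$ identities by $v_{k^*}$. Let $(\ww^*, b^*)$ be any linear separator, so that $\rho^{(i)} := y^{(i)}(\ww^{*T}\xx^{(i)} + b^*) > 0$ for every $i$. Taking the inner product of the vector identity against $\ww^*$ and adding $b^*$ times the scalar identity gives
\begin{equation*}
0 = \sum_i \mu^{(i)}\, \beta^{(i)}_{k^*}\, \rho^{(i)}.
\end{equation*}
Every factor on the right is nonnegative, so each term must vanish and $\beta^{(i)}_{k^*} = 0$ for all $i$. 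Combined with $\beta^{(i)}_{k^*} \geq \alpha\,\tau^{(i)}$ and $\alpha > 0$, this forces $\tau^{(i)} = 0$ for every $i$, which is incompatible with $\tau^{(i)} = 1$. Hence no index $i$ satisfies $h_i > 0$, so $\ell_i(\bomega) = 0$ for all $i$ and $\Loss(\bomega) = 0$, making $\bomega$ a global minimum.

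\emph{Main obstacle.} The principal technical difficulty is the rigorous bookkeeping behind the subgradient extraction: the inner function of each $\sigma \circ h_i$ is itself non-smooth through $\sigma_{\alpha}$, so a naive Clarke chain rule would yield coupled rather than independent selections for the hinge and the ReLU slopes. Returning to the definition of $\partial_{0}\Loss$ via limits of smooth gradients is the cleanest way to obtain the crucial inequality $\beta^{(i)}_k \geq \alpha\,\tau^{(i)}$. This bound is the only place where the strict positivity of $\alpha$ enters, and its failure at $\alpha = 0$ is precisely what allows the sub-optimal ``dead data point'' minima to appear in the classical ReLU regime discussed in the paper.
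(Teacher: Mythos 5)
Your proposal is correct and follows essentially the same route as the paper: both characterize the Clarke subdifferential as a single convex combination (shared weights $\theta^{(u)}$) of the cell-wise gradients, extract the stationarity identities coming from the $\ww_k$ and $b_k$ derivatives for a $k$ with $v_k \neq 0$, and use linear separability together with $\alpha > 0$ to force every error indicator to vanish, whence $\Loss(\bomega)=0$ by continuity. The only cosmetic differences are that you test the identities directly against a separating hyperplane with positive margin (the paper instead derives a contradiction from two equal convex combinations of points from opposite classes, though it uses your pairing trick in its estimate \eqref{estimate}), and that you package the final step as the inequality $\beta^{(i)}_k \geq \alpha\,\tau^{(i)}$ rather than invoking the strict positivity of each slope $\lambda_k^{(i,u)}$; these are equivalent.
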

The loss in this scenario has two type of critical points. Critical points with $\vv=\0$ correspond to a trivial network in which all data points are mapped to a constant; all other critical points are global minima. If we further assume equally weighted classes
$$
\sum_{ i : y^{(i)} = 1 } \mu^{(i)} = \sum_{ i : y^{(i)} = -1 } \mu^{(i)}
$$
then all local minima are global minima ---
\begin{theorem}[Leaky ReLU Networks with Equal Weight]\label{local_min_leaky}
Consider the loss \eqref{lll} with  $\alpha>0$ and data $\xx^{(i)}, i \in [N]$ that are linearly separable. Assume that the $\mu^{(i)}$ weight both classes equally. Then every local minimum of $\Loss(\bomega)$ is a global minimum.
\end{theorem}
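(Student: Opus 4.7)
The plan is to combine Theorem~\ref{crit_point_leaky_relu} with an explicit descent construction. Every local minimum is Clarke-critical, so Theorem~\ref{crit_point_leaky_relu} immediately reduces the problem to showing that no critical point of the form $\bomega_0 = (W_0, \0, \bb_0, c_0)$ (i.e., with $\vv = \0$) is a local minimum. Since linear separability lets one construct a parameter choice with $\Loss = 0$, the global minimum value is $0$; meanwhile the equal-weight hypothesis collapses the loss at any such $\bomega_0$ to $\tfrac{1}{2}\sigma(1 - c_0) + \tfrac{1}{2}\sigma(1 + c_0) \ge 1$, so these critical points are never global minima. It thus remains only to exhibit a descent direction at every $\bomega_0$ of this form.

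I would split into cases on $c_0$. If $|c_0| > 1$, perturbing $c$ toward $[-1, 1]$ strictly decreases the loss. If $c_0 = \pm 1$, the loss is constant in $c$ along $[-1, 1]$ when $\vv = \0$, so a tiny shift of $c$ into $(-1, 1)$ reduces to the case $|c_0| < 1$ at the same loss value. In that remaining case the loss equals $1$ exactly, and I would consider the two-parameter family $\bomega(t, \eta) := (W_0 + tU, \, \eta \vv_0, \, \bb_0 + t\gamma, \, c_0)$. For small $t, \eta > 0$ each hinge argument $1 - y^{(i)}\hat y^{(i)}$ remains strictly positive (because $|c_0| < 1$), so the expansion of $\Loss$ is linear in the hinge and gives
\[
\Loss(\bomega(t, \eta)) = 1 - \eta\, \vv_0^T \qq - \eta t\, \vv_0^T \mathbf{r}(U, \gamma) + \eta \cdot o(t),
\]
where $\qq := \sum_i \mu^{(i)} y^{(i)} \sigma_\alpha(W_0 \xx^{(i)} + \bb_0)$ and $\mathbf{r}(U, \gamma)$ gathers the right-derivatives of the hidden-layer activations weighted by $\mu^{(i)} y^{(i)}$. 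If $\qq \ne \0$, taking $t = 0$ and $\vv_0 = \qq$ already yields $\Loss(\bomega(0, \eta)) = 1 - \eta |\qq|^2 < 1$.

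The main obstacle is the degenerate configuration $\qq = \0$, which forces one to couple the $(U, \gamma)$ and $\vv_0$ perturbations while exploiting linear separability. Concretely, I would fix $(\aaa, b)$ with $y^{(i)}(\aaa^T \xx^{(i)} + b) > 0$ for all $i$, and set $U = \ones \aaa^T$, $\gamma = b\, \ones$, so that $(U\xx^{(i)} + \gamma)_k = \aaa^T \xx^{(i)} + b$ for every hidden unit $k$. A coordinate-wise analysis then shows that every summand of $\mathbf{r}_k$ equals either $\mu^{(i)} \sigma_\alpha'((W_0\xx^{(i)} + \bb_0)_k)\, y^{(i)}(\aaa^T \xx^{(i)} + b)$ at non-kink pre-activations, or $\mu^{(i)} y^{(i)} \sigma_\alpha(\aaa^T \xx^{(i)} + b)$ at kinks (via the homogeneity $\sigma_\alpha(t\delta) = t \sigma_\alpha(\delta)$ for $t > 0$). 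Both forms are strictly positive, because $\sigma_\alpha' \ge \alpha > 0$ and because $\sigma_\alpha$ preserves the sign of its argument. Hence $\mathbf{r} \ne \0$ unconditionally on $(W_0, \bb_0)$, and choosing $\vv_0 = \mathbf{r}$ with $t = \eta \to 0^+$ delivers $\Loss(\bomega(t, \eta)) < 1$. This sign analysis is exactly where the hypothesis $\alpha > 0$ is indispensable; it is the only genuinely delicate step, the boundary $|c_0| \ge 1$ reductions and the kink technicalities requiring only routine care.
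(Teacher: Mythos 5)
Your argument is correct and is essentially the paper's own proof in lightly reorganized form: the reduction to $\vv=\0$ (which the paper re-derives with its first perturbation lemma rather than citing Theorem~\ref{crit_point_leaky_relu}, but with identical content), the equal-weight computation showing the loss equals $1$ for $c\in[-1,1]$ and strictly decreases toward that interval otherwise, the coupled $(W,\bb,\vv)$ perturbation along the separating hyperplane whose first- and second-order terms (your $\qq$ and $\mathbf{r}$) are exactly the quantities appearing in the paper's perturbation lemmas \ref{perturbation2} and \ref{perturbation3} and its margin estimate \eqref{estimate}, and the treatment of $c_0=\pm1$ by exhibiting arbitrarily close non-minima of equal loss, which is the paper's Step 2 verbatim. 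The only substantive difference is bookkeeping: you control the kinks of $\sigma_\alpha$ via one-sided derivatives and positive homogeneity, whereas the paper shows the perturbation path remains in the closure of a single cell (lemma \ref{piecewise_polynomial}) so that the multilinear decomposition applies on the whole path.
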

In other words, the loss surface is trivial when $0 < \alpha \leq 1$.

\textbf{ReLU Networks ($\alpha=0$):} This is the case of paramount interest. When passing from $\alpha > 0$ to $\alpha = 0$ a structural bifurcation occurs in the loss surface --- ReLU nonlinearities generate non-optimal local minima even in a one hidden layer network with separable data. Our analysis provides an explicit description of all the critical points of such loss surfaces, which allows us to precisely understand the way in which sub-optimality occurs.

In order to describe this structure let us briefly assume that we have a simplified model with two hidden neurons,  no output bias and uniform weights. If $\ww_k$ denotes the $k^{{\rm th}}$ row of  $W$ then we have the loss
\begin{align} \label{bbb}
\Loss(W,\vv,\bb)&=\frac{1}{N}\sum    \sigma ( 1 -y^{(i)}  \hat y^{(i)}   ) , \quad \text{where} \nonumber \\ 
 \hat y^{(i)} &= \sum_{k=1}^2 v_k \sigma \Big( \langle \ww_k, \xx^{(i)} \rangle + b_k \Big) 
\end{align}
for such a network. Each hidden neuron has an associated hyperplane $\langle \ww_k, \cdot \rangle + b_k$ as well as a scalar weight $v_k$ used to form the output. Figure \ref{local_min_pic} shows three different local minima of  such a network. The first panel, figure \ref{local_min_pic}(a), shows a global minimum where all the data points have zero loss. Figure \ref{local_min_pic}(b) shows a sub-optimal local minimum. All unsolved data points, namely those that contribute a non-zero value to the loss, lie on the ``blind side" of the two hyperplanes. For each of these data points the corresponding network output $\hat y^{(i)}$ vanishes and so the loss is $\sigma ( \; 1 -y^{(i)}  \hat y^{(i)}   )=1$ for these unsolved points. Small perturbations of  the hyperplanes or of the values of the $v_k$ do not change the fact that these data points lie on the blind side of the two hyperplanes. Their loss will not decrease under small perturbations, and so the configuration is, in fact, a local minimum. The same reasoning shows that the configuration in figure \ref{local_min_pic}(c), in which no data point is classified correctly, is also a local minimum.


  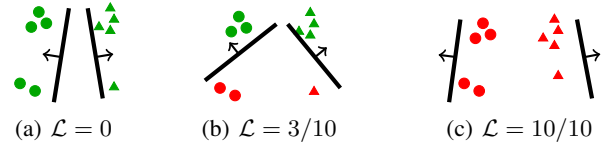
\begin{figure}[t]
\begin{center}
\subfigure[ $\Loss=0$]{
\begin{tikzpicture}[scale=0.5]
\node at (-0.5,-0.2) [black!35!green] {\pgfuseplotmark{*}};
\node at (-0.1,-0.4) [black!35!green] {\pgfuseplotmark{*}};

\node at (-0.2,1.3) [black!35!green] {\pgfuseplotmark{*}};
\node at (0.2,1.4) [black!35!green] {\pgfuseplotmark{*}};
\node at (0,1.7) [black!35!green] {\pgfuseplotmark{*}};

\node at (1.6,1.3) [black!35!green] {\pgfuseplotmark{triangle*}};
\node at (2,1.5) [black!35!green] {\pgfuseplotmark{triangle*}};
\node at (1.9,1.2) [black!35!green] {\pgfuseplotmark{triangle*}};
\node at (1.9,1.8) [black!35!green] {\pgfuseplotmark{triangle*}};

\node at (2,-0.3) [black!35!green] {\pgfuseplotmark{triangle*}};

\draw[ultra thick, -] (0.4,-0.7) -- (0.8 , 1.8); 
\draw[thick, ->] (0.6,0.5) -- (0.6-2.5/5,0.5+0.4/5);

\draw[ultra thick, -] (1.7,-0.6) -- (1.3 , 1.8); 
\draw[thick, ->] (1.5,0.5) -- (1.5+2.4/5,0.5+0.4/5);

\end{tikzpicture}} \hspace{0.8cm}
\subfigure[$\Loss=3/10$]{
\begin{tikzpicture}[scale=0.5]
\node at (-0.5,-0.2) [red] {\pgfuseplotmark{*}};
\node at (-0.1,-0.4) [red] {\pgfuseplotmark{*}};

\node at (-0.2,1.3) [black!35!green] {\pgfuseplotmark{*}};
\node at (0.2,1.4) [black!35!green] {\pgfuseplotmark{*}};
\node at (0,1.7) [black!35!green] {\pgfuseplotmark{*}};

\node at (1.6,1.3) [black!35!green] {\pgfuseplotmark{triangle*}};
\node at (2,1.5) [black!35!green] {\pgfuseplotmark{triangle*}};
\node at (1.9,1.2) [black!35!green] {\pgfuseplotmark{triangle*}};
\node at (1.9,1.8) [black!35!green] {\pgfuseplotmark{triangle*}};

\node at (2,-0.3) [red] {\pgfuseplotmark{triangle*}};

\draw[thick, ->] (0,0.7) -- (0-1.5/6,0.7+1.9/6);
\draw[ultra thick, -] (-0.9,0) -- (1 , 1.5);

\draw[ultra thick, -] (1.3,1.5) -- (2.7 , -0.2); 
\draw[thick, ->] (2,0.6) -- (2+1.7/5,0.6+1.4/5);

\end{tikzpicture}} \hspace{1cm}
\subfigure[$\Loss=10/10$]{
\begin{tikzpicture}[scale=0.5]
\node at (-0.5,-0.2) [red] {\pgfuseplotmark{*}};
\node at (-0.1,-0.4) [red] {\pgfuseplotmark{*}};

\node at (-0.2,1) [red] {\pgfuseplotmark{*}};
\node at (0.2,1.1) [red] {\pgfuseplotmark{*}};
\node at (0,1.4) [red] {\pgfuseplotmark{*}};

\node at (1.6,1) [red] {\pgfuseplotmark{triangle*}};
\node at (2,1.2) [red] {\pgfuseplotmark{triangle*}};
\node at (1.9,0.8) [red] {\pgfuseplotmark{triangle*}};
\node at (1.9,1.5) [red] {\pgfuseplotmark{triangle*}};

\node at (2,0) [red] {\pgfuseplotmark{triangle*}};

\draw[ultra thick, -] (0.4-1.3,-0.7) -- (0.7-1.3 , 1.5); 
\draw[ultra thick, -] (1.6+1.3,-0.6) -- (1.2+1.3 , 1.5); 

\draw[thick, ->] (0.6-1.3,0.5) -- (0.6-1.3-2.4/5,0.5+0.3/5);
\draw[thick, ->] (1.4+1.3,0.5) -- (1.4+1.3+2.1/5,0.5+0.4/5);

\end{tikzpicture} }

\end{center}
\vspace{-0.5cm}
\caption{Three different local minima of the loss $\Loss(\bomega)$ for a network with two hidden neurons and standard ReLU nonlinearities. Points belonging to class +1 (resp. -1) are denoted by  circles (resp. triangles). Data points for which the loss is zero (solved points) are colored in green, while data points with non-zero loss (unsolved points) are in red. The unsolved data points always lie on the blind side of both hyperplanes.}
 \label{local_min_pic}
\end{figure}

Despite the presence of sub-optimal local minimizers, the local minima depicted in figure \ref{local_min_pic} are somehow trivial cases. They simply come from the fact that, due to inactive ReLUs, some data points are completely  ignored by the network, and this fact cannot be changed by small perturbations. The next theorem essentially shows that these are the only possible sub-optimal local minima that occur. Moreover, the result holds for the case \eqref{lll} of interest and not just the simplified model.
\begin{theorem}[ReLU networks]  \label{crit_point_plain}
Consider the loss \eqref{lll} with  $\alpha=0$ and data $\xx^{(i)}, i \in [N] $ that are linearly separable. Assume that $\bomega=(W,\vv,\bb,c)$ is a critical point in the Clarke sense, and that $\xx^{(i)}$ is any data point that contributes a nonzero value to the loss. Then for each hidden neuron $k \in [K]$ either
$$
\text{(i)} \; \langle \ww_k, \xx^{(i)} \rangle + b_k \le 0,\quad \text{or}\quad \text{(ii)}\;v_k=0.
$$
\end{theorem}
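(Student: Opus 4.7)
The plan is to argue by contradiction. Suppose there exist a data point $\xx^{(i_0)}$ contributing nonzero loss \emph{and} a hidden neuron $k_0$ with $v_{k_0}\neq 0$ \emph{and} $\langle \ww_{k_0}, \xx^{(i_0)}\rangle + b_{k_0} > 0$, violating the conclusion. I will derive a contradiction by examining the Clarke subdifferential of $\Loss$ projected onto the $(\ww_{k_0}, b_{k_0})$ block and then pairing the resulting identities with a linear separator for the data.

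First I would unpack the subdifferential. Writing $h^{(i)}(\bomega) := 1 - y^{(i)}\hat y^{(i)}(\bomega)$, each summand of \eqref{lll} is $\mu^{(i)}\sigma(h^{(i)})$, and the Clarke chain rule applied to these non-smooth compositions shows that every element of the $(\ww_{k_0}, b_{k_0})$ block of $\partial_0\Loss(\bomega)$ has the form
\[
v_{k_0}\sum_i \bigl(-\mu^{(i)}y^{(i)}\gamma^{(i)}\bigr)\bigl(\xx^{(i)},\, 1\bigr),
\]
where each scalar $\gamma^{(i)} \in [0,1]$ is a joint weight coming from the outer hinge slope $\sigma'(h^{(i)})$ and the inner ReLU slope $\sigma_0'(\langle \ww_{k_0}, \xx^{(i)}\rangle + b_{k_0})$. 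Two rigid constraints hold: $\gamma^{(i)} = 0$ whenever either $h^{(i)} < 0$ or $\langle \ww_{k_0}, \xx^{(i)}\rangle + b_{k_0} < 0$ strictly, and $\gamma^{(i)} = 1$ whenever both of these quantities are strictly positive. The assumption on $(i_0, k_0)$ therefore forces $\gamma^{(i_0)} = 1$.

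Second, from $\0 \in \partial_0 \Loss(\bomega)$ and $v_{k_0}\neq 0$, I factor out $v_{k_0}$ to obtain the pair of critical-point identities
\[
\sum_{i} \mu^{(i)} y^{(i)} \gamma^{(i)} \xx^{(i)} = \0, \qquad \sum_{i} \mu^{(i)} y^{(i)} \gamma^{(i)} = 0,
\]
for some admissible choice of $\gamma^{(i)}\in[0,1]$ with $\gamma^{(i_0)}=1$. Third, I invoke linear separability. Choose $(\ww^\star, b^\star)$ so that $\rho^{(i)} := y^{(i)}(\langle \ww^\star, \xx^{(i)}\rangle + b^\star) > 0$ for every $i$. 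Taking the inner product of the first identity with $\ww^\star$ and adding $b^\star$ times the second collapses both into
\[
0 \;=\; \sum_i \mu^{(i)} \gamma^{(i)} \rho^{(i)}.
\]
Every summand is non-negative, so each must vanish; but the $i_0$ summand equals $\mu^{(i_0)}\cdot 1\cdot \rho^{(i_0)} > 0$, yielding the desired contradiction.

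The main obstacle I anticipate is the clean derivation of the coefficients $\gamma^{(i)}$. The Clarke chain rule applied to the non-smooth composition $\sigma \circ h^{(i)}$ only produces an inclusion, and the $\gamma^{(i)}$ are not fully independent across $i$ but rather emerge from a joint convex combination over the cells neighboring $\bomega$. Nevertheless we only need \emph{some} representation of $\0$ as an element of the subdifferential, and the forced value $\gamma^{(i_0)}=1$ depends solely on strict sign conditions that persist in every neighboring cell. Once the coefficients are secured, the linear separator finishes the proof in a single line.
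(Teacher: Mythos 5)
Your proposal is correct and takes essentially the same route as the paper: both express $\0 \in \partial_0\Loss(\bomega)$ as a convex combination of the per-cell gradients in the $(\ww_k,b_k)$ block, observe that the strict sign conditions at an unsolved, activated point force the corresponding coefficient to equal $1$ on every neighboring cell, and then use linear separability to show all coefficients must vanish. The only cosmetic difference is that you pair the resulting identities directly with a strict separator to get a sum of non-negative terms with one strictly positive term, whereas the paper normalizes the two class sums into equal convex combinations of the two classes; these are equivalent uses of separability.
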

If $v_k=0$ then the $k^{{\rm th}}$ hidden neuron is unused when forming network predictions. In this case we may say the $k^{{\rm th}}$  hyperplane is {\it inactive}, while if $v_k \neq 0$ the corresponding hyperplane is \emph{active}. Theorem \ref{crit_point_plain} therefore states that {\bf if a data point $\xx^{(i)}$ is unsolved it must lie on the blind side of every active hyperplane.} So all critical points, including local minima, obey the property sketched in  figure \ref{local_min_pic}.

When taken together, theorems \ref{local_min_leaky} and \ref{crit_point_plain} provide  rigorous mathematical ground for the common view that dead or inactive neurons can cause difficulties in optimizing neural networks, and that using leaky ReLU networks can overcome these difficulties. The former have sub-optimal local minimizers exactly when a data point does not activate any of the ReLUs in the hidden layer, but this situation never occurs with leaky ReLUs and so neither do sub-optima minima.

%
%

%
%
%
%
%

\section{Exact Penalties and Multi-Class Structure}

These results give a clear illustration of how nonlinearity and data complexity combine to produce local minimizers in the loss surface for binary classification tasks. While we might try to analyze multi-class tasks by following down the same path, such an effort would unfotunately bring us to a quite different destination. Specifically, the conclusion of theorem \ref{crit_point_plain} fails for multi-class case; in the presence of three or more classes a critical point may exhibit active yet unsolved data points (c.f. figure \ref{fig:multibad}). This phenomenon is inherent to multi-class tasks in a certain sense, for if we use the same features $\xx^{(i,\ell)}$ (c.f. \eqref{bobo}) in a multi-layer ReLU network but apply a different network  criterion $\bar \ell(\yy,\hat \yy)$ then the phenomenon persists. For example, using the one-versus-all criterion
\begin{equation}\label{eq:ova} \textstyle
\bar \ell(\hat \yy, \yy ) := \sum_r \mu^{(i,r)}\sigma \left( 1 + \hat y^{(i)}_{r}(-1)^{ y^{(i)}_{r} }\right),
\end{equation}
in place of the hinge loss \eqref{hl2} still gives rise to a network with non-trivial critical points (similar to figure \ref{fig:multibad}) despite its more ``binary'' structure. In this way, the emergence of non-trivial critical points reflects the nature of multi-class tasks rather than some pathology of the hinge-loss network criterion itself.
\begin{figure}[t]
\begin{centering}
\subfigure[ $\Loss(\bomega)=0$]{ \includegraphics[trim={4cm 2cm 4cm 2cm},clip,width=1in]{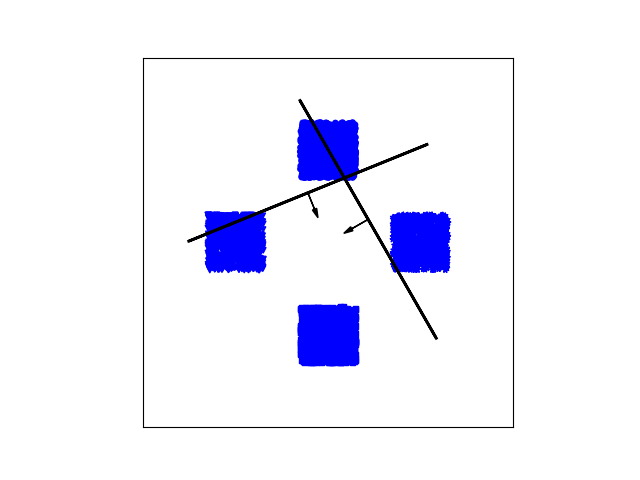} } $\qquad $
\subfigure[$\Loss(\bomega) > 0 $]{\includegraphics[trim={4cm 2cm 4cm 2cm},clip,width=1in]{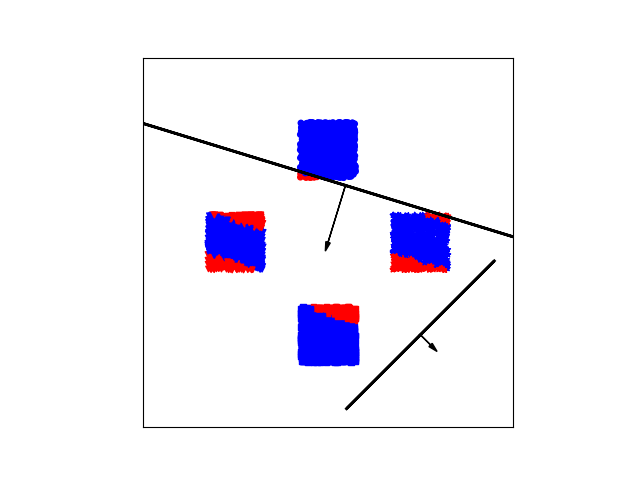}}
\caption{Four-way classification with multiclass hinge loss \eqref{hl2}. At left --- a global minimizer. At right --- a sub-optimal local minimizer where the analogue of theorem \ref{crit_point_plain} fails.}
 \label{fig:multibad}
\end{centering}
\end{figure}

To arrive at the same destination our analysis must therefore take a more circumlocuitous route. As these counter-examples suggest, if the loss $\Loss(\bomega)$ has non-trivial critical points then we must avoid non-trivial critical points by modifing the \emph{training strategy} instead. We shall employ the one-versus-all criterion \eqref{eq:ova} for this task, as this choice will allows us to directly leverage our binary analyses. 

Let us begin this process by recalling that
$$
\xx^{(i,L)}\big( \omega^{(1)},\ldots,\omega^{(L)} , \bb^{(1)},\ldots,\bb^{(L)} \big)
$$
and $\hat \yy^{(i)} = V\xx^{(i,L)} + \cc$ denote the features and predictions of the network with $L$ hidden layers, respectively. The sub-collection of parameters
$$
\breve \bomega := \big( \omega^{(1)},\ldots,\omega^{(L)} , \bb^{(1)},\ldots,\bb^{(L)} \big)
$$
therefore determine a set of features $\xx^{(i,L)}$ for the network while the parameters $V,\cc$ determine a set of one-versus-all classifiers utilizing these features. We may write the loss for the $r^{ {\rm th}}$ class as
\begin{equation}\label{eq:coupledloss}
\Loss^{(r)}( \breve \bomega , \vv_{r} , c_r ) = \sum \mu^{(i,r)}\sigma\left( 1 + \hat y^{(i)}_{r}(-1)^{ y^{(i)}_{r} }\right)
\end{equation}
and then form the sum over classes
$$\bar \Loss(\bomega) := (\Loss^{(1)} + \cdots + \Loss^{(R)})(\bomega)$$ to recover the total objective. We then seek to minimize $\bar \Loss$ by applying a soft-penalty approach. We introduce the $R$ replicates 
$$\breve \bomega^{(r)} = \big(\omega^{(1,r)},\ldots,\omega^{(L,r)},\bb^{(1,r)},\ldots,\bb^{(L,r)}\big) \quad r \in [R]$$
of the hidden-layer parameters $\breve \bomega$ and include a soft $\ell^{2}$-penalty $\mathcal{R}\big(\breve \bomega^{(1)},\ldots,\breve \bomega^{(R)}\big) := $
$$
\frac{R}{R-1} \sum^{L}_{\ell = 1} \sum^{R}_{r=1} \|\omega^{(\ell,r)} - \bar \omega ^{(\ell)} \|^{2} + \| \bb^{(\ell,r)} - \bar \bb ^{(\ell)}\|^{2}
$$
to enforce that the replicated parameters $\omega^{(\ell,r)},\bb^{(\ell,r)}$ remain close to their corresponding means $(\bar{\omega}^{(\ell)},\bar \bb^{(\ell)})$ across classes. Our training strategy then proceeds to minimize the penalized loss $\E_{\gamma}\big(\bomega^{(1)},\ldots,\bomega^{(R)}\big) := $
\begin{equation}\label{eq:pen}
\textstyle \sum_r \Loss^{(r)}\big(\bomega^{(r)}\big) + \gamma \mathcal{R}\big(\breve \bomega^{(1)},\ldots,\breve \bomega^{(R)}\big)
\end{equation}
for $\gamma > 0$ some parameter controlling the strength of the penalty. Remarkably, utilizing this strategy yields
\begin{theorem}[Exact Penalty and Recovery of Two-Class Structure]\label{thm:multi}
If $\gamma > 0$ then the following hold for \eqref{eq:pen} ---
\begin{enumerate}[label=(\roman*)]
\item The penalty is exact, that is, at \textbf{any} critical point $\big( \bomega^{(1)},\ldots,\bomega^{(R)} \big)$ of $\E_{\gamma}$ the equalities
\begin{align*}
\omega^{(\ell,1)} &= \cdots = \omega^{(\ell,R)} = \bar \omega^{(\ell)} := \frac1{R} \sum^{R}_{r=1} \omega^{(\ell,r)}\\
\bb^{(\ell,1)} &= \cdots = \bb^{(\ell,R)} = \bar \bb^{(\ell)} := \frac1{R} \sum^{R}_{r=1} \bb^{(\ell,r)}
\end{align*}
hold for all $\ell \in [L]$.
\item At \textbf{any} critical point of $\E_{\gamma}$ the two-class critical point relations $\0 \in \partial_{0}\Loss^{(r)}( \breve \bomega , \vv_r , c_r )$
hold for all $r \in [R]$.
\end{enumerate}
\end{theorem}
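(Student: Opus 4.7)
The plan is to prove part (i) via Euler-type identities arising from the positive homogeneity $\sigma_\alpha(\lambda x)=\lambda\sigma_\alpha(x)$ ($\lambda>0$) of the nonlinearity, and then deduce part (ii) as an immediate corollary. To begin, I would unpack the Clarke stationarity $\0\in\partial_0\E_\gamma$. A direct computation (using $\sum_s(\omega^{(\ell,s)}-\bar\omega^{(\ell)})=\0$) gives $\nabla_{\omega^{(\ell,r)}}\mathcal{R}=\tfrac{2R}{R-1}(\omega^{(\ell,r)}-\bar\omega^{(\ell)})$, and likewise for the biases. Since $\mathcal{R}$ is $C^{1}$ and $\sum_r\Loss^{(r)}(\bomega^{(r)})$ is separable across $r$, the Clarke sum rule supplies a specific $G^{(r)}\in\partial_0\Loss^{(r)}(\bomega^{(r)})$ whose components satisfy
\begin{align*}
g^{(\ell,r)} &= -\gamma\tfrac{2R}{R-1}\bigl(\omega^{(\ell,r)}-\bar\omega^{(\ell)}\bigr),\\
h^{(\ell,r)} &= -\gamma\tfrac{2R}{R-1}\bigl(\bb^{(\ell,r)}-\bar\bb^{(\ell)}\bigr),
\end{align*}
along with $g^{\vv_r}=\0$ and $g^{c_r}=0$, since $\vv_r$ and $c_r$ bear no penalty.

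Next, I would exploit the scaling symmetries of the network. For each $\ell<L$ the transformation $(\omega^{(\ell,r)},\bb^{(\ell,r)})\mapsto\lambda(\omega^{(\ell,r)},\bb^{(\ell,r)})$, $\omega^{(\ell+1,r)}\mapsto\omega^{(\ell+1,r)}/\lambda$ leaves every subsequent feature $\xx^{(i,\ell',r)}$ unchanged by positive homogeneity of $\sigma_\alpha$; for $\ell=L$ the compensation is $\vv_r\mapsto\vv_r/\lambda$. In either case $\Loss^{(r)}$ is invariant, so $\nabla\Loss^{(r)}\cdot V_\ell=0$ at every smooth point, where $V_\ell$ is the infinitesimal generator. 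A standard limit-and-convex-hull argument on the defining formula of the Clarke subdifferential extends this to $\langle\xi,V_\ell\rangle=0$ for every $\xi\in\partial_0\Loss^{(r)}(\bomega^{(r)})$. Applied to the specific $G^{(r)}$ above this yields
\begin{align*}
\langle g^{(\ell,r)},\omega^{(\ell,r)}\rangle+\langle h^{(\ell,r)},\bb^{(\ell,r)}\rangle &= \langle g^{(\ell+1,r)},\omega^{(\ell+1,r)}\rangle \quad (\ell<L),\\
\langle g^{(L,r)},\omega^{(L,r)}\rangle+\langle h^{(L,r)},\bb^{(L,r)}\rangle &= \langle g^{\vv_r},\vv_r\rangle = 0.
\end{align*}
Setting $A_\ell:=\sum_r\langle g^{(\ell,r)},\omega^{(\ell,r)}\rangle$ and $B_\ell:=\sum_r\langle h^{(\ell,r)},\bb^{(\ell,r)}\rangle$ and inserting the critical values produces $A_\ell=-\gamma\tfrac{2R}{R-1}\sum_r\|\omega^{(\ell,r)}-\bar\omega^{(\ell)}\|^2\leq 0$ and symmetrically $B_\ell\leq 0$. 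The Euler relations, summed over $r$, become $A_\ell+B_\ell=A_{\ell+1}$ with boundary $A_L+B_L=0$, and a backward induction using nonpositivity forces $A_\ell=B_\ell=0$ for every $\ell$. Each such quantity is a sum of nonnegative squared norms, so $\omega^{(\ell,r)}=\bar\omega^{(\ell)}$ and $\bb^{(\ell,r)}=\bar\bb^{(\ell)}$ for all $\ell,r$, completing (i).

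Part (ii) then follows immediately: once the equalities from (i) hold $\nabla\mathcal{R}=\0$, so the decomposition $\0\in\partial_0\E_\gamma=\prod_r\partial_0\Loss^{(r)}(\bomega^{(r)})+\gamma\nabla\mathcal{R}$ collapses to $\0\in\partial_0\Loss^{(r)}(\breve\bomega,\vv_r,c_r)$ for each $r$. I expect the main obstacle to be the extension of the Euler identity to the full Clarke subdifferential in paragraph two. One needs $\Loss^{(r)}$ globally invariant under the scaling action, differentiable off a measure-zero set (supplied by Lemma~\ref{topology} applied to the single-class setting), and a careful argument showing that the smooth-point identity $\nabla\Loss^{(r)}(\bomega_k)\cdot V(\bomega_k)=0$ passes through the limit and convex hull operations to give $\langle\xi,V(\bomega)\rangle=0$ for every Clarke subgradient $\xi\in\partial_0\Loss^{(r)}(\bomega)$; once this lemma is in hand the two-paragraph chain above closes without further work.
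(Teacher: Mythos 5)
Your proposal is correct, and its engine is the same as the paper's: both arguments pair the explicit form of the penalty gradient (a positive multiple of $\omega^{(\ell,r)}-\bar\omega^{(\ell)}$, equivalently of $\omega^{(\ell,r)}-\tilde\omega^{(\ell,r)}$) with Euler-type homogeneity identities for the loss, and then run a backward induction from layer $L$ down to layer $1$; part (ii) drops out in both cases once the penalty gradient vanishes. The execution differs in two respects. First, you derive the Euler identities from the global scaling symmetry $(\omega^{(\ell,r)},\bb^{(\ell,r)})\mapsto\lambda(\omega^{(\ell,r)},\bb^{(\ell,r)})$, $\omega^{(\ell+1,r)}\mapsto\omega^{(\ell+1,r)}/\lambda$ and push the smooth-point identity through the limit-and-convex-hull definition of $\partial_{0}$ (which works: the generator $V_\ell$ is continuous in $\bomega$, so $\langle\nabla\Loss^{(r)}(\bomega_k),V_\ell(\bomega_k)\rangle=0$ passes to limits and then to convex combinations). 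The paper instead restricts to a cell, uses the multilinear decomposition of Theorem 1, applies Euler's theorem to each form, and sums over adjacent cells with the weights $\theta^{(u)}$; the resulting identities are literally the same as yours, since both sides of your relation $\langle g^{(\ell,r)},\omega^{(\ell,r)}\rangle+\langle h^{(\ell,r)},\bb^{(\ell,r)}\rangle=\langle g^{(\ell+1,r)},\omega^{(\ell+1,r)}\rangle$ equal $\sum_u\theta^{(u)}\big(\phi^{(u,r)}_{0}+\cdots+\phi^{(u,r)}_{\ell}\big)$. Second, and more substantively, you sum over $r$ before invoking the penalty gradient, so that $A_\ell=-c\sum_r\|\omega^{(\ell,r)}-\bar\omega^{(\ell)}\|^{2}\le 0$ and $B_\ell\le 0$ are immediate and the recursion $A_\ell+B_\ell=A_{\ell+1}$ with $A_L+B_L=0$ forces everything to vanish; this bypasses the paper's auxiliary Cauchy--Schwarz lemma (the one showing that vectors satisfying $\|\xx^{(r)}\|^{2}=\frac1{R-1}\sum_{s\neq r}\langle\xx^{(s)},\xx^{(r)}\rangle$ must all coincide), which the paper needs because it works with the per-$r$ identities. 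Your version of that final step is cleaner. The only imprecision is the asserted equality $\partial_{0}\E_{\gamma}=\prod_r\partial_{0}\Loss^{(r)}+\gamma\nabla\mathcal{R}$: for the separable nonsmooth part only the inclusion of $\partial_{0}\big(\sum_r\Loss^{(r)}\big)$ in the product of the $\partial_{0}\Loss^{(r)}$ is guaranteed in general, but that inclusion is all your argument uses, both to extract the single consistent element $G^{(r)}$ in the first step and to conclude $\0\in\partial_{0}\Loss^{(r)}(\breve\bomega,\vv_r,c_r)$ at the end.
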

In other words, applying a soft-penalty approach to minimizing the original problem \eqref{eq:coupledloss} actually yields an exact penalty method. By (i), at critical points we obtain a common set of features $\xx^{(i,L)}$ for each of the $R$ binary classification problems. Moreover, by (ii) these features simultaneously yield critical points 
\begin{equation}\label{eq:magic}
\0 \in \partial_{0} \Loss^{(r)}\big(\breve \bomega , \vv_r , c_r \big)
\end{equation}
for \emph{all} of these binary classification problems. The fact that \eqref{eq:magic} may fail for critical points of $\bar \Loss$ is responsible for the presence of non-trivial critical points in the context of a network with one hidden layer. We may therefore interpret (ii) as saying that a training strategy that uses the penalty approach will avoid pathological critical points where $\0 \in \partial_{0} \bar \Loss(\bomega)$ holds but \eqref{eq:magic} does not. In this way the penalty approach provides a path forward for studying multi-class problems. Regardless of the number $L$ of hidden layers, it allows us to form an understanding of the family of critical points \eqref{eq:magic} by reducing to a study of critical points of binary classification problems. This allows us to extend the analyses of the previous section to the multi-class context. 

We may now pursue an analysis of multi-class problems by traveling along the same path that we followed for binary classification. That is, a deep linear network ($\alpha = 1$) once again has a trivial loss surface (see corollaries 100 and 101 in the appendix for precise statements and proofs). By imposing the same further assumptions, namely linearly separable data in a one-hidden layer network, we may extend this benign structure into the leaky ReLU regime. Finally, when $\alpha = 0$ sub-optimal local minima appear; we may characterize them in a manner analogous to the binary case.

To be precise, recall the loss
\begin{align}\label{eq:lrmulti}
\Loss(\bomega) &= \sum^{R}_{r=1} \Loss^{(r)}(\bomega \big) \quad \text{for} \\
 \Loss^{(r)}(\bomega) &:= \sum\mu^{(i,r)} \sigma\big( 1 - y^{(i,r)}( \langle \vv_r , \xx^{(i,1)} \rangle + c_r) \big) \nonumber
\end{align}
that results from the features $\xx^{(i,1)} = \sigma_{\alpha}(W\xx^{(i)} + \bb)$ of a ReLU network with one hidden layer. If the positive weights $\mu^{(i,r)}>0$ satisfy
$$
\sum_{y^{(i,r)}=1} \mu^{(i,r)}  = \sum_{ y^{(i,r)}=-1} \mu^{(i,r)} = \frac1{2}
$$
then we say that the $\mu^{(i,r)}$ give equal weight to all classes. Appealing to the critical point relations \eqref{eq:magic} yields the following corollary. It gives the precise structure that emerges from the leaky regime $0 < \alpha < 1$ with separable data ---
\begin{corollary}[Multiclass with $0 < \alpha < 1$]
Consider the loss  \eqref{eq:lrmulti} and its corresponding penalty \eqref{eq:pen} with $\gamma >0, 0 < \alpha < 1$ and data $\x^{(i)}, i \in [N]$ that are linearly separable. 
\begin{enumerate}[label=(\roman*)]
\item Assume that $\bomega = (\bomega^{(1)},\ldots,\bomega^{(R)})$ is a critical point of $\E_{\gamma}$ in the Clarke sense. If $\vv^{(r)} \neq \0$ for all $r \in [R]$ then $\bomega$ is a global minimum of $\Loss$ and of $\E_{\gamma}$.
\item Assume that the $\mu^{(i,r)}$ give equal weight to all classes. If $\bomega = (\bomega^{(1)},\ldots,\bomega^{(R)})$ is a local minimum of $\E_{\gamma}$ and $\vv_r = \0$ for some $r \in [R]$ then $\bomega$ is a global minimum of $\Loss$ and of $\E_{\gamma}$.
\end{enumerate}
\end{corollary}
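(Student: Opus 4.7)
The plan is to reduce to binary classification via the exact-penalty machinery of Theorem~\ref{thm:multi}. A local minimum of $\E_\gamma$ is in particular a Clarke critical point, so Theorem~\ref{thm:multi}(i) forces all hidden-layer replicates to collapse to their common mean; write $\breve\bomega$ for this common value. Consequently $\mathcal{R}(\bomega) = 0$ and $\E_\gamma(\bomega) = \Loss(\bomega)$, while Theorem~\ref{thm:multi}(ii) gives $\0 \in \partial_{0} \Loss^{(r)}(\breve\bomega,\vv_r,c_r)$ for every $r$. Each triple $(\breve\bomega,\vv_r,c_r)$ is therefore a Clarke critical point of the leaky-ReLU binary hinge loss \eqref{lll} applied to data $\xx^{(i)}$, labels $y^{(i,r)}$, and weights $\mu^{(i,r)}$.

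Part~(i) is then immediate. For each $r$ Theorem~\ref{crit_point_leaky_relu} says either $\vv_r = \0$ or the binary triple is a global minimum of $\Loss^{(r)}$; by hypothesis $\vv_r \neq \0$, so each $\Loss^{(r)}(\bomega)$ equals its binary minimum, which is $0$ by linear separability. Summing over $r$ gives $\Loss(\bomega) = 0 = \min \Loss$, and $\mathcal{R}(\bomega) = 0$ then yields $\E_\gamma(\bomega) = 0 = \min \E_\gamma$ as well.

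For part~(ii) the strategy is to promote the Clarke critical point $(\breve\bomega,\0,c_r)$ of $\Loss^{(r)}$ to a genuine local minimum of $\Loss^{(r)}$: once established, Theorem~\ref{local_min_leaky} (equal class weights plus linear separability) forces it to be a global minimum of $\Loss^{(r)}$ with $\Loss^{(r)}(\bomega) = 0$, and the remaining indices $r' \neq r$ are handled exactly as in part~(i), giving $\Loss(\bomega) = 0 = \E_\gamma(\bomega)$. To carry out the promotion, I would perturb only the block $(\breve\bomega^{(r)},\vv_r,c_r)$ by $\lambda(\delta_1,\delta_2,\delta_3)$ while fixing all other components of $\bomega$. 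A direct computation from the definition of $\mathcal{R}$ shows that the penalty increment equals exactly $\lambda^2\|\delta_1\|^2$, and hence is $O(\lambda^2)$. If $(\breve\bomega,\0,c_r)$ were not a local minimum of $\Loss^{(r)}$, the piecewise multilinear structure guaranteed by Theorem~\ref{theorem1} would yield a direction along which $\Loss^{(r)}$ admits a strictly negative one-sided directional derivative, producing an $O(\lambda)$ decrease that dominates the $O(\lambda^2)$ penalty for small $\lambda > 0$. This would contradict the local minimality of $\bomega$ for $\E_\gamma$, so the promotion is valid.

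The main obstacle is the promotion step itself, namely ensuring that failure to be a local minimum of the merely Lipschitz, piecewise multilinear loss $\Loss^{(r)}$ delivers a genuine first-order descent direction rather than only a decrease manifested after crossing several cell boundaries. The piecewise linear behaviour of $\Loss^{(r)}$ along each ray, combined with the standard fact that a Lipschitz function strictly decreasing between two points on a segment must admit a direction of strictly negative one-sided directional derivative at some intermediate basepoint, makes this delicate but tractable. Once this descent direction is exhibited, the quadratic versus linear comparison closes the argument and part~(ii) follows.
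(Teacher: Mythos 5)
Your reduction via theorem \ref{thm:multi} and your part (i) follow the paper's argument exactly and are correct. The gap is in the promotion step of part (ii). You claim that if $(\breve\bomega,\0,c_r)$ fails to be a local minimum of $\Loss^{(r)}$, then the piecewise multilinear structure supplies a direction with strictly negative one-sided directional derivative, hence an $O(\lambda)$ decrease in $\Loss^{(r)}$ that dominates the $O(\lambda^{2})$ penalty increment. This is false precisely at the points where you need it. When $\vv_r=\0$, the forms $\phi^{u}_{0}(W,\vv)=-\sum_k v_k\langle \aaa^{(u)}_k,\ww_k\rangle$ and $\phi^{u}_{1}(\bb,\vv)=-\sum_k v_k\alpha^{(u)}_k b_k$ of lemma \ref{one_hidden} are linear in $\vv$, so every one-sided directional derivative of $\Loss^{(r)}$ in the hidden-layer parameters vanishes identically, and with equal weights and $c_r\in(-1,1)$ the derivative in $c_r$ vanishes as well. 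The failure of local minimality at such a configuration is a genuinely second-order phenomenon; this is exactly why the paper's proof of theorem \ref{local_min_leaky} needs the joint perturbation of lemma \ref{perturbation3}, whose gain is only $t^{2} m\sum_i\mu^{(i)}\veps^{(i,u)}\lambda^{(i,u)}_k$. Your comparison therefore degenerates to $O(\lambda^{2})$ against the penalty increment $\gamma\lambda^{2}\|\delta_1\|^{2}$, and for $\gamma$ large the penalty wins: the perturbation need not decrease $\E_{\gamma}$, so no contradiction with local minimality of $\E_{\gamma}$ is obtained. The ``standard fact'' you invoke is also not available here: for a Lipschitz function that decreases along a segment, the negative one-sided derivative is only guaranteed at some intermediate basepoint, not at the candidate minimizer itself ($f(x,y)=-xy$ at the origin is the model counterexample, and it is of exactly the bilinear type occurring in $\Loss^{(r)}$).

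The step could in principle be repaired by rescaling the hidden-layer part of the perturbation by a factor $\Lambda$ of order $1/\gamma$, so that the second-order loss decrease (linear in $\Lambda$) beats the penalty (quadratic in $\Lambda$), but that is genuine additional work not present in your write-up. The paper avoids promoting $(\breve\bomega,\0,c_r)$ to a local minimum of $\Loss^{(r)}$ altogether: for indices with $\vv_r\neq\0$ it applies the part (i) argument to get zero loss with margins that can be made strict; for indices with $\vv_r=\0$ it uses the equal-weight hypothesis to place $c_r$ in the open interval $(-1,1)$; it then perturbs the biases so that the whole configuration lies in the interior of a single cell, transfers local minimality from $\E_{\gamma}$ to $\Loss$ via lemma \ref{lem:locminlem}, and invokes the multilinear decomposition on that cell together with $\alpha>0$ and separability to force all the $\veps^{(i,u)}$, and hence every $\Loss^{(r)}$, to vanish.
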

Finally, when arriving at the standard ReLU nonlinearity $\alpha=0$ a bifurcation occurs. Sub-optimal local minimizers of $\E_{\gamma}$ can exist, but once again the manner in which these sub-optimal solutions appear is easy to describe. We let $\ell^{(i,r)}(\bomega)$ denote the contribution of the $i^{ {\rm th}}$ data point $\xx^{(i)}$ to the loss $\Loss^{(r)}$ for the $r^{ {\rm th}}$ class, so that $\Loss^{(r)}(\bomega)=  \sum_i  \mu^{(i,r)} \ell^{(i,r)}(\bomega)$ gives the total loss. Appealing directly to the family of critical point relations $\0 \in \partial_{0} \Loss^{(r)}\big(\breve \bomega , \vv_r , c_r \big)$
furnished by theorem \ref{thm:multi} yields our final corollary in the multiclass setting.
\begin{corollary}[Multiclass with $\alpha = 0$]
Consider the loss  \eqref{eq:lrmulti} and its corresponding penalty \eqref{eq:pen} with $\gamma >0, \alpha = 0 $ and data $\x^{(i)}, i \in [N]$ that are linearly separable. Assume that $\bomega=(\bomega^{(1)},\ldots,\bomega^{(R)})$ is any critical point of $\E_{\gamma}$ in the Clarke sense. Then $\ell^{(i,r)} > 0 $ 
 $$ \Longrightarrow  \quad  (\vv_r)_k \;  \sigma( \langle \ww_k, \xx^{(i)} \rangle + b_k)=0 \quad \text{for all} \quad k\in [K].$$
\end{corollary}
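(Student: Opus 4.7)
The plan is to chain Theorem~\ref{thm:multi} together with Theorem~\ref{crit_point_plain} and inherit the structural conclusion of the binary ReLU case. First, by part~(i) of Theorem~\ref{thm:multi}, at the given Clarke critical point $\bomega = (\bomega^{(1)},\ldots,\bomega^{(R)})$ of $\E_{\gamma}$ the replicated hidden-layer parameters collapse onto their means. Writing $W$ and $\bb$ for these common hidden-layer parameters, the critical point is described by a single shared hidden layer $\xx^{(i,1)} = \sigma(W\xx^{(i)} + \bb)$ together with $R$ one-vs-all classifiers $(\vv_r,c_r)$ operating on the resulting features.

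Second, by part~(ii) of Theorem~\ref{thm:multi}, this same critical point simultaneously satisfies the $R$ binary relations $\0 \in \partial_{0}\Loss^{(r)}(\breve{\bomega},\vv_r,c_r)$ for $r \in [R]$. Reading off \eqref{eq:coupledloss} and absorbing the sign $(-1)^{y^{(i)}_r}$ into binary labels $y^{(i,r)} \in \{-1,+1\}$ (with $y^{(i,r)} = +1$ iff $\xx^{(i)}$ belongs to class $r$), each $\Loss^{(r)}$ has exactly the form \eqref{lll} with $\alpha = 0$, input data $\xx^{(i)}$, weights $\mu^{(i,r)}$, and the common hidden layer $(W,\bb)$. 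Provided ``linearly separable'' is read in the one-vs-all sense --- each class can be separated from the union of the rest by a hyperplane in input space --- the binary data $\{(\xx^{(i)},y^{(i,r)})\}$ satisfy the hypothesis of Theorem~\ref{crit_point_plain} for every $r$.

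Third, I invoke Theorem~\ref{crit_point_plain} on each binary sub-problem. For a fixed $r$, if $\ell^{(i,r)}(\bomega) > 0$ then $\xx^{(i)}$ contributes a nonzero value to $\Loss^{(r)}$, so the theorem yields, for every hidden neuron $k \in [K]$, either $\langle \ww_k,\xx^{(i)}\rangle + b_k \le 0$ or $(\vv_r)_k = 0$. Because $\sigma(\langle \ww_k,\xx^{(i)}\rangle + b_k) = 0$ precisely when $\langle \ww_k,\xx^{(i)}\rangle + b_k \le 0$, the product $(\vv_r)_k\,\sigma(\langle \ww_k,\xx^{(i)}\rangle + b_k)$ vanishes in either case, giving the claimed identity for every $k$.

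The main delicacy is verifying that the multiclass linear-separability hypothesis descends to each one-vs-all binary sub-problem required by Theorem~\ref{crit_point_plain}. Once that is settled the argument is a clean composition of the two earlier results: the exact-penalty machinery in Theorem~\ref{thm:multi} does all the work of stitching the multiclass problem into a tuple of binary problems, and Theorem~\ref{crit_point_plain} then handles each one. No further non-smooth analysis of $\E_{\gamma}$ is required at this stage.
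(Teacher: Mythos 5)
Your proposal is correct and matches the paper's intended argument exactly: the paper gives no standalone proof of this corollary, stating only that it follows by ``appealing directly to the family of critical point relations'' furnished by Theorem~7 and then applying Theorem~6 to each binary one-versus-all sub-problem, which is precisely your chain. Your caveat that ``linearly separable'' must be read in the one-versus-all sense is a fair point the paper leaves implicit, but it is the same reading already used in the proof of the multiclass $0<\alpha<1$ corollary, so nothing further is needed.
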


\section{Conclusion}
We conclude by painting the overall picture that emerges from our analyses. The loss of a ReLU network is a multilinear form inside each cell. Multilinear forms are harmonic functions, and so maxima or minima simply  cannot occur in the interior of a cell unless the loss is constant on the entire cell.  This simple harmonic analysis reasoning leads to the following striking fact. ReLU networks {\bf do not have differentiable minima}, except for trivial cases. This reasoning is valid for any convolutional or fully connected network, with plain or leaky ReLUs, and with binary or multiclass hinge loss. Dealing with non-differentiable minima is therefore not a technicality; it is the heart of the matter.

Given this dichotomy between trivial, differentiable minima on one hand and nontrivial, nondifferentiable minima on the other, it is natural to try and characterise these two classes of minima more precisely. We show that global minima with zero loss must be trivial, while minima with nonzero loss are necessarily nondifferentiable for many fully connected networks. In particular, if a network has no zero loss minimizers then  all minima are nondifferentiable. 
 
Finally, our analysis clearly shows that local minima of ReLU networks are generically nondifferentiable. They cannot be waved away as a technicality, so any study of the loss surface of such network  must invoke nonsmooth analysis. We show how to properly use this machinery (e.g. Clark subdifferentials) to study ReLU networks. Our goal is twofold. First, we prove that a bifurcation occurs when passing from leaky ReLU to ReLU nonlinearities, as suboptimal minima suddenly appear in the latter case. Secondly, and perhaps more importantly, we show how to apply nonsmooth analysis in familiar settings so that future researchers can adapt and extend our techniques.
\bibliography{bibliography}
\bibliographystyle{icml2018}

\newpage

\onecolumn
{\color{white}.}
\vskip 0.3 in

\begin{center}
{\Large \bf  Appendix: Proofs of Lemmas and Theorems}
\end{center}

\vskip 0.4in

\setcounter{equation}{99}
\setcounter{lemma}{99}
\setcounter{theorem}{99}
\setcounter{corollary}{99}

All numbering in the appendix (corollaries, lemmas, theorems and equations) begin at $100$ to distinguish them from their counterparts in the main text; any equation number or theorem number below $100$ refers to a theorem or equation in the main text.

\section*{Proofs of  lemmas and theorems from  section 2: Global Structure of the Loss}

\begin{lemma*}[Lemma 1 from the paper] \label{topologyappx}
For each $u \in \{-1,1\}^{ND}$ the cell $\Omega_u$ is an open set. If $u \neq u'$ then $\Omega_u$ and $\Omega_{u'}$ are disjoint. The set 
$\mathcal{N}$ is  closed and has Lebesgue measure $0$.
\end{lemma*}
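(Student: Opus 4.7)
The plan is to verify the four assertions in turn: each cell is open, distinct cells are disjoint, $\N$ is closed, and $\N$ has Lebesgue measure zero. The guiding observation is that the signature $\Sc(\bomega)$ is built out of finitely many scalar functions $f^{(i,\ell,j)}(\bomega)$ — namely the $j$-th component of $W^{(\ell)} \xx^{(i,\ell-1)} + \bb^{(\ell)}$ for $\ell \in [L]$, together with the entries of $(\I - \ones \otimes \yy^{(i)})\hat\yy^{(i)} + \ones$ for the output layer — and that each $f^{(i,\ell,j)}$ is a continuous function of $\bomega$. Continuity propagates through the layers by a straightforward induction: $W^{(\ell)}$ depends linearly on $\omega^{(\ell)}$, the nonlinearity $\sigma_\alpha$ is continuous, and continuity is preserved by affine maps and compositions.

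First I would prove openness. Fix $\bomega \in \Omega_u$ with $u \in \{-1,1\}^{ND}$; then every $f^{(i,\ell,j)}(\bomega)$ is nonzero with sign prescribed by the corresponding coordinate of $u$. Since we are dealing with only finitely many continuous functions, there is a neighborhood of $\bomega$ on which each $f^{(i,\ell,j)}$ retains this sign, so $\Sc$ is constantly equal to $u$ there and $\Omega_u$ is open. Disjointness of $\Omega_u$ and $\Omega_{u'}$ for $u \neq u'$ is immediate from the definition $\Omega_u = \Sc^{-1}(u)$. Closedness of $\N$ then follows, because $\Omega \setminus \N$ equals the finite union $\bigcup_{u \in \{-1,1\}^{ND}} \Omega_u$ of the open cells.

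The only step that requires any real work is the measure-zero claim. I would decompose $\N$ into the finite union of zero sets $Z^{(i,\ell,j)} = \{\bomega : f^{(i,\ell,j)}(\bomega) = 0\}$ and argue that each of them has Lebesgue measure zero. For the hidden layers $\ell \in [L]$ the function $f^{(i,\ell,j)}(\bomega) = (W^{(\ell)} \xx^{(i,\ell-1)})_j + b^{(\ell)}_j$ depends on the scalar bias $b^{(\ell)}_j$ only through the additive term $b^{(\ell)}_j$ with coefficient $1$, because $\xx^{(i,\ell-1)}$ is a function of parameters from strictly earlier layers, and $W^{(\ell)}$ is a function of $\omega^{(\ell)}$ only. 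Hence for each fixed choice of all remaining parameters the equation $f^{(i,\ell,j)} = 0$ admits exactly one value of $b^{(\ell)}_j$, so Fubini's theorem gives $|Z^{(i,\ell,j)}| = 0$. The output-layer contributions are handled identically: for $r \neq r_0$ the relevant entry reduces to $\hat y^{(i)}_r - \hat y^{(i)}_{r_0} + 1$, which depends linearly on $c_r$ with coefficient $1$, while for $r = r_0$ the entry equals the constant $1$ and hence never vanishes. Being a finite union of measure-zero sets, $\N$ therefore has measure zero.

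I do not anticipate a serious obstacle here. The only mildly delicate point is to verify that each relevant $f^{(i,\ell,j)}$ really is affine with unit slope in some coordinate of $\Omega$; this is a matter of reading off the network formulas and noting that $b^{(\ell)}_j$ does not re-enter $W^{(\ell)}$ or any $\xx^{(i,\ell')}$ with $\ell' < \ell$. The remainder is standard point-set topology together with Fubini's theorem applied finitely many times.
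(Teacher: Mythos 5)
Your proposal is correct and follows essentially the same route as the paper: continuity of the finitely many activation functions gives openness, disjointness and closedness are formal, and the measure-zero claim is reduced to the observation that each vanishing condition is a graph over the corresponding bias coordinate. The paper phrases that last step as containment in finitely many Lipschitz graphs (hence $(n_p-1)$-rectifiability), while you apply Fubini directly to the unit-slope dependence on $b^{(\ell)}_j$ (resp.\ $c_r$); these are the same observation dressed differently, and your remark that the $r=r_0$ output entry is identically $1$ and never vanishes is a correct, if minor, extra detail.
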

\begin{proof}
The features $\x^{(i,\ell)}$ at each hidden layer depend in a Lipschitz fashion on parameters. Thus each $\Omega_{u}$ defines an open set in parameter space. Moreover, if $u \neq \tilde u$ then $\Omega_u$ and $\Omega_{\tilde u}$ are disjoint by definition. That $\mathcal{N}$ is closed follows from the fact that it is the complement of an open set. If $\bomega \notin \Omega_{u}$ for all $u \in \{-1,1\}^{ND}$ then at least one of the equalities
\begin{equation}\label{eq:lipgraphs}
\bb^{(\ell)}_{j} = - \langle \ww^{(\ell)}_{j} , \x^{(i,\ell-1)} \rangle \quad \text{or} \quad \cc_{s} =  -\big( 1 + \langle \vv_{s} - \vv_{r}  , \x^{(i,L)} \rangle \big),
\end{equation}
must hold. In the above equation $\bb^{(\ell)}_{j}$ and $\cc_j$ stands for the $j^{th}$ entry of the bias vectors $\bb^{(\ell)}$ and $\cc$ appearing in equation (4), whereas  $\ww^{(\ell)}_{j}$ and $\vv_j$ stands for the $j^{th}$ row of the weight matrices $W^{(\ell)}$ and $V$.
 The set of parameters $\mathcal{N} \subset \Omega$ where an equality of the form \eqref{eq:lipgraphs} holds corresponds to a Lipschitz graph in $\Omega$ of the bias parameter for that equality. It therefore follows that
$$
\mathcal{N} := \Omega \setminus \left(  \bigcup_{u \in \{-1,1\}^{ND} } \, \Omega_u \right)
$$
defines a set contained in a finite union of Lipschitz graphs. 
Thus $\mathcal{N}$ is $(n_p-1)$-rectifiable,
where  $n_{p} := \mathrm{dim}(\Omega)$ denotes  the total number of parameters. 
This implies that $\mathcal{N}$  has Lebesgue measure zero.
\end{proof}

\begin{theorem*}[Theorem 1 from the paper] \label{theorem1appx}
\mbox{}
For each cell $\Omega_u$ there exist multilinear forms $\phi_0^u, \ldots , \phi_{L+1}^u$ and a constant $\phi_{L+2}^u$  such that
\begin{align*}
\Loss |_{\Omega_u}(\omega^{(1)}, \ldots, \omega^{(L)},V,\bb^{(1)}, \ldots,\bb^{(L)},\cc)=\\
    \phi_0^u(\omega^{(1)}, \omega^{(2)}, \omega^{(3)}, \omega^{(4)} \ldots, \omega^{(L)},V)& \\
+ \phi_1^{u}(\bb^{(1)}, \omega^{(2)}, \omega^{(3)}, \omega^{(4)} \ldots, \omega^{({L})},V)& \\
+ \phi_2^{u}(\bb^{(2)}, \omega^{(3)}, \omega^{(4)} \ldots, \omega^{({L})},V)& \\
\vdots \qquad \qquad & \\
+ \phi_{L-1}^{u}(\bb^{({L-1})},\omega^{(L)},V)&\\
+ \phi_L^{u}(\bb^{({L})},V)& \\
+ \phi_{L+1}^{u}(\cc)& \\
+ \phi_{L+2}^{u}.
\end{align*}
\end{theorem*}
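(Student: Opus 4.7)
The plan is to exploit the fact that the signature function $\mathcal{S}(\bomega)$ is identically equal to $u$ on the cell $\Omega_u$, so that every nonlinearity in the network reduces to multiplication by a fixed diagonal matrix. First I would observe that on $\Omega_u$, the leaky ReLU activation at layer $\ell$ for data point $i$ satisfies
$$
\sigma_{\alpha}\bigl(W^{(\ell)}\x^{(i,\ell-1)} + \bb^{(\ell)}\bigr) = D^{(i,\ell)}\bigl(W^{(\ell)}\x^{(i,\ell-1)} + \bb^{(\ell)}\bigr),
$$
where $D^{(i,\ell)}$ is a diagonal matrix with entries in $\{1,\alpha\}$ determined by $u$. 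Similarly, the outer hinge-loss $\sigma$ reduces to multiplication by a fixed diagonal matrix $\Sigma^{(i)}$ with entries in $\{0,1\}$. Thus on $\Omega_u$ the loss contains no sign functions at all --- only compositions of parameter-dependent affine maps with fixed linear operators.

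Next I would unroll the feature recursion by induction on $\ell$, yielding the explicit formula
$$
\x^{(i,L)} = \Bigl(\prod^{1}_{\ell=L} D^{(i,\ell)} W^{(\ell)}\Bigr)\x^{(i)} \; + \; \sum^{L}_{k=1} \Bigl(\prod^{k+1}_{\ell=L} D^{(i,\ell)} W^{(\ell)}\Bigr) D^{(i,k)} \bb^{(k)},
$$
where the products are taken in descending order. Multiplying by $V$ and adding $\cc$ produces $\hat\yy^{(i)}$ as a sum of terms in which each parameter appears in a separate factor: the first term is multilinear in $(W^{(1)},\ldots,W^{(L)},V)$; the $k$-th bias term is multilinear in $(\bb^{(k)}, W^{(k+1)},\ldots,W^{(L)},V)$ for each $k \in [L]$; and the final term is linear in $\cc$. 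Since each $W^{(\ell)}$ depends linearly on $\omega^{(\ell)}$ by hypothesis, multilinearity in the $W^{(\ell)}$ immediately transfers to multilinearity in the $\omega^{(\ell)}$.

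I would then plug $\hat\yy^{(i)}$ into the linearized loss
$$
\Loss\big|_{\Omega_u}(\bomega) = -1 + \sum_i \mu^{(i)} \bigl\langle \1,\; \Sigma^{(i)}\bigl((\I - \1\otimes \yy^{(i)})\hat\yy^{(i)} + \1\bigr)\bigr\rangle
$$
and distribute. The fixed linear operator $\Sigma^{(i)}(\I - \1\otimes \yy^{(i)})$ and the fixed inner product with $\1$ preserve multilinearity, so each term in the expansion of $\hat\yy^{(i)}$ contributes a scalar multilinear form in exactly the claimed variables. Summing over the data index $i$ with fixed weights $\mu^{(i)}$ preserves multilinearity group-by-group, so the terms consolidate into single forms $\phi^{u}_{0},\ldots,\phi^{u}_{L+1}$. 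The constants arising from $-1$ and the $\Sigma^{(i)}\1$ piece combine into the scalar $\phi^{u}_{L+2}$.

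The main obstacle here is not conceptual but bookkeeping: one must verify that the expansion of $\x^{(i,L)}$ collects the dependence on each parameter in precisely the indexing pattern claimed --- in particular, that the term containing $\bb^{(k)}$ involves only $\omega^{(k+1)},\ldots,\omega^{(L)}$ (together with $V$) and no earlier parameters, which is exactly what the recursion cut-off at $\bb^{(k)}$ enforces. The induction on $L$ that establishes the expansion of $\x^{(i,L)}$ makes this direct, and the observation that multiplication by the fixed diagonal matrices $D^{(i,\ell)}$ and $\Sigma^{(i)}$ leaves the multilinearity structure untouched completes the argument.
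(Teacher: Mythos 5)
Your proposal is correct and follows essentially the same route as the paper's proof: replace each nonlinearity on $\Omega_u$ by a fixed diagonal matrix, unroll the resulting affine recursion, and read off the multilinear forms term by term. The only difference is that you carry out explicitly (via the induction giving the closed-form expansion of $\x^{(i,L)}$) the step the paper summarizes as ``carefully expanding'' the nested expression, which if anything makes the bookkeeping for the indexing pattern $(\bb^{(k)},\omega^{(k+1)},\ldots,\omega^{(L)},V)$ more transparent.
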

\begin{proof}
Let us define the collection of functions
\begin{align}\label{eq:sigs1}
\blambda^{(i,\ell)}(\bomega) &:= \sigma_\alpha'( W^{(\ell)} \xx^{(i,\ell-1)}+ \bb^\ell)   \qquad \text{for  } \quad \ell \in [L] \nonumber \\
\bepsilon^{(i)}(\bomega) &:= \sigma' \left( \;  (\I - \ones\otimes \yy^{(i)})  \;  \hat\yy^{(i)}  +  \ones  \;\right),
\end{align}
Comparing these equations with the definition (6) of the signature functions $\sss^{(i,\ell)}(\bomega)$ it is obvious
that $\blambda^{(i,\ell)}(\bomega)$ and $\bepsilon^{(i)}(\bomega)$ remain constant on each cell $\Omega_u$. We may therefore refer unambiguously to these functions by referencing a given cell $\Omega_u$ instead of a point $\bomega$ in parameter space. We shall therefore interchangably use the more convenient notation
 \begin{align}\label{eq:sigs2}
\blambda^{(i,\ell,u)} &:= \blambda^{(i,\ell)}(\bomega)     \qquad \text{for all  } \bomega \in \Omega_u \nonumber \\
\bepsilon^{(i,u)} &:= \bepsilon^{(i)}(\bomega)  \qquad \;\;\,\text{for all  } \bomega \in \Omega_u
\end{align}
when referring to these constants.

For simplicity of the exposition
let us temporarily assume that that the network has no bias, and let us ignore the vector $\ones$ appearing in equation (5).  Also let us define  the matrix   $T^{(i)}=\I - \ones  \otimes \yy^{(i)}$. 
The loss (5) then becomes:
$$
\Loss(W^{(1)}, \ldots, W^{(L)}, V)= \sum_i
\ones^T\sigma ( \; T^{(i)} V \sigma_\alpha(W^{(L)}   \ldots  \sigma_\alpha(W^{(2)} \sigma_\alpha( W^{(1)} \xx^{(i)} ) ))) 
$$
Since inside a cell $\Omega_u$ the activation pattern of the ReLU's or leaky ReLUs does not change, each $\sigma$ and $\sigma_\alpha$ in the above equation can be replaced by a diagonal matrix with $0$, $\alpha$ or ones in its diagonal. To be more precise, restricted to the cell $\Omega_u$, the loss can be written:
\begin{equation*}
\Loss |_{\Omega_u}(W^{(1)}, \ldots, W^{(L)}, V) =
 \sum_i
\ones^T   \mathcal{E}^{(i,u)}  T^{(i)} V  \Lambda^{(i,L,u)}W^{(L)}  \;  \ldots  \;  \Lambda^{(i,2,u)}W^{(2)}  \Lambda^{(i,1,u)} W^{(1)} \xx^{(i)}  
 \end{equation*}
 where $\mathcal{E}^{(i,u)}=\text{diag}(\bepsilon^{(i,u)}) $  and 
  $\Lambda^{(i,\ell,u)}=\text{diag}( \blambda^{(i,\ell,u)})$.
 From the above equation it is clear that  $\Loss |_{\Omega_u}$ is a multilinear form of its arguments.
%
 
 Going back to our case of interest, where we do have biases and where we do not ignore  the vector $\ones$, the picture becomes  slightly more complex: the loss restricted to a cell is now a sum of multilinear form rather than a single multilinear form.
The exact formula follows by carefully expanding 
\begin{align*}
& \Loss|_{\Omega_u}=-1 + \frac{1}{N}  \sum_i \ones^T\sigma \left( \;  T^{(i)} (  V \sigma_\alpha(W^{(L)} \sigma_\alpha(  \ldots W^{(2)} \sigma_\alpha( W^{(1)} \xx^{(i)} + \bb_1 )+ \bb_2 \ldots  )+\bb_L    )  + \cc) + \ones \right) \\
 &= -1 + \frac{1}{N}  \sum_i \ones^T  \mathcal{E}^{(i,u)}  \left( \;  T^{(i)} (  V \Lambda^{(i,L,u)}( W^{(L)}  \Lambda^{(i,L-1,u)}(  \ldots W^{(2)}\Lambda^{(i,1,u)}( W^{(1)} \xx^{(i)} + \bb_1 )+ \bb_2 \ldots  )+\bb_L    )  + \cc) + \ones \right)
\end{align*}
\end{proof}

\begin{corollary*}[Corollary 2 from the paper]  If $\bomega \in \Omega\setminus \mathcal{N}$ and the Hessian matrix $D^2 \Loss(\bomega)$ does not vanish, then it must have at least one strictly positive and one strictly negative eigenvalue.
\end{corollary*}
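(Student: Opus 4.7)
The plan is to show that $\Loss$ is harmonic on each cell $\Omega_u$ and then exploit the fact that a Hessian with zero trace must have eigenvalues of both signs (or vanish entirely). Since $\bomega \in \Omega \setminus \mathcal{N}$, the point $\bomega$ lies in the interior of some cell $\Omega_u$, so $\Loss$ agrees locally with $\Loss|_{\Omega_u}$, and by Theorem 1 this restriction equals a sum $\phi_0^u + \phi_1^u + \cdots + \phi_{L+1}^u + \phi_{L+2}^u$ where each $\phi_k^u$ is a multilinear form (or constant).

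The key observation is that every multilinear form $\phi(\vv_1,\ldots,\vv_n)$ is harmonic with respect to the full list of parameters. Indeed, for any single scalar parameter $x$, the function $\phi$ is either linear in $x$ (if $x$ is a coordinate of one of the $\vv_j$) or independent of $x$ (if it is not). In both cases $\partial^2 \phi / \partial x^2 = 0$, so $\Delta \phi = 0$. Summing the $\phi_k^u$ and adding the constant $\phi_{L+2}^u$ preserves this property, so $\Delta \Loss \equiv 0$ on $\Omega_u$. Since taking the trace of the Hessian computes the Laplacian, this gives
\begin{equation*}
\mathrm{tr}\, D^2 \Loss(\bomega) \;=\; \Delta \Loss(\bomega) \;=\; 0.
\end{equation*}

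The trace of a symmetric matrix equals the sum of its eigenvalues, so the eigenvalues of $D^2 \Loss(\bomega)$ sum to zero. If $D^2 \Loss(\bomega)$ does not vanish, at least one eigenvalue is non-zero; were every non-zero eigenvalue of the same sign, their sum could not be zero. Therefore $D^2 \Loss(\bomega)$ admits at least one strictly positive and one strictly negative eigenvalue, which is the desired conclusion.

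This argument is essentially a one-line consequence of Theorem 1 plus the strong maximum principle machinery the paper has already set up, so no serious obstacle is anticipated. The only point requiring care is verifying that the sum structure in Theorem 1 (where different $\phi_k^u$ depend on different subsets of the parameters) does not spoil harmonicity — but as argued above, a multilinear form has vanishing second partials with respect to any parameter, whether or not that parameter appears among its arguments, so the sum remains harmonic in the full parameter $\bomega$.
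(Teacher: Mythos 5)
Your proposal is correct and follows essentially the same route as the paper: both arguments reduce to the observation that each multilinear form in the Theorem~1 decomposition has vanishing pure second partials in every scalar parameter, so the trace of $D^2\Loss$ vanishes on $\Omega\setminus\N$, and a nonzero real symmetric matrix with zero trace must have eigenvalues of both signs. Your explicit remark that harmonicity survives summing forms that depend on different parameter subsets is a point the paper passes over silently, but it is the same proof.
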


\begin{proof}
it suffices to note that a multilinear form
 $
 \phi: \real^{d_1} \times \ldots \times \real^{d_n} \to \real
 $ can always be written as
\begin{equation} \label{multi}
\phi(\vv_1,  \ldots, \vv_n) = \sum_{j_1=1}^{d_1} \ldots  \sum_{j_n=1}^{d_n} A_{j_1,\ldots,j_n}  v_{1,j_1} \ldots v_{n,j_n}
\end{equation}
for some tensor  $\{A_{j_1,\ldots,j_n}:  1\le j_k \le d_k \},$ with $v_{k,j}$ denoting the $j^{ {\rm th}}$ component of the vector $\vv_k$. From \eqref{multi} it is clear that  
$$\frac{\partial^2 \phi}{ \partial v_{k,j}^2}=0$$
and therefore the trace of the Hessian matrix of $\phi$ vanishes. Thus the (real) eigenvalues of the (real, orthogonally diagonalizable) Hessian sum to zero, and so if the Hessian is not the zero matrix then it has at least one strictly positive and one strictly negative eigenvalue.
\end{proof}

 \begin{corollary*}[Corollary 1 from the paper]
 Local minima and maxima of the loss  occur only on the boundary set $\mathcal{N}$ or on those cells $\Omega_u$ where the loss is constant. In the latter case, $\Loss |_{\Omega_u}(\bomega) = \phi_{L+2}^u$.
\end{corollary*}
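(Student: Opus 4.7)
The plan is to deduce the corollary by combining Theorem 1 with the strong maximum principle, together with a short algebraic argument that upgrades ``constant on a connected component'' to ``constant on the entire cell with value $\phi_{L+2}^u$.''

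First I would dispose of the trivial case: if the local extremum $\bomega$ lies in $\mathcal{N}$ there is nothing to prove, so assume $\bomega \in \Omega \setminus \mathcal{N}$. Then by definition $\bomega$ belongs to some cell $\Omega_u$, which is open by Lemma 1. Let $C$ denote the connected component of $\Omega_u$ containing $\bomega$; it is an open connected subset of parameter space, and $\bomega$ is an interior point of $C$ at which $\Loss|_{C}$ attains a local extremum.

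Second, I would show that $\Loss|_{\Omega_u}$ is harmonic. By Theorem 1 it decomposes as $\sum_{i=0}^{L+1}\phi_i^u + \phi_{L+2}^u$ where each $\phi_i^u$ is a multilinear form in (disjoint blocks of) the parameters. The observation already used in the proof of Corollary 2 is that for any multilinear form $\phi(\vv_1,\ldots,\vv_n) = \sum A_{j_1\ldots j_n} v_{1,j_1}\cdots v_{n,j_n}$, every pure second derivative $\partial^2 \phi / \partial v_{k,j}^2$ vanishes identically, so $\Delta \phi \equiv 0$. Summing gives $\Delta \Loss|_{\Omega_u} \equiv 0$ on $\Omega_u$, and in particular on $C$.

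Third, I would invoke the strong maximum (and minimum) principle for harmonic functions: a non-constant harmonic function on an open connected set cannot attain a local extremum at an interior point. Since $\bomega$ is an interior local extremum of the harmonic function $\Loss|_{C}$, we conclude $\Loss$ is constant on $C$.

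The final and slightly delicate step is to upgrade this to constancy on all of $\Omega_u$ with the claimed value $\phi_{L+2}^u$. The formulas furnished by Theorem 1 define $\Loss|_{\Omega_u}$ as a polynomial expression $P(\bomega) = \sum_{i=0}^{L+1}\phi_i^u(\bomega) + \phi_{L+2}^u$ whose natural domain is all of $\Omega$. Since $P$ is constant on the open set $C$, all of its partial derivatives vanish on $C$; but $P$ is a polynomial, so vanishing derivatives on an open set force them to vanish on all of $\Omega$, hence $P$ is constant on $\Omega$ and in particular on $\Omega_u$. Finally, each $\phi_i^u$ with $i \leq L+1$ is multilinear and therefore vanishes whenever any of its vector arguments is zero, so a constant multilinear form must be identically zero. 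Thus $\phi_0^u = \cdots = \phi_{L+1}^u \equiv 0$ and $\Loss|_{\Omega_u}(\bomega) = \phi_{L+2}^u$, as claimed. The main thing to be careful about is precisely this last polynomial-extension step, which is what allows us to promote the local conclusion from the strong maximum principle to a statement about the entire cell.
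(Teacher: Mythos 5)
Your proposal is correct and shares the paper's core mechanism (multilinear $\Rightarrow$ harmonic $\Rightarrow$ strong maximum principle forces local constancy), but it finishes differently. The paper upgrades ``constant on a small ball'' to the identity $\Loss|_{\Omega_u} = \phi^u_{L+2}$ by expanding $\Loss|_{\Omega_u}(\hat\bomega + \delta\bomega)$ in powers of $\delta$, observing that $\phi^u_0 + \phi^u_1$ is the unique top-order coefficient, separating $\phi^u_0$ from $\phi^u_1$ via their distinct dependence on $\omega^{(1)}$ versus $\bb^{(1)}$, and then inducting downward through $\phi^u_2,\ldots,\phi^u_{L+1}$. You instead note that the formula from Theorem 1 defines a polynomial $P$ on all of $\Omega$, invoke the identity theorem for polynomials to promote constancy on an open set to constancy on all of $\Omega$, and evaluate. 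Your route is cleaner and, for the corollary as stated, actually shorter than you make it: once $P$ is globally constant, evaluating at $\bomega = \0$ (where every multilinear form vanishes) immediately gives $P \equiv \phi^u_{L+2}$, hence $\Loss|_{\Omega_u} = \phi^u_{L+2}$. The one place your write-up is incomplete is the stronger intermediate claim that $\phi^u_0 = \cdots = \phi^u_{L+1} \equiv 0$ individually: knowing the \emph{sum} is constant does not by itself make each summand constant, so ``a constant multilinear form is identically zero'' does not yet apply. This is easily repaired --- either observe that the forms involve pairwise disjoint sets of monomials (each $\phi^u_\ell$, $\ell \geq 1$, is the only one containing $\bb^{(\ell)}$, and $\phi^u_0$ the only one containing $\omega^{(1)}$), or isolate $\phi^u_\ell$ by setting the complementary blocks of arguments to zero --- and the paper needs and proves this stronger statement anyway, since it is reused in the lemma preceding Theorem 3. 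So: same analytic engine, a genuinely different and somewhat more economical algebraic finish, with one easily filled gap in the part that goes beyond what the corollary itself requires.
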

\begin{proof}
The proof  relies on the so called {\it maximum principle} for harmonic functions.
We first recall that a  real valued function $f: \real^d \to \real$ is said to be harmonic at a point $x$ if it twice differentiable at $x$ and if its Laplacian vanishes at $x$, that is:
\begin{equation} \label{laplacian}
\textstyle \Delta f(x):=\sum_{i=1}^d \frac{\partial^2 f}{ \partial x_i^2}(x)=0
\end{equation}
Note that \eqref{laplacian} is equivalent to saying that the Hessian matrix $D^2f(x)$ has zero trace.
A function is said to be harmonic on an open set $\mathcal{O}$ if it harmonic at every points $x \in \mathcal O$. 
In the proof of the previous corollary we have shown that the trace of $D^2 \Loss(\bomega)$ is equal to zero if $\bomega \notin \N$, which exactly means that the loss is harmonic on the open set $\Omega\backslash \N$.

The strong maximum principle states that a non-constant harmonic function cannot
attain a  local minimum or  maximum at an interior point of an open connected set.  Therefore  if $\hat \bomega $ is a local minimum of $\Loss$ that occurs in a cell $\Omega_u$, then there exists a small neighborhood
$$
B_{\veps}( \hat \bomega) := \{ \bomega \in \Omega : \| \bomega - \hat \bomega\| < \veps \} \subset \Omega_u
$$
on which $\Loss|_{\Omega_u}(\bomega)$ is constant. Thus
\begin{equation} \label{loulou}
\Loss|_{\Omega_u}(\hat \bomega + \delta \bomega ) = \Loss|_{\Omega_u}(\hat \bomega)
\end{equation}
must hold for all $\bomega$ and all $\delta$ small enough. Now use the  multilinearity of $\Loss|_{\Omega_u}$ to expand the left-hand-side into powers of $\delta$:
\begin{equation}\label{eq:thisone}
\Loss|_{\Omega_u}(\hat \bomega + \delta \bomega ) = \Loss|_{\Omega_u}(\hat \bomega ) + \sum^{L}_{k=1} \delta^{k} f_{k}(\hat\bomega ; \bomega) + \delta^{L+1}\big( \phi^{u}_{0} + \phi^{u}_{1} \big)(\bomega).
\end{equation}
That $\big( \phi^{u}_{0} + \phi^{u}_{1} \big)(\bomega)$ is, in fact, the highest-order term come
from the fact that, among all the multilinear forms appearing in the statement of theorem 1, 
$\phi^{u}_{0}$ and $\phi^{u}_{1}$ are the only one having $L+1$ inputs.
The terms of order $k \le L$ appearing in the expansion \eqref{eq:thisone}  depends both  on the minimizer $\hat\bomega$ and the perturbation $\bomega$, and we denote them by   $f_{k}(\hat\bomega ; \bomega)$. 

Combining \eqref{loulou} and \eqref{eq:thisone} we have that
\begin{equation}\label{eq:thisone2}
 \sum^{L}_{k=1} \delta^{k} f_{k}(\hat \bomega;\bomega) + \delta^{L+1}\big( \phi^{u}_{0} + \phi^{u}_{1} \big)(\bomega)=0.
\end{equation}
 Since \eqref{eq:thisone2} must hold for all $\delta$ small enough, all like powers must vanish
$$
f_k(\hat \bomega;\bomega) = 0 \qquad \text{and} \qquad \phi^{u}_{0}(\bomega) + \phi^{u}_{1}(\bomega)=0.
$$
The second equation can be written as
$$
\phi_0^u(\omega^{(1)}, \omega^{(2)}, \omega^{(3)}, \omega^{(4)} \ldots, \omega^{(L)},V) = 
-\phi_1^u(\bb^{(1)}, \omega^{(2)}, \omega^{(3)}, \omega^{(4)} \ldots, \omega^{(L)},V) 
$$
Since  $\phi^{u}_{0}$ depends linearly on  $\omega^{(1)}$ whereas $\phi^{u}_{0}$ does not depend on  $\omega^{(1)}$, and
since  $\phi^{u}_{1}$ depends linearly on  $\bb^{(1)}$ whereas $\phi^{u}_{0}$ does not depend on  $\bb^{(1)}$, the only way for the above equality to hold for all perturbation $\bomega$ is that both functions are the zero function.
%
 Thus $\phi^u_0 + \phi^u_1$ is the zero function, and so $\phi^{u}_2$ is the highest-order multilinear form in the decomposition from theorem 1. This implies that
$$
f_{L}(\hat \bomega;\bomega) = \phi^u_2(\bb^{2},\omega^{(3)},\ldots,\omega^{(L)},V),
$$
actually just depends on $\bomega$, and that
 $f_{L}$ must vanish by \eqref{eq:thisone2}. Thus $\phi^u_2$ is the zero function as well. Continuing in this way shows that each $\phi^{u}_{\ell}$ is the zero function for $0 \leq \ell \leq L+1,$ and so in fact
$$
\Loss|_{\Omega_u}(\bomega) = \phi^{u}_{L+2}
$$
as claimed.
\end{proof}

\begin{theorem*}[Theorem 2  from the paper]
 If $\bomega$ is a type II local minimum then $\Loss(\bomega)>0$.
\end{theorem*}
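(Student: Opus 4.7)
The plan is to prove the contrapositive: I will show that any local minimum with $\Loss(\bomega) = 0$ must be type I, which forces $\Loss(\bomega) > 0$ at every type II minimum. I would begin by noting that $\Loss \geq 0$ always. Since $\yy^{(i)}$ is one-hot at the target $r_0^{(i)}$, the $r$-th coordinate of the argument of $\sigma$ appearing in \eqref{full_loss} equals $1 + \hat y_r^{(i)} - \hat y_{r_0^{(i)}}^{(i)}$, which is exactly $1$ at $r=r_0^{(i)}$. Hence $\langle\ones,\sigma(\cdot)\rangle \geq 1$ for each $i$ and $\Loss = -1 + \sum_i \mu^{(i)}\langle\ones,\sigma(\cdot)\rangle \geq 0$, so the assumption $\Loss(\bomega) = 0$ forces the margin inequalities $\hat y_{r_0^{(i)}}^{(i)}(\bomega) - \hat y_r^{(i)}(\bomega) \geq 1$ for all $i$ and $r \neq r_0^{(i)}$.

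The core step is to upgrade these weak inequalities into strict ones, which I would do by scaling only the output layer. For $c > 1$, let $\bomega_c$ denote the parameter obtained from $\bomega$ by replacing $(V,\cc)$ with $(cV,c\cc)$. The hidden-layer features $\xx^{(i,L)}$ do not depend on $V,\cc$, so the margins scale cleanly and $1 + \hat y_r^{(i)}(\bomega_c) - \hat y_{r_0^{(i)}}^{(i)}(\bomega_c) \leq 1 - c < 0$ with uniform slack $c - 1$. Because $\N$ has Lebesgue measure zero (Lemma 1), for any $\eta > 0$ there is $\tilde\bomega_c \notin \N$ with $\|\tilde\bomega_c - \bomega_c\| < \eta$; once $\eta$ is taken smaller than $(c-1)$ divided by a local Lipschitz constant of the network outputs, the strict negativity persists at $\tilde\bomega_c$. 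Hence $\tilde\bomega_c$ lies in a cell $\Omega_u$ whose layer-$(L{+}1)$ signature coordinates equal $+1$ at each target and $-1$ at every off-target slot. Since the signature is constant on $\Omega_u$, the off-target hinge terms vanish identically on the entire cell while the on-target ones contribute $1$, so $\Loss \equiv 0$ on $\Omega_u$; in other words, $\Omega_u$ is a flat cell with zero loss.

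To finish, I would take $c_n = 1 + 1/n$ and $\eta_n \to 0$, producing a sequence $\tilde\bomega_{c_n} \to \bomega$ with each $\tilde\bomega_{c_n}$ lying in some flat cell $\Omega_{u_n}$. Because $\{-1,+1\}^{ND}$ is finite, pigeonhole delivers a subsequence all lying in a single cell $\Omega_{u^*}$, so $\bomega \in \overline{\Omega_{u^*}}$. As the open cell $\Omega_{u^*}$ is disjoint from $\N$, the point $\bomega$ must lie on $\partial \Omega_{u^*}$, and is therefore type I --- contradicting the hypothesis that $\bomega$ is type II. I expect the main obstacle to be the interaction between the two operations used here: a plain small perturbation of $\bomega$ designed to leave $\N$ need not preserve zero loss, whereas pure scaling of $(V,\cc)$ leaves untouched any vanishing hidden-layer signature entries. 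The slack $c - 1 > 0$ produced by the scaling is precisely what allows the two steps to be combined so that $\tilde\bomega_c$ simultaneously escapes $\N$ and maintains zero loss.
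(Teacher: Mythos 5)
Your proof is correct, and it shares the paper's overall strategy --- prove the contrapositive, then scale the output layer $(V,\cc)$ by a factor greater than $1$ to convert the zero-loss margin inequalities $\hat y^{(i)}_{r_0}-\hat y^{(i)}_r\ge 1$ into strict ones --- but it resolves the main technical difficulty, namely ensuring the perturbed point lands inside a cell rather than on $\N$, by a genuinely different route. The paper builds an explicit perturbation: it adds inductively chosen positive increments $\veps^{(L)}>\cdots>\veps^{(1)}>0$ to the hidden biases so that every hidden activation acquires a definite sign without any sign flipping, which pins the perturbed point inside one fixed cell along the entire path $\lambda\to 1$; this requires layer-by-layer Lipschitz bookkeeping on the feature perturbations. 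You instead invoke Lemma 1 ($\N$ has Lebesgue measure zero) to pick an arbitrary nearby point off $\N$, note that the only signature coordinates relevant to flatness are the output-layer ones --- and those are protected by the slack $c-1$ via a single local Lipschitz bound on the outputs --- and then recover one fixed flat cell by pigeonhole over the finitely many signatures. Your version is shorter and dispenses with the inductive construction entirely; what it gives up is the explicit identification of the flat cell and of a path within that cell converging to $\bomega$, neither of which the statement requires. One cosmetic remark: at the end, $\bomega$ need not lie on $\partial\Omega_{u^*}$ --- if $\bomega\notin\N$ it lies in $\Omega_{u^*}$ itself --- but both alternatives are type I by definition, so the conclusion is unaffected.
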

\begin{proof}
We will show the contrapositive:  if $\Loss(\bomega)=0$, then $\bomega$ is a type I local minimum.
Given some $\lambda > 1$ and $ \veps^{(\ell)} > 0$ put
$$
\tilde \bomega := (\bomega^{1},\ldots,\bomega^{(L)},\lambda V,\bb^{(1)}+\veps^{(1)},\ldots,\bb^{(L)}+\veps^{(L)},\lambda \cc)
$$
and let $\tilde \xx^{(i,\ell)}$ denote the corresponding features at layer $\ell$ using these parameters. Then there exists some constant $C \geq 1$ so that
$$
\big|\Delta \xx^{(i,\ell)} \big| := \big|\tilde \xx^{(i,\ell)} - \xx^{(i,\ell)}  \big| \leq C\max\left\{ \veps^{(1)},\ldots,\veps^{(\ell)} \right\}.
$$
For each hidden layer $\ell \in [L]$ and each corresponding feature $k \in [d_{\ell}]$ define the activation levels
\begin{align*}
\eta^{(i,\ell)}_{k} &:= \langle W^{(\ell)}_k(\omega^{(\ell)}) , \xx^{(i,\ell-1)} \rangle + \bb^{(\ell)}_{k} \\
\tilde \eta^{(i,\ell)}_{k} &:= \langle W^{(\ell)}_k(\omega^{(\ell)}) , \tilde \xx^{(i,\ell-1)} \rangle + \bb^{(\ell)}_{k} + \veps^{(\ell)} = \eta^{(i,\ell)}_{k} + \langle W_k(\omega^{(\ell)}) , \Delta \xx^{(i,\ell-1)} \rangle + \veps^{(\ell)}
\end{align*}
for each set of parameters. Define
$$
\eta^{(-,\ell)} := \max_{ i \in [N], k \in [d_{\ell}] } \; \{ \eta^{(i,\ell)}_{k} : \eta^{(i,\ell)}_{k} < 0 \}
$$
with the convention that $\eta^{(-,\ell)} = -\infty$ if $ \eta^{(i,\ell)}_{k}  \geq 0$ for all $k\in [d_{\ell}],i \in [N]$. Set
$$
\eta^{(-)} := \max_{ \ell \in [L] } \; \eta^{(-,\ell)}
$$
as the largest negative activation level of the network. Put
$$
W_{*} := \max_{ \ell \in [L} \} \; \|W^{(\ell)}\|_{2}
$$
as the largest norm of the matrices $W^{(\ell)}$ across the network. Take any $0 < \veps^{(L)}$ so that
$$
\veps^{(L)} < \frac{|\eta^{(-)}|}{ 2C\max\{1,W_* \} },
$$
then for $\ell = L-1,\ldots,1$ inductively choose
$$
0 < \veps^{(\ell)} < \min\left\{ \frac{|\eta^{(-)}|}{ 2C\max\{1, W_* \} } , \frac{ \veps^{(\ell+1)} }{C \max\{ 1 , W_*\} } \right\}.
$$
In particular, $\veps^{(L)} > \veps^{(L-1)} > \cdots > \veps^{(1)} > 0$. For any such choice
\begin{align*}
\tilde \eta^{(i,\ell)}_{k} &\geq \veps^{(\ell)} - |\langle W^{(\ell)}_{k} , \Delta \xx^{(i,\ell-1)} \rangle | \geq \veps^{(\ell)} - C \max\{1,W_*\} \veps^{(\ell-1)} > 0 \;\;\,\quad \text{if} \quad \eta^{(i,\ell)}_{k} \geq 0\\
\tilde \eta^{(i,\ell)}_{k} &\leq -\big|\eta^{(-)}\big| + C \max\{1,W_*\} \veps^{(\ell-1)} + \veps^{(\ell)} \leq -\frac1{2}\big|\eta^{(-)}\big| + \veps^{(\ell)} < 0 \quad \text{if} \quad \eta^{(i,\ell)}_{k} < 0.
\end{align*}
Now let $\cl(\x^{(i)}) \in \{1,\ldots,R\}$ denote the class of a data point. Put
\begin{align*}
\zz^{(i,r)}(\bomega) &:= \langle \vv_r - \vv_{\cl(\x^{(i)})} , \xx^{(i,L)}(\bomega) \rangle + (c_{r} - c_{\cl(\x^{(i)})}) \\
\tilde \zz^{(i,r)}(\tilde \bomega) &:= \lambda \langle \vv_r - \vv_{\cl(\x^{(i)})} , \xx^{(i,L)}(\tilde \bomega) \rangle + \lambda(c_{r} - c_{\cl(\x^{(i)})}) = \lambda \zz^{(i,r)} + \lambda \langle \vv_r - \vv_{\cl(\x^{(i)})} , \Delta \xx^{(i,L)} \rangle
\end{align*}
as the outputs of the network. That $\Loss(\bomega) = 0$ implies
$$
\sigma\big( 1 + \zz^{(i,r)} \big) = 0 \quad \text{for all} \quad r \neq \cl(\x_i)
$$
and so $\zz^{(i,r)} \leq -1$ for all $i,r\neq \cl(\x_i)$. Thus
$$
\tilde \zz^{(i,r)} \leq - \lambda +  2C\|V\|_{2}\lambda \veps^{(L)},
$$
and so if
$$
0 < \veps^{(L)} < \frac{ \lambda - 1 }{ 2C\|V\|_{2}\lambda }
$$
then $\tilde \zz^{(i,r)} < -1$ for all $i, r \neq \cl(\x_i)$. For any $\lambda >1$ and corresponding choices of $\veps^{(\ell)}$ it follows that $\Loss(\tilde \bomega) = 0$. Moreover, by construction the signature function $\mathcal{S}(\tilde \bomega)$ is constant as $\lambda,\veps^{(\ell)}$ vary and so $\tilde \bomega$ lies in some fixed cell $\Omega_u$ for all $\lambda > 1$. But $\tilde \bomega \to \bomega$ as $\lambda \to 1$, so $\bomega \in \overline \Omega_u$. Finally, for this cell $\Omega_u$ it follows that $\Loss|_{\Omega_u} = 0$ by definition of the signature function.
\end{proof}

\begin{theorem*}[Theorem 3  from the paper]
Consider the loss  (5) for a fully connected network. Assume that $\alpha>0$ and that the data points $(\xx^{(i)},\yy^{(i)})$ are generic. Then $\Loss(\bomega)=0$ at any type I local minimum.
\end{theorem*}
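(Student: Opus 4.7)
The plan is to prove the stronger statement at the cell level: under the genericity hypothesis, any cell $\Omega_u$ on which $\Loss|_{\Omega_u}$ is constant must have that constant value equal to zero. Because a type I local minimum $\bomega$ lies in the closure of some flat cell $\Omega_u$ and continuity of $\Loss$ forces $\Loss(\bomega)$ to equal the cell's constant value, this immediately yields the theorem.

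Fix a flat cell $\Omega_u$ with constant value $c$, and decompose $\Loss|_{\Omega_u}$ via Theorem 1 as $\phi_0^u + \phi_1^u + \cdots + \phi_{L+1}^u + \phi_{L+2}^u$. Since each of the parameters $\omega^{(1)}, \bb^{(1)}, \bb^{(2)}, \ldots, \bb^{(L)}, \cc$ appears in exactly one of the summands $\phi_0^u, \phi_1^u, \ldots, \phi_{L+1}^u$ and in no other, the same uniqueness-of-variables argument used in the proof of Corollary 1 forces each $\phi_\ell^u \equiv 0$ for $\ell \in \{0, 1, \ldots, L+1\}$, so that $c = \phi_{L+2}^u$. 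Evaluating the factored form of the loss from the proof of Theorem 1 with all parameters set to zero yields $c = \sum_i \mu^{(i)} \varepsilon^{(i)}$, where $\varepsilon^{(i,r)} := (\bepsilon^{(i,u)})_r \in \{0,1\}$ for $r$ with $y^{(i)}_r = 0$ and $\varepsilon^{(i)} := \sum_{r:\, y^{(i)}_r = 0} \varepsilon^{(i,r)}$. Assume for contradiction that $c > 0$; then $\{\varepsilon^{(i,r)}\}$ is a non-zero collection.

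The core of the argument is to extract the two rare-data equations from the identities $\phi_0^u \equiv 0$ and $\phi_1^u \equiv 0$. For any target class $r \in [R]$ and any index tuple $(k_1, \ldots, k_L)$, substitute into the factored expression for $\Loss|_{\Omega_u}$ the rank-one test values $V_r = \ee_{k_L}^T$ with $V_{r'} = \mathbf{0}$ for $r' \neq r$, together with $W^{(\ell)} = \ee_{k_\ell} \ee_{k_{\ell-1}}^T$ for $\ell = 2, \ldots, L$. These substitutions telescope the sandwiched chain of slope matrices $\Lambda^{(i,\ell,u)}$ into a single scalar factor $\lambda^{(i)} := \Lambda^{(i,L,u)}_{k_L k_L} \cdots \Lambda^{(i,1,u)}_{k_1 k_1}$, which lies in $\{1, \alpha, \alpha^2, \ldots, \alpha^L\}$ since each $\Lambda^{(i,\ell,u)}_{k_\ell k_\ell} \in \{\alpha, 1\}$. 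After this collapse, $\phi_0^u$ reduces to a linear functional of $W^{(1)}$ and $\phi_1^u$ to a linear functional of $\bb^{(1)}$; splitting the cross-term $(V_{r'} - V\yy^{(i)})$ according to whether $y^{(i)}_r = 1$ or $y^{(i)}_r = 0$, the vanishing of these linear functionals is equivalent to
\[ \sum_{i:\, y^{(i)}_r = 1} \lambda^{(i)} \mu^{(i)} \varepsilon^{(i)} \xx^{(i)} = \sum_{i:\, y^{(i)}_r = 0} \lambda^{(i)} \mu^{(i)} \varepsilon^{(i,r)} \xx^{(i)}, \]
\[ \sum_{i:\, y^{(i)}_r = 1} \lambda^{(i)} \mu^{(i)} \varepsilon^{(i)} = \sum_{i:\, y^{(i)}_r = 0} \lambda^{(i)} \mu^{(i)} \varepsilon^{(i,r)}. \]

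Since $r$ and $(k_1, \ldots, k_L)$ are arbitrary and the resulting $\lambda^{(i)}$ lie in $\{1, \alpha, \ldots, \alpha^L\}$, the non-zero collection $\{\varepsilon^{(i,r)}\}$ satisfies the rare condition, contradicting genericity. Hence $c = 0$ as desired. The main technical obstacle lies in the extraction step: one must carefully verify that the proposed rank-one substitutions cleanly isolate the coefficients of $W^{(1)}$ and $\bb^{(1)}$ in the multilinear expansion, and that the cross-term $V\yy^{(i)}$ arising from the multiclass-margin structure of the hinge loss interacts with the test choice of $V$ to produce precisely the partition into $y^{(i)}_r = 1$ and $y^{(i)}_r = 0$ sums demanded by the definition of rare data.
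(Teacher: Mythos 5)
Your proposal is correct and follows essentially the same route as the paper: reduce to showing every flat cell carries zero loss, use the fact that $\omega^{(1)},\bb^{(1)},\ldots,\bb^{(L)},\cc$ each appear in exactly one multilinear form to conclude each $\phi^u_\ell$ vanishes identically, collapse the sandwiched diagonal slope matrices into a scalar $\lambda^{(i)}\in\{1,\alpha,\ldots,\alpha^{L}\}$ to extract the two rare-data equations from $\phi_0^u\equiv 0$ and $\phi_1^u\equiv 0$, and invoke genericity. The only cosmetic differences are that the paper packages the extraction step as an iterated application of a diagonal-matrix lemma (its lemma 101) rather than your direct rank-one test substitutions, and concludes $\veps^{(i,u)}_r=0$ for all $i,r$ directly rather than passing through the constant $\phi^{u}_{L+2}$ and a contradiction.
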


In order to prove this theorem we will need  two lemmas.

\begin{lemma}\label{lem:dumdumlemma}
Let $\x_i,i\in[N]$ denote arbitrary points in $\R^{d}$ and $D_{i},i \in [N]$ an arbitrary family of $m \times m$ diagonal matrices. Then
\begin{align*}
&\sum^{N}_{i=1} D_{i}Ax_{i} = 0 \quad \text{for all} \quad A \in \M_{m\times d} \quad \text{if and only if}\\
&\sum^{N}_{i=1} D_{i}(j,j)x_i = 0 \quad \text{for all} \quad j \in [m]
\end{align*}
\end{lemma}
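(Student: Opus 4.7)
The plan is to verify both implications directly, since the lemma is essentially a bookkeeping statement that reduces a matrix-valued identity to a family of vector-valued identities by extracting rows and probing with elementary matrices.

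For the easier direction (backward), I would fix an arbitrary $A \in \M_{m \times d}$ and write $A_j \in \R^d$ for its $j^{\text{th}}$ row. Since the $D_i$ are diagonal, the $j^{\text{th}}$ entry of $D_i A x_i$ is simply $D_i(j,j)\,\langle A_j, x_i\rangle$. Summing over $i$ and pulling $A_j$ out of the inner product yields that the $j^{\text{th}}$ coordinate of $\sum_i D_i A x_i$ equals $\bigl\langle A_j, \sum_i D_i(j,j) x_i \bigr\rangle$, which vanishes by hypothesis. Doing this for every $j \in [m]$ gives $\sum_i D_i A x_i = 0$.

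For the forward direction, I would probe the identity $\sum_i D_i A x_i = 0$ with the elementary matrices $A = \mathbf{e}_j \mathbf{e}_k^T$ for $j \in [m], k \in [d]$, where $\mathbf{e}_j,\mathbf{e}_k$ are the standard basis vectors. Then $A x_i = (x_i)_k\, \mathbf{e}_j$, and because the $D_i$ are diagonal $D_i A x_i = D_i(j,j)\,(x_i)_k\,\mathbf{e}_j$. The hypothesis then forces $\sum_i D_i(j,j)(x_i)_k = 0$ for each $k \in [d]$; holding $j$ fixed and varying $k$ shows $\sum_i D_i(j,j) x_i = 0 \in \R^d$, and then varying $j$ completes the argument.

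There is no real obstacle here, so I expect the proof to be a short two-paragraph verification. The only mildly subtle point is remembering that the $D_i$ are \emph{diagonal}, which is what allows $D_i$ and the coordinate projection $\mathbf{e}_j^T(\cdot)$ to commute in the way used above; without diagonality the reduction to the $m$ separate vector identities would fail.
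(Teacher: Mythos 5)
Your proof is correct and is essentially the same elementary coordinate bookkeeping as the paper's: the paper expands $A$ column-wise and observes that the coefficient of each column is a diagonal matrix that vanishes iff its diagonal entries do, while you probe with the rank-one matrices $\mathbf{e}_j\mathbf{e}_k^T$, which extracts exactly the same component identities $\sum_i D_i(j,j)(x_i)_k = 0$. No gap.
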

\begin{proof}
Letting $A = [\aaa_1 , \ldots , \aaa_d]$ and writing out the expression
$$
\sum^{N}_{i=1} D_{i}Ax_{i} = 0
$$
column-wise shows that the first statement is equivalent to
$$
\sum^{d}_{k=1} \left( \sum^{N}_{i=1} D_{i}  x_{i}(k) \right) \aaa_{k}= 0 \quad \text{for all} \quad (\aaa_1,\ldots,\aaa_{d}) \in \R^{d},
$$
which in turn is equivalent to
\begin{equation}\label{eq:stupiddiag}
\left( \sum^{N}_{i=1} D_{i}  x_{i}(k) \right) = 0 \quad \text{for all} \quad k \in [d]
\end{equation}
For each $k$ the left hand side determines a diagonal matrix $E_k$, which vanishes if and only if all of its diagonal entries vanish. Thus \eqref{eq:stupiddiag} is equivalent to
$$
\sum^{N}_{i=1} D_{i}(j,j) x_{i}(k) = 0 \quad \text{for all} \quad k \in [d], j \in [m],
$$
which is exactly the conclusion of the lemma written component-by-component.
\end{proof}

\begin{lemma}
Let $\Omega_u$ denote a cell on which the loss $\Loss(\bomega)$ of a fully-connected network is constant. Let $\veps^{(i,u)}_{r} \in \{0,1\}$ denote the error indicator for $\xx^{(i)}$ and class $r \in [R]$ on the cell. Define
$$
\veps^{(i,u)} := \sum_{ r : \yy^{(i)}_r = 0} \veps^{(i,u)}_r
$$
as the total number of errors for $\xx^{(i)}$ on the cell. Then there exist scalars $\lambda^{(i)} \in \{1,\alpha,\ldots,\alpha^{L} \}$ so that the equalities
\begin{align}\label{eq:weirdcond}
\sum_{ \yy^{(i)}_{r} = 1 } \, \veps^{(i,u)} \lambda^{(i)}\mu^{(i)}\xx^{(i)} &= \sum_{ \yy^{(i)}_{r} = 0 } \, \veps^{(i,u)}_r \lambda^{(i)} \mu^{(i)}\xx^{(i)} \nonumber \\
\sum_{ \yy^{(i)}_{r} = 1 } \, \veps^{(i,u)} \lambda^{(i)}\mu^{(i)} &= \sum_{ \yy^{(i)}_{r} = 0 } \, \veps^{(i,u)}_r \lambda^{(i)}\mu^{(i)} \nonumber \\
\sum_{ \yy^{(i)}_{r} = 1 } \, \veps^{(i,u)} \mu^{(i)} &= \sum_{ \yy^{(i)}_{r} = 0 } \, \veps^{(i,u)}_r\mu^{(i)}
\end{align}
hold for all $r \in [R]$.
\end{lemma}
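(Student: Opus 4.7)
The plan is to extract the three identities from the vanishing of selected multilinear forms in the decomposition of Theorem 1. Since $\Loss$ is constant on $\Omega_u$, the argument given in the proof of Corollary 1 shows that each of $\phi_0^u, \phi_1^u, \ldots, \phi_{L+1}^u$ must vanish identically, and only the constant $\phi_{L+2}^u$ may be nonzero. I will use only the three identities $\phi_0^u \equiv 0$, $\phi_1^u \equiv 0$, and $\phi_{L+1}^u \equiv 0$.

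First I would read off, from the matrix expansion in the proof of Theorem 1, the closed forms
\begin{align*}
\phi_{L+1}^u(\cc) &= \sum_i \mu^{(i)} \qq^{(i,u)}\, \cc, \\
\phi_1^u(\bb^{(1)},\omega^{(2)},\ldots,V) &= \sum_i \mu^{(i)}\, \qq^{(i,u)} V \Lambda^{(i,L,u)} W^{(L)} \cdots W^{(2)} \Lambda^{(i,1,u)} \bb^{(1)}, \\
\phi_0^u(\omega^{(1)},\ldots,V) &= \sum_i \mu^{(i)}\, \qq^{(i,u)} V \Lambda^{(i,L,u)} W^{(L)} \cdots \Lambda^{(i,1,u)} W^{(1)} \xx^{(i)},
\end{align*}
where the row vector $\qq^{(i,u)} := \ones^T \mathcal{E}^{(i,u)} T^{(i)}$. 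A short direct computation using $T^{(i)} = \I - \ones \otimes \yy^{(i)}$ together with the one-hot structure of $\yy^{(i)}$ shows that the entries of this row vector are $\qq^{(i,u)}_r = \veps^{(i,u)}_r$ whenever $\yy^{(i)}_r = 0$ and $\qq^{(i,u)}_r = -\veps^{(i,u)}$ whenever $\yy^{(i)}_r = 1$.

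The third identity then drops out immediately from $\phi_{L+1}^u \equiv 0$: choosing $\cc = \ee_r$ gives $\sum_i \mu^{(i)} \qq^{(i,u)}_r = 0$, and splitting the sum by the value of $\yy^{(i)}_r$ produces exactly the third line of \eqref{eq:weirdcond}. For the first two identities, the key observation is that fixing a single tuple $(j_1,\ldots,j_L)$ and isolating the monomial coefficient of $V_{r,j_L} W^{(L)}_{j_L,j_{L-1}} \cdots W^{(2)}_{j_2,j_1}$ forces each $\Lambda^{(i,\ell,u)}$ to contribute only its single entry $\lambda^{(i,\ell,u)}_{j_\ell} \in \{\alpha,1\}$, so the product collapses to the scalar $\lambda^{(i)} := \prod_{\ell=1}^L \lambda^{(i,\ell,u)}_{j_\ell} \in \{1,\alpha,\ldots,\alpha^L\}$. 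Reading off the coefficient of $V_{r,j_L} W^{(L)}_{j_L,j_{L-1}} \cdots W^{(2)}_{j_2,j_1} \bb^{(1)}_{j_1}$ in $\phi_1^u \equiv 0$ yields $\sum_i \mu^{(i)} \lambda^{(i)} \qq^{(i,u)}_r = 0$, which becomes the second line of \eqref{eq:weirdcond} after substituting the explicit form of $\qq^{(i,u)}_r$ and splitting by $\yy^{(i)}_r \in \{0,1\}$. Repeating the same extraction with $\bb^{(1)}_{j_1}$ replaced by $W^{(1)}_{j_1,i_0}$ in $\phi_0^u \equiv 0$ and then letting $i_0$ range over $[d_0]$ gives $\sum_i \mu^{(i)} \lambda^{(i)} \qq^{(i,u)}_r \xx^{(i)} = \0$, which is the first line of \eqref{eq:weirdcond}.

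The only mildly delicate point, and the sole bookkeeping obstacle, is ensuring that the \emph{same} tuple $(j_1,\ldots,j_L)$ is used when extracting the coefficients from $\phi_0^u$ and $\phi_1^u$, so that identical scalars $\lambda^{(i)}$ appear in both of the first two identities. This is automatic: the tuple is chosen once, and since both multilinear forms vanish identically, every monomial coefficient must vanish independently, so the same choice of $(j_1,\ldots,j_L)$ feeds into both extractions.
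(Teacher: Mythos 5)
Your proof is correct and follows essentially the same route as the paper: both arguments exploit that $\phi^u_0$, $\phi^u_1$ and $\phi^u_{L+1}$ vanish identically on a flat cell, identify the row vector $\qq^{(i,u)} = \ones^T\mathcal{E}^{(i,u)}T^{(i)}$ (the paper's $\boldsymbol{\nu}^{(i,u)}/\mu^{(i)}$) with entries $\veps^{(i,u)}_r$ and $-\veps^{(i,u)}$, and fix a single index tuple through the layers to produce the common scalar $\lambda^{(i)} = \prod_\ell \lambda^{(i,\ell,u)}_{j_\ell}$. Your direct monomial-coefficient extraction is just a rephrasing of the paper's iterative application of its auxiliary diagonal-matrix lemma, so no substantive difference remains.
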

\begin{proof}
The $\Loss|_{\Omega_u}$ is constant if and only if all of the multilinear forms in the decomposition of $\Loss|_{\Omega_u}$ vanish, that is, each of the multilinear forms is the zero function (c.f. the proof of theorem 1). Let $\cl(\xx^{(i)})$ denote the class of a data point. Put
$$
\boldsymbol{\mu}^{(i)} := ( \mu^{(i)},\ldots,\mu^{(i)}) \in \R^{R} \quad  \qquad T^{(i)} := \mathrm{Id} - \ones \otimes \yy^{(i)}
$$
and for such a cell define the diagonal matrices
\begin{align*}
\veps^{(i,u)}_{r} &:= \sigma^{\prime}\left( 1 + \big( \langle \vv_r - \vv_{\cl(\xx^{(i)})}  , \xx^{(i,L)} \rangle + c_r - c_{\cl(\xx^{(i)})} \big) \right)\quad \lambda^{(i,\ell,u)}_{k} := \sigma^{\prime}_{\alpha}\left( \langle \aaa^{(\ell)}_{k} ,\xx^{(i,\ell-1)} \rangle + \bb^{(\ell)}_{k} \right)\\
\mathcal{E}^{(i,u)} &:= \mathrm{diag}\big( \veps^{(i,u)}_{1},\ldots,\veps^{(i,u)}_{R} \big) \qquad \qquad \qquad \qquad \quad \qquad \quad
\Lambda^{(i,\ell,u)} :=  \mathrm{diag}\big(\lambda^{(i,\ell,u)}_{1},\ldots,\lambda^{(i,\ell,u)}_{d_\ell}\big).
\end{align*}
These diagonal matrices remain constant for all $\bomega \in \Omega_u$ by the definition of a cell. Set
$$
\boldsymbol{\nu}^{(i,u)} := (T^{(i)})^{T}\mathcal{E}^{(i,u)}\boldsymbol{\mu}^{(i)}.
$$
Then the multilinear forms in theorem 1 have the following form
\begin{align*}
\phi^{u}_{0}\big( A^{(1)},\ldots,A^{(L)},V\big) &= \sum^{N}_{i=1} \langle V\Lambda^{(i,L,u)}A^{(L)}\cdots A^{(2)}\Lambda^{(i,1,u)}A^{(1)}\xx^{(i)} ,  \boldsymbol{\nu}^{(i,u)}\rangle \\
\phi^{u}_{\ell}\big( \bb^{(\ell)},A^{(\ell+1)},\ldots,A^{(L)},V) &= \sum^{N}_{i=1} \langle V\Lambda^{(i,L,u)}A^{(L)}\cdots A^{(\ell+1)}\Lambda^{(i,\ell,u)}\bb^{(\ell)} ,  \boldsymbol{\nu}^{(i,u)}\rangle \\
\phi^{u}_{L}(\bb^{(L)},V) &= \sum^{N}_{i=1} \langle V \Lambda^{(i,L,u)}\bb^{(L)},\boldsymbol \nu^{(i,u)} \rangle \\
\phi^{u}_{L+1}(\cc) &= \sum^{N}_{i=1} \langle \cc , \boldsymbol \nu^{(i,u)} \rangle.
\end{align*}
Now $\phi^{u}_{0}$ vanishes for all $V,A^{(\ell)}$ if and only if
$$
\sum^{N}_{i=1} \big(\Lambda^{(i,L,u)}A^{(L)}\cdots A^{(2)}\Lambda^{(i,1,u)}A^{(1)}\xx^{(i)} \big) \otimes \boldsymbol{\nu}^{(i,u)} = 0
$$
for all $A^{(\ell)}$. In other words, each of the $R$ columns
$$
\sum^{N}_{i=1} \big(\Lambda^{(i,L,u)}A^{(L)}\cdots A^{(2)}\Lambda^{(i,1,u)}A^{(1)}\xx^{(i)} \big)\boldsymbol{\nu}^{(i,u)}_{r} = 0 \quad \text{for all} \quad r \in [R]
$$
must vanish. Now $\Lambda^{(i,L,u)}$ is diagonal, and so lemma \ref{lem:dumdumlemma} shows this can happen if and only if
$$
\sum^{N}_{i=1}\lambda^{(i,L,u)}_{k_L} ( \Lambda^{(i,L-1,u)}A^{(L-1)}\cdots A^{(2)}\Lambda^{(i,1,u)}A^{(1)}\xx^{(i)} \big)\boldsymbol{\nu}^{(i,u)}_{r} = 0 \quad \text{for all} \quad (r,k_L) \in [R] \times [d_L].
$$
Now $\lambda^{(i,L,u)}_{k_L}\Lambda^{(i,L-1,u)}$ is once again diagonal, and so this can happen if and only if
$$
\sum^{N}_{i=1}\lambda^{(i,L,u)}_{k_L}\lambda^{(i,L-1,u)}_{k_{L-1}} ( \Lambda^{(i,L-2,u)}A^{(L-2)}\cdots A^{(2)}\Lambda^{(i,1,u)}A^{(1)}\xx^{(i)} \big)\boldsymbol{\nu}^{(i,u)}_{r} = 0 \quad \text{for all} \quad (r,k_L,k_{L-1}) \in [R] \times [d_L] \times [d_{L-1}].
$$
Continuing in this way shows $\phi^{u}_{0}$ vanishes if and only if
$$
\sum^{N}_{i=1} \boldsymbol{\nu}^{(i,u)}_{r}\big( \lambda^{(i,L,u)}_{k_L}\lambda^{(i,L-1,u)}_{k_{L-1}}\cdots\lambda^{(i,1,u)}_{k_1}\big) \xx^{(i)} = 0 \quad \text{for all} \quad (r,k_L,\ldots,k_1) \in [R]\times[d_L]\times\cdots\times[d_1].
$$
A similar argument shows $\phi^{u}_{\ell}, \ell \in [L]$ vanish if and only if
$$
\sum^{N}_{i=1} \boldsymbol{\nu}^{(i,u)}_{r}\lambda^{(i,L,u)}_{k_L}\cdots \lambda^{(i,\ell,u)}_{k_{\ell}} = 0 \quad \text{for all} \quad (r,k_L,\ldots,k_\ell) \in [R]\times[d_L]\times\cdots\times[d_\ell],
$$
while $\phi^u_{L+1}$ vanishes if and only if
$$
\sum^{N}_{i=1} \boldsymbol{\nu}^{(i,u)}_{r} = 0 \quad \text{for all} \quad r \in [R].
$$
Pick some $(k_L,\ldots,k_1) \in [d_{L}]\times \cdots \times [d_{1}]$ arbitrary and set
$$
\lambda^{(i)} := \lambda^{(i,L,u)}_{k_L} \cdots \lambda^{(i,1,u)}_{k_1} \in \{1,\alpha,\ldots,\alpha^{L}\}.
$$
For any such a choice the fact that $\phi^{u}_{0}$ vanishes implies the first equation in \eqref{eq:weirdcond} due to the definition of $\boldsymbol{\nu}^{(i,u)}_{r}$, while the fact that $\phi^{u}_{1}$ vanishes implies the second equation. Finally, that the third equation in \eqref{eq:weirdcond} holds comes from the fact that $\phi^{u}_{L+1}$ vanishes.
\end{proof}

\begin{proof}[Proof of the theorem]
The claims are immediate from the preceeding two lemmas. For generic data
\begin{align*}
\sum_{ \yy^{(i)}_{r} = 1 } \, \veps^{(i,u)} \lambda^{(i)}\mu^{(i)}\xx^{(i)} &= \sum_{ \yy^{(i)}_{r} = 0 } \, \veps^{(i,u)}_r \lambda^{(i)} \mu^{(i)}\xx^{(i)} \nonumber \\
\sum_{ \yy^{(i)}_{r} = 1 } \, \veps^{(i,u)} \lambda^{(i)}\mu^{(i)} &= \sum_{ \yy^{(i)}_{r} = 0 } \, \veps^{(i,u)}_r \lambda^{(i)}\mu^{(i)} \nonumber \\
\sum_{ \yy^{(i)}_{r} = 1 } \, \veps^{(i,u)} \mu^{(i)} &= \sum_{ \yy^{(i)}_{r} = 0 } \, \veps^{(i,u)}_r\mu^{(i)}
\end{align*}
can hold only if all of the coefficients vanish, i.e. $\veps^{(i,u)}_r \lambda ^{(i)} = 0$ for all $i \in [N],r \in [R]$. But $\lambda^{(i)} > 0$ for all $i \in [N]$ since $\alpha  >0$, and so $\veps^{(i,u)}_r = 0$ for all $i \in [N] , r \in [R]$. That is, $\Loss(\bomega) = 0$ on any flat cell $\Omega_u$.
\end{proof}

\section*{Proofs of  theorems from  section 3: Critical Point Analysis}
We begin by showing that no sub-optimal minimizers exist in the simplest case $\alpha=1$, i.e. for deep linear networks with binary hinge loss. The loss here is
\begin{equation} \label{vvvv}
\sum\mu^{(i)} \sigma \left( 1 - y^{(i)}\big(\langle \vv ,  W^{(L)} \cdots W^{(1)} \xx^{(i)} \rangle +c \big)    \right).
\end{equation}
Note that if we define $\bar \vv := (W^{(L)}\cdots W^{(1)})^{T}\vv$ then \eqref{vvvv} corresponds to a convex loss $E(\bar \vv , c )$ whose first argument is parametrized as a multilinear product.
\begin{theorem}[Deep Linear Networks]
Consider the loss \eqref{vvvv} with arbitrary data and assume that $\bomega$ is any critical point in the Clarke sense. Then the following hold ---
\begin{enumerate}[label=(\roman*)]
\item If $\bar \vv \neq \0$ then $\bomega$ is a global minimum. 
\item If the $\mu^{(i)}$ weight both classes equally weighted and $\bomega$ is a local minimum with $\bar \vv = \0$ then it is a global minimum.
\end{enumerate}
\end{theorem}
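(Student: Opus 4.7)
The plan is to reduce both parts to the convexity of the hinge loss in the effective parameters $(\bar\vv, c)$. Define the convex function
$$E(\bar\vv, c) := \sum_{i} \mu^{(i)}\sigma\big(1 - y^{(i)}(\langle\bar\vv, \xx^{(i)}\rangle + c)\big),$$
and write $\Loss(\bomega) = E(\Phi(\bomega))$ with $\Phi(\bomega) := (M(\bomega)^{T}\vv, c)$ and $M(\bomega) := W^{(L)}\cdots W^{(1)}$. The map $\Phi$ is smooth and surjective onto $\R^{d}\times \R$, so $\min \Loss = \min E$ and both parts reduce to showing that $(\bar\vv, c)$ is a global minimizer of the convex function $E$.

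For part (i), I would apply the Clarke chain rule $\partial_{0}\Loss(\bomega) = D\Phi(\bomega)^{*}\,\partial E(\bar\vv, c)$, which is valid because $E$ is convex and $\Phi$ is smooth. A Clarke critical point then provides a convex subgradient $(\xi, \zeta)\in\partial E(\bar\vv, c)$ whose componentwise pullback through $D\Phi$ vanishes. Setting $\vv^{(L+1)} := \vv$, $\vv^{(\ell)} := W^{(\ell)T}\vv^{(\ell+1)}$, and $B^{(\ell)} := W^{(\ell-1)}\cdots W^{(1)}$ (with $B^{(1)}=\I$), these equations read $\zeta=0$, $M\xi=\0$, and $\vv^{(\ell+1)}(B^{(\ell)}\xi)^{T}=0$ for every $\ell$. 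Since $\bar\vv = \vv^{(1)} \neq \0$ and any vanishing $\vv^{(\ell_0)}$ would propagate down to $\vv^{(1)}$, every $\vv^{(\ell)}$ is non-zero, so the outer-product relations collapse to $B^{(\ell)}\xi = \0$; the case $\ell = 1$ gives $\xi = \0$. Thus $\0\in\partial E(\bar\vv, c)$, and convexity of $E$ forces $\bomega$ to be a global minimizer.

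For part (ii), the equal-weighting hypothesis $\sum_i \mu^{(i)}y^{(i)}=0$ is essential. With $\bar\vv=\0$, viewing $\Loss$ as a convex function of $c$ alone shows that any local minimum has $c\in[-1, 1]$ and $\Loss(\bomega)=1$. Nudging $c$ into $(-1, 1)$ if needed lets me assume $c\in(-1, 1)$, so every hinge at $\bomega$ is strictly positive and $\Loss$ is smooth near $\bomega$; equal weighting cancels the $c$-term to give
$$\Loss(\bomega) = 1 - f(\bomega), \qquad f(\bomega) := \vv^{T} M(\bomega)\,\mathbf{g}_{0}, \qquad \mathbf{g}_{0} := \sum_i \mu^{(i)}y^{(i)}\xx^{(i)}.$$
The polynomial $f$ is multilinear in $(W^{(1)},\ldots,W^{(L)},\vv)$, so Corollary 1 applied to $\bomega$ (which sits in the interior of a cell) forces $\Loss$, hence $f$, to be constant on that cell. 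A polynomial constant on a non-empty open set is constant globally, and $f$ vanishes at the origin, so $f\equiv 0$ on $\Omega$. Evaluating $f$ at a suitable $\bomega^{*}$ (cascading rank-one $W^{(\ell)*}$ arranged so that $\vv^{*T}M(\bomega^{*})\mathbf{g}_{0} = \|\mathbf{g}_{0}\|^{2}$) forces $\mathbf{g}_{0}=\0$. Then $\0\in\partial E(\0, c)$, so $(\0, c)$ is a global minimizer of $E$ and $\Loss(\bomega) = 1 = \min \Loss$.

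The main obstacle is the cell-reduction step in (ii): one must argue that a local minimum sitting at the non-smooth slice $c = \pm 1$ can be replaced by a nearby local minimum with $c\in(-1, 1)$ lying in the smooth interior of a cell, and then convert the cell-constancy of the multilinear form $f$ into its global vanishing via polynomial analytic continuation. By contrast, the Clarke chain rule needed in (i) is standard for convex-smooth compositions and the remaining bookkeeping with the vectors $\vv^{(\ell)}$ is immediate.
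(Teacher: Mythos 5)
Your proof is correct, but it reaches the conclusion by a genuinely different route in both parts, so it is worth comparing the two. For part (i) the paper never invokes an abstract chain rule: it computes the Clarke subdifferential explicitly as the convex hull of the cell gradients (its lemma on the subdifferential calculation), differentiates only with respect to $W^{(1)}$ and $c$, obtains $\0 = \bar\vv \otimes \zz$ with $\zz = \sum_i \mu^{(i)}\lambda^{(i)}y^{(i)}\xx^{(i)}$ after multiplying by $W^{(1)T}$, and then verifies by hand that the accumulated coefficients $\lambda^{(i)} = \sum_u \theta^{(u)}\veps^{(i,u)} \in [0,1]$ satisfy exactly the case structure required to be legitimate convex subgradient coefficients of the per-sample hinges, so that $\0 \in \partial E(\bar\vv,c)$. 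Your appeal to the chain rule $\partial_0(E\circ\Phi)(\bomega) \subseteq D\Phi(\bomega)^{*}\partial E(\Phi(\bomega))$ for a convex $E$ composed with a smooth $\Phi$ packages that verification into a standard theorem; the subsequent bookkeeping with the back-propagated vectors $\vv^{(\ell)}$ is fine, though only the $\ell=1$ equation $\vv^{(2)}\xi^{T}=0$ together with $\bar\vv = W^{(1)T}\vv^{(2)} \neq \0$ is actually needed. What the paper's route buys is that the explicit cell-based subdifferential formula is reused throughout the rest of the appendix; what yours buys is brevity. For part (ii) the difference is more substantive: after the identical reduction to $c \in (-1,1)$ and the observation that $\Loss$ is smooth near $\bomega$, the paper simply cites theorem 1 of \cite{laurent2017deep} to dispatch the differentiable case, whereas you stay entirely inside the paper's own framework --- the smooth local minimum forces the multilinear form $f(\bomega) = \vv^{T}W^{(L)}\cdots W^{(1)}\mathbf{g}_{0}$ to be constant on an open cell by the maximum-principle corollary, polynomial rigidity then makes $f \equiv 0$ globally, and a rank-one test point forces $\mathbf{g}_{0} = \sum_i \mu^{(i)}y^{(i)}\xx^{(i)} = \0$, whence $\0 \in \partial E(\0,c)$. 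This is self-contained, arguably more in the spirit of the paper's harmonic-analysis theme, and removes the external dependency; the one point to state carefully (which both you and the paper gloss over in the same way) is that the nudged point with $c' \in (-1,1)$ inherits local minimality from $\bomega$ because $\Loss$ is identically $1$ along the segment $\bar\vv = \0$, $c \in [-1,1]$.
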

\begin{proof}
Recall that the loss takes the form
$$
\Loss(\bomega) = \sum^{N}_{i=1} \mu^{(i)}\sigma\big( 1 -  y^{(i)}( \langle \vv , \xx^{(i,L)} \rangle + c ) \big) \qquad y^{(i)} \in \{-1,1\}
$$
for a deep linear network, where
$$
\xx^{(i,L)} = W^{(L)} \cdots W^{(1)} \xx^{(i)}
$$
denote the features from the linear network at the $L^{ {\rm th}}$ hidden layer. Let $W := W^{(L)} \cdots W^{(1)}$ and $\bar \vv := W^{T} \vv,$ and recall that on a cell $\Omega_u$ the expression
$$
\Loss|_{ \overline \Omega_u}(\bomega) = \sum^{N}_{i=1} \mu^{(i)}\veps^{(i,u)} - \sum^{N}_{i=1} \mu^{(i)} \veps^{(i,u)} y^{(i)}\big( \langle \vv , \xx^{(i,L)} \rangle + c \big)
$$
defines the loss. The expressions
\begin{align*}
-\nabla_{ W^{(1)} } \Loss|_{ \overline \Omega_u} &= \sum^{N}_{i=1} (W^{(L)}\cdots W^{(2)} )^{T}\vv \otimes \xx^{(i)} \mu^{(i)} \veps^{(i,u)}y^{(i)}  \\
-\nabla_{ c } \Loss|_{ \overline \Omega_u} &= \sum^{N}_{i=1} \mu^{(i)} \veps^{(i,u)}y^{(i)} 
\end{align*}
then furnish the gradient of $\Loss$ with respect to the parameters $(W^{(1)},c)$ on the cell. By definition, it therefore follows that
\begin{align*}
\0 &= (W^{(L)}\cdots W^{(2)})^{T} \vv \otimes \zz \qquad \zz := \sum^{N}_{i=1} \mu^{(i)}\lambda^{(i)} y^{(i)} \xx^{(i)} , \quad \lambda^{(i)} := \sum_{u \in \mathcal{I}(\bomega)} \theta^{(u)} \veps^{(i,u)} \\
0 &= \sum^{N}_{i=1} \mu^{(i)}\lambda^{(i)}y^{(i)}
\end{align*}
a critical point, where the non-negative constants $\theta^{(u)}\geq 0$ sum to one. In particular, the coefficients $\lambda^{(i)} \in [0,1]$ lie in the unit interval. Multiplying by transpose of $W^{(1)}$ shows
$$
\0 = \bar \vv \otimes \zz
$$
at a critical point. This gives a dichotomy --- either $\zz$ vanishes or $\bar \vv$ vanishes.

Consider first the case where $\zz$ vanishes. Then
\begin{equation}\label{eq:subdifferyay}
\0 = \sum^{N}_{i=1} \mu^{(i)} \lambda^{(i)}y^{(i)} \xx^{(i)}  \qquad \text{and} \qquad 0 = \sum^{N}_{i=1} \mu^{(i)} \lambda^{(i)}y^{(i)}
\end{equation}
both hold. Moreover, for each $i \in [N]$ the coefficients $\lambda^{(i)}$ obey
\begin{equation}\label{eq:lamvalinter}
\lambda^{(i)} \in \begin{cases}
\{1\} & \text{if} \quad 1 - \big( \langle \bar \vv , \xx^{(i)} \rangle  + c \big) y^{(i)} > 0\\
[0,1] & \text{if} \quad 1 - \big( \langle \bar \vv , \xx^{(i)} \rangle  + c \big) y^{(i)} = 0\\
\{0\} & \text{if} \quad 1 - \big( \langle \bar \vv , \xx^{(i)} \rangle  + c \big) y^{(i)} < 0
\end{cases}
\end{equation}
as well. This follows from the observation that $\veps^{(i,u)} = 1$ for all cells $u \in \mathcal{I}(\bomega)$ in the first case, while $\veps^{(i,u)} = 0$ for all cells $u \in \mathcal{I}(\bomega)$ in the last case. Now for each $i \in [N]$ define the functions
$$
f^{(i)}( \ww , d ) := \mu^{(i)}\sigma\big( 1 - \big( \langle \ww , \xx^{(i)} \rangle  + d \big) y^{(i)} \big),
$$
which are clearly convex. The subdifferential $\partial f^{(i)}(\ww,d)$ of $f^{(i)}( \ww , d )$ at a point $(\ww,d)$ is easily computed as
$$
\partial f^{(i)}(\ww,d) = \begin{cases}
\{ -\mu^{(i)}y^{(i)}( \xx^{(i)} , 1 )^{T} \} & \text{if} \quad 1 - \big( \langle \ww , \xx^{(i)} \rangle  + d \big) y^{(i)} > 0\\
-[0,1]\mu^{(i)}y^{(i)}( \xx^{(i)} , 1 )^{T}& \text{if} \quad 1 - \big( \langle \ww , \xx^{(i)} \rangle  + d \big) y^{(i)} = 0\\
\{\0\} & \text{if} \quad 1 - \big( \langle \ww , \xx^{(i)} \rangle  + d \big) y^{(i)} < 0,
\end{cases}
$$
and so (\ref{eq:subdifferyay},\ref{eq:lamvalinter}) imply that the inclusion
$$
\0 \in \sum^{N}_{i=1} \partial f^{(i)}(\bar \vv , c)
$$
holds. As each $f^{(i)}(\ww,d)$ is Lipschitz, the composite function
$$
E(\ww,d) := \sum^{N}_{i=1} f^{(i)}(\ww,d)
$$
obeys the calculus rule
$$
\partial E(\ww,d) =  \sum^{N}_{i=1} \partial f^{(i)}(\ww,d)
$$
(c.f. \cite{borwein2010convex} theorem 3.3.5). Thus $\0 \in \partial  E(\bar \vv , c)$ and so $(\bar \vv,c)$ is a global minimizer.

It remains to address the case where $\bar \vv$ vanishes. Then as a function of $c$ the loss remains constant for all $c$ in the unit interval,
$$
E( \0 , c ) = \sum^{N}_{i=1} \mu^{(i)} \sigma\big( 1  - y^{(i)}c \big) = \frac{\sigma(1-c) + \sigma(1+c)}{2} = 1,
$$
and moreover if $c \notin [-1,1]$ then the convex function $E( \0 , c )$ attains its minimum in the unit interval. This follows from the equal mass hypothesis
$$
\sum_{ y^{(i)} = 1 } \mu^{(i)} = \sum_{ y^{(i)} = -1 } \mu^{(i)} = \frac1{2}
$$
on the weights. At a local minimum $\bomega$ where $\bar \vv$ vanishes the parameter $c$ must therefore lie in the unit interval. It therefore suffices to assume that $c \in (-1,1)$ without loss of generality. But then the loss $\Loss$ is differentiable (in fact smooth) near $\bomega,$ and so theorem 1 from \cite{laurent2017deep} yields the result in this case.
\end{proof}

 \begin{theorem*}[Theorem 4  from the paper] 
Consider the loss (9) with  $\alpha>0$ and data $\xx^{(i)}, i \in [N] $ that are linearly separable. Assume that $\bomega=(W,\vv,\bb,c)$ is any critical point of the loss in the Clarke sense. Then either $\vv = \0$ or $\bomega$ is a global minimum.
\end{theorem*}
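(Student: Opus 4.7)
The plan is to compute the Clarke subdifferential inclusion $\0 \in \partial_0 \Loss(\bomega)$ explicitly, exhibit the resulting system of equations involving the "activation scalars" of the inner and outer nonlinearities, and then exploit linear separability to force every data point to lie in the flat region of the hinge loss whenever $\vv \neq \0$.

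First I would introduce, for each data point $i$ and each hidden neuron $k$, a pair of scalars $\veps^{(i)} \in \partial_0 \sigma\big(1 - y^{(i)}\hat y^{(i)}\big) \subseteq [0,1]$ and $\lambda^{(i)}_k \in \partial_0 \sigma_\alpha\big((W\xx^{(i)}+\bb)_k\big) \subseteq [\alpha,1]$. These scalars are fully determined by $\bomega$ on the smooth region $\Omega \setminus \mathcal{N}$ and relax to intervals on $\mathcal{N}$; since the loss is Clarke regular (each piece is piecewise linear), the Clarke subdifferential at $\bomega$ may be written as the set of gradients obtained by letting $\veps^{(i)}$ and $\lambda^{(i)}_k$ range over the above intervals. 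Differentiating in $c$, $\vv$, $\bb$ and $W$ yields the four critical-point equations
\begin{align*}
(a)\ &\sum_i \mu^{(i)}\veps^{(i)}y^{(i)} = 0, \\
(b)\ &\sum_i \mu^{(i)}\veps^{(i)}y^{(i)}\,\sigma_\alpha(W\xx^{(i)}+\bb) = \0, \\
(c)\ &v_k\,\sum_i \mu^{(i)}\veps^{(i)}y^{(i)}\lambda^{(i)}_k = 0 \quad\text{for each } k, \\
(d)\ &v_k\,\sum_i \mu^{(i)}\veps^{(i)}y^{(i)}\lambda^{(i)}_k\,\xx^{(i)} = \0 \quad\text{for each } k,
\end{align*}
for some admissible choice of the scalars $\veps^{(i)}, \lambda^{(i)}_k$.

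Next, I would assume $\vv \neq \0$ and pick an index $k$ with $v_k \neq 0$. Equations $(c)$ and $(d)$ then strip off the $v_k$ factor and, appended with the trivial augmentation from $(a)$, give
\begin{equation*}
\sum_i \mu^{(i)}\veps^{(i)}y^{(i)}\lambda^{(i)}_k\,(\xx^{(i)},1) = \0.
\end{equation*}
(Here one uses both $(c)$ for the final coordinate and $(d)$ for the first $d$ coordinates; strictly speaking, since $(a)$ lacks the factor $\lambda^{(i)}_k$, I would instead rely on $(c)$ to supply the scalar equation, which is what the above display does.) By linear separability, there exists $(\ww^*,b^*)$ with $y^{(i)}(\langle\ww^*,\xx^{(i)}\rangle + b^*) > 0$ for every $i$. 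Taking the inner product of the displayed identity against $(\ww^*,b^*)$ gives
\begin{equation*}
\sum_i \mu^{(i)}\veps^{(i)}\lambda^{(i)}_k\,y^{(i)}\bigl(\langle\ww^*,\xx^{(i)}\rangle + b^*\bigr) = 0.
\end{equation*}

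The final step is the key conceptual one, and leverages the crucial assumption $\alpha > 0$. Every factor in each summand is nonnegative: $\mu^{(i)} > 0$, $\veps^{(i)} \in [0,1]$, $\lambda^{(i)}_k \geq \alpha > 0$, and $y^{(i)}(\langle\ww^*,\xx^{(i)}\rangle + b^*) > 0$. A sum of nonnegatives vanishes only if each summand vanishes, so $\veps^{(i)} = 0$ for every $i$. However $\veps^{(i)} = 0$ is a legal element of $\partial_0\sigma(1 - y^{(i)}\hat y^{(i)})$ only when $1 - y^{(i)}\hat y^{(i)} \leq 0$, which is exactly the condition that the $i$th hinge-loss contribution vanishes. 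Summing gives $\Loss(\bomega) = 0$, i.e.\ $\bomega$ is a global minimum.

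The main obstacle I anticipate is the bookkeeping for the Clarke subdifferential of the composed piecewise-linear loss: one must verify that the sum/chain rules apply with equality so that the system $(a)$–$(d)$ genuinely characterizes the critical-point condition. This follows because $\sigma$ and $\sigma_\alpha$ are Clarke regular and the compositions that produce $\Loss$ are finite sums and smooth compositions, but the argument must be spelled out carefully. Once the system is in hand, the rest of the proof is a one-line positivity argument, and it makes transparent why the conclusion fails for $\alpha = 0$: the factor $\lambda^{(i)}_k$ could vanish, breaking the positivity of every term and allowing nontrivial critical points with $\vv \neq \0$.
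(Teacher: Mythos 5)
Your proposal is correct and follows essentially the same route as the paper: extract the critical-point system from the Clarke subdifferential, strip off the factor $v_k$ for some $k$ with $v_k \neq 0$, and use linear separability together with $\lambda^{(i)}_k \geq \alpha > 0$ to force every hinge activation $\veps^{(i)}$ to vanish. Your separability step (pairing against a separating hyperplane and invoking positivity of every summand) is equivalent to the paper's phrasing that a convex combination of class-$1$ points would otherwise equal a convex combination of class-$(-1)$ points. The one soft spot is your justification of the system (a)--(d): the loss is \emph{not} Clarke regular in general (for instance $v_k\,\sigma_\alpha(\langle \ww_k,\xx\rangle + b_k)$ has a concave kink whenever $v_k<0$), so the independent relaxation of $\veps^{(i)}$ and $\lambda^{(i)}_k$ over their subdifferential intervals need not \emph{equal} $\partial_0\Loss(\bomega)$. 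Fortunately you only need the outer inclusion of $\partial_0\Loss(\bomega)$ in that relaxed set, and this holds directly from the definition of $\partial_0$ as a convex hull of limiting gradients; the paper makes it precise in lemma \ref{lem:subdiffcalc}, writing $\partial_0\Loss(\bomega)$ exactly as the convex hull of the finitely many per-cell gradients $\nabla\Loss|_{\overline\Omega_u}(\bomega)$ over cells adjacent to $\bomega$. Every such convex combination is of your form with $\veps^{(i)} = \sum_u \theta^{(u)}\veps^{(i,u)} \in [0,1]$ and $\lambda^{(i)}_k$ the corresponding weighted average of the $\lambda^{(i,u)}_k$, which lies in $[\alpha,1]$; with that substitution (or by working directly with the paper's coefficients $\vrho^{(i)}_k = \sum_u \theta^{(u)}\mu^{(i)}\veps^{(i,u)}\lambda^{(i,u)}_k$) your argument goes through unchanged.
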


The proof of theorem 4 relies on the following pair of auxiliary lemmas. The former gives an explicit description of the multilinear decomposition for the loss in a network with one hidden layer; the latter computes the Clarke subdifferential in terms of the decomposition of parameter space into cells.

\begin{lemma}[Decomposition with $L=1$]  \label{one_hidden}
Let
\begin{align*}
\Loss |_{\Omega_u}(W,\vv,\bb,c)=
    \phi_0^u(W, \vv)
+ \phi_1^{u}(\bb,\vv)
+ \phi_2^{u}(c)
+ \phi_3^u 
\end{align*}
denote the loss on a cell $\Omega_u$. For $k \in [K]$ define
\begin{align*}
&\aaa_k^{(u)} := \sum_{i} \mu^{(i)}  y^{(i)} \varepsilon^{(i,u)}  \lambda_k^{(i,u)} \xx^{(i)}    \qquad \alpha_k^{(u)} :=  \sum_{i} \mu^{(i)}  y^{(i)} \varepsilon^{(i,u)}  \lambda_k^{(i,u)} \\
&\gamma^{(u)} :=\sum_{i} \mu^{(i)}  y^{(i)} \varepsilon^{(i,u)}    \qquad \qquad \quad \;\;\,\, \delta^{(u)} :=  \sum_{i} \mu^{(i)} \varepsilon^{(i,u)}. 
\end{align*}
Then $\phi^{u}_{3} = \delta^{(u)}$ and $\phi_2^u(c)=      - \gamma^{(u)}   c,$ while the relations
\begin{equation*}
 \phi_0^u( W, \vv)= -\sum_k     v_k  \langle  \aaa^{(u)}_k ,  \ww_k  \rangle,\quad \text{and} \quad
\phi_1^u( \bb,\vv)= -\sum_k     v_k     \alpha^{(u)}_k     b_k  \quad 
\end{equation*}
furnish the multilinear forms defining the loss on $\Omega_u$.
\end{lemma}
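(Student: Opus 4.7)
The plan is to obtain the decomposition by direct expansion, using the fact that the signature function $\mathcal{S}(\bomega)$ is constant on $\Omega_u$ to replace every nonlinearity by the corresponding constant indicator. Specifically, on $\Omega_u$ we have $\sigma_\alpha'(W\xx^{(i)}+\bb)_k \equiv \lambda_k^{(i,u)} \in \{\alpha,1\}$ and $\sigma'(1-y^{(i)}(\vv^T\sigma_\alpha(W\xx^{(i)}+\bb)+c)) \equiv \varepsilon^{(i,u)} \in \{0,1\}$, and these are the very quantities appearing in the statement. Starting from the loss (9), I would write
\begin{equation*}
\Loss|_{\Omega_u}(W,\vv,\bb,c) = \sum_i \mu^{(i)} \varepsilon^{(i,u)} \Bigl[\,1 - y^{(i)}\bigl(\vv^T \Lambda^{(i,u)}(W\xx^{(i)}+\bb) + c\bigr)\Bigr],
\end{equation*}
where $\Lambda^{(i,u)} := \mathrm{diag}(\lambda_1^{(i,u)},\ldots,\lambda_K^{(i,u)})$. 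This identity already exhibits the loss as a sum of multilinear pieces; the remainder of the proof is just collecting coefficients.

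Next, I would separate the bracketed expression into four pieces corresponding to $W$, $\bb$, $c$, and the constant $1$. Using $\vv^T \Lambda^{(i,u)} W \xx^{(i)} = \sum_k v_k \lambda_k^{(i,u)} \langle \ww_k , \xx^{(i)}\rangle$ and exchanging the order of summation, the $W$-piece becomes
\begin{equation*}
-\sum_i \mu^{(i)} y^{(i)} \varepsilon^{(i,u)} \sum_k v_k \lambda_k^{(i,u)} \langle \ww_k,\xx^{(i)}\rangle = -\sum_k v_k \Bigl\langle \sum_i \mu^{(i)}y^{(i)}\varepsilon^{(i,u)}\lambda_k^{(i,u)}\xx^{(i)},\,\ww_k\Bigr\rangle,
\end{equation*}
which is precisely $-\sum_k v_k \langle \aaa_k^{(u)},\ww_k\rangle$ by the definition of $\aaa_k^{(u)}$. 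An identical rearrangement for the bias term gives $\vv^T\Lambda^{(i,u)}\bb = \sum_k v_k \lambda_k^{(i,u)} b_k$, yielding $-\sum_k v_k \alpha_k^{(u)} b_k$ after summing against $\mu^{(i)} y^{(i)} \varepsilon^{(i,u)}$. The output-bias piece is immediate: pulling $c$ out gives $-c \sum_i \mu^{(i)}y^{(i)}\varepsilon^{(i,u)} = -\gamma^{(u)} c$. The remaining constant term is $\sum_i \mu^{(i)}\varepsilon^{(i,u)} = \delta^{(u)}$.

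Assembling these four contributions recovers the claimed identities $\phi_0^u(W,\vv) = -\sum_k v_k \langle \aaa_k^{(u)},\ww_k\rangle$, $\phi_1^u(\bb,\vv) = -\sum_k v_k \alpha_k^{(u)} b_k$, $\phi_2^u(c) = -\gamma^{(u)} c$, and $\phi_3^u = \delta^{(u)}$. Each of $\phi_0^u$ and $\phi_1^u$ is manifestly bilinear in its two listed arguments (a sum of terms $v_k \cdot (\text{linear in } \ww_k \text{ or } b_k)$), matching the multilinear structure of Theorem 1 specialized to $L=1$. There is no serious obstacle here beyond careful bookkeeping; the only subtlety worth emphasizing is that the identification of $\sigma_\alpha'$ and $\sigma'$ with the constants $\lambda_k^{(i,u)}$ and $\varepsilon^{(i,u)}$ is legitimate precisely because the signature function is constant on $\Omega_u$, which is exactly the definition of a cell.
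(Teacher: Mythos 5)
Your proposal is correct and follows essentially the same route as the paper's own proof: restrict to the cell, replace $\sigma_\alpha$ and $\sigma$ by the constant diagonal factors $\Lambda^{(i,u)}$ and $\varepsilon^{(i,u)}$ (legitimate because the signature is constant on $\Omega_u$), expand, and use $\vv^T\Lambda^{(i,u)}W = \sum_k v_k\lambda_k^{(i,u)}\ww_k^T$ to collect the coefficients into $\aaa_k^{(u)}$, $\alpha_k^{(u)}$, $\gamma^{(u)}$, $\delta^{(u)}$. No gaps; the bookkeeping matches the paper's line by line.
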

\begin{proof}
Restricted to a cell $\Omega_u$ the loss can be written as
\begin{align*}
\Loss|_{\Omega_u}(W,\vv,\bb,c)&=\sum_i  \mu^{(i)} \sigma \left[ \;  -y^{(i)} \left\{ \vv^T  \sigma_{\alpha} ( W  \xx^{(i)}+\bb)  + c     \right\}  + 1  \right]\\
&= \sum_i \mu^{(i)}  \varepsilon^{(i,u)} \left[ \;  -y^{(i)} \left\{ \vv^T  \Lambda^{(i,u)} ( W  \xx^{(i)}+\bb)  + c     \right\}  + 1  \right].
\end{align*}
Expanding this expression leads to
\begin{align*}
\Loss|_{\Omega_u}&
= \sum_i\mu^{(i)}  \varepsilon^{(i,u)} \left[ \;  -y^{(i)}\vv^T  \Lambda^{(i,u)} W  \xx^{(i)} -y^{(i)}  \vv^T  \Lambda^{(i,u)} \bb   -y^{(i)} c      + 1  \right] \\
&= \sum_i  -\mu^{(i)}\varepsilon^{(i,u)}  y^{(i)}\vv^T  \Lambda^{(i,u)} W  \xx^{(i)} -\mu^{(i)}\varepsilon^{(i,u)}y^{(i)}  \vv^T  \Lambda^{(i,u)} \bb   - \mu^{(i)}\varepsilon^{(i,u)} y^{(i)} c      + \mu^{(i)} \varepsilon^{(i,u)}  \\
 &=  \phi_0^{{(u)}}( W,\vv) + \phi_1^{(u)}(\bb,\vv) + \phi_2^{(u)}(c) + \phi_3^{(u)}
\end{align*}
Now let $\ww_k$ denote the $k^{{\rm th}}$ row of the matrix $W$ and note that $\vv^T  \Lambda^{(i,u)} W = \sum_{k} v_k \lambda_k^{(i,u)} \ww_k^T$
to find
\begin{align*}
\phi_0^{(u)}( W,\vv) &=
-\sum_i \mu^{(i)}  \varepsilon^{(i,u)}  y^{(i)}\left( \sum_{k} v_k \lambda_k^{(i,u)} \ww^T_k \right)  \xx^{(i)}\\
&=-\sum_i  \sum_{k} \mu^{(i)} \varepsilon^{(i,u)}  y^{(i)}  v_k \lambda_k^{(i,u)} \langle  \ww_k ,  \xx^{(i)} \rangle \\
&= -  \sum_{k} v_k   \left\langle \sum_i  \mu^{(i)}\varepsilon^{(i,u)} y^{(i)} \lambda_k^{(i,u)} \xx^{(i)},  \ww_k \right\rangle \\
&= -  \sum_{k} v_k  \langle \aaa_k^{(u)},  \ww_k \rangle.
\end{align*}
A similar argument reveals  
\begin{align*}
\phi_1^{(u)}( \bb,\vv) &=- \sum_i \mu^{(i)}
\varepsilon^{(i,u)}y^{(i)}  \vv^T  \Lambda^{(i,u)} \bb \\
&=- \sum_i \mu^{(i)}
\varepsilon^{(i,u)}y^{(i)}   \sum_k\left( v_k  \lambda^{(i,u)}_k b_k \right)\\
&=- \sum_k   v_k  \left(  \sum_i \mu^{(i)}
\varepsilon^{(i,u)}y^{(i)}    \lambda^{(i,u)}_k \right) b_k \\
&=-  \sum_k   v_k  \alpha^{(u)}_k b_k \\
\phi_2^{(u)}(c) &= - \gamma^{(u)} c
\qquad 
\gamma^{(u)} :=
-  \sum_i \mu^{(i)}\varepsilon^{(i,u)} y^{(i)}
\end{align*}
as claimed.
\end{proof}

\begin{lemma}[Subdifferential Calculation]\label{lem:subdiffcalc}
Fix a point $\bomega \in \Omega$ and let $\mathcal{I}(\bomega)$ denote its incidence set. Then the convex hull
\begin{equation}\label{eq:clarkfin}
\partial_0 \Loss(\bomega) = \left\{ \sum_{ u \in \mathcal{I}(\bomega) } \theta^{(u)} \nabla \Loss|_{\overline\Omega_u}(\bomega) : \theta^{(u)}\geq 0,\; \sum_{ u } \theta^{(u)} = 1 \right\}
\end{equation}
is the Clarke subdifferential of the loss $\Loss$ at $\bomega$. In particular, if $\mathcal{I}(\bomega) = \{u\}$ is a singleton then $\partial_0 \Loss(\bomega)$ is single-valued: $\partial_0 \Loss(\bomega) = \{\nabla \Loss|_{\Omega_u}(\bomega)\}$.
\end{lemma}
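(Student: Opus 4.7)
The proof will proceed by identifying the set of limiting gradients in Definition 1 with the explicit set $\{\nabla \Loss|_{\overline{\Omega}_u}(\bomega) : u \in \mathcal{I}(\bomega)\}$, after which the convex hull operation gives \eqref{eq:clarkfin} directly. The plan is to exploit two facts established earlier: first, that $\Loss$ is locally Lipschitz on $\Omega$ (as a composition of Lipschitz affine maps with Lipschitz piecewise linear nonlinearities), so that the Clarke subdifferential is well-defined with $\mathcal{M} = \mathcal{N}$; and second, that by Theorem~1 the restriction $\Loss|_{\Omega_u}$ is a finite sum of multilinear forms, hence a polynomial that extends smoothly to the closure $\overline{\Omega}_u$. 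In particular, $\nabla \Loss|_{\overline{\Omega}_u}(\bomega)$ is well-defined for every $\bomega \in \overline{\Omega}_u$, even when $\bomega \in \mathcal{N}$.

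For the inclusion $\supseteq$, I would fix $u \in \mathcal{I}(\bomega)$, so that $\bomega \in \overline{\Omega}_u$. Because $\Omega_u$ is open by Lemma~1 and $\bomega$ lies in its closure, there exists a sequence $\bomega_k \in \Omega_u$ with $\bomega_k \to \bomega$; in particular $\bomega_k \notin \mathcal{N}$. On $\Omega_u$ we have $\nabla \Loss(\bomega_k) = \nabla \Loss|_{\Omega_u}(\bomega_k)$, and continuity of the polynomial gradient gives $\nabla \Loss(\bomega_k) \to \nabla \Loss|_{\overline{\Omega}_u}(\bomega)$. Thus every vector $\nabla \Loss|_{\overline{\Omega}_u}(\bomega)$ with $u \in \mathcal{I}(\bomega)$ is a limiting gradient.

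For the inclusion $\subseteq$, take any sequence $\bomega_k \to \bomega$ with $\bomega_k \notin \mathcal{N}$ such that $\nabla \Loss(\bomega_k) \to g$. Each $\bomega_k$ belongs to exactly one cell $\Omega_{u_k}$, and since $\{-1,1\}^{ND}$ is finite we may pass to a subsequence along which $u_k \equiv u$ is constant. For this subsequence $\bomega_k \in \Omega_u$ and $\bomega_k \to \bomega$, so $\bomega \in \overline{\Omega}_u$, i.e.\ $u \in \mathcal{I}(\bomega)$. Continuity of the polynomial gradient then forces $g = \lim_k \nabla \Loss|_{\Omega_u}(\bomega_k) = \nabla \Loss|_{\overline{\Omega}_u}(\bomega)$. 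Consequently the set of all limiting gradients equals exactly the finite collection $\{\nabla \Loss|_{\overline{\Omega}_u}(\bomega) : u \in \mathcal{I}(\bomega)\}$, and taking the convex hull yields \eqref{eq:clarkfin}. The singleton case is immediate: when $\mathcal{I}(\bomega) = \{u\}$ the set inside the convex hull has one element, so the subdifferential reduces to $\{\nabla \Loss|_{\Omega_u}(\bomega)\}$.

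The only subtle point, and the one to handle carefully, is verifying that the finite index set $\{-1,1\}^{ND}$ truly allows a pigeonhole extraction in the $\subseteq$ direction, and that cells $\Omega_{u}$ with $\bomega \notin \overline{\Omega}_u$ cannot contribute. The latter is handled by noting that if $\bomega \notin \overline{\Omega}_u$ then $\Omega_u$ is disjoint from some neighborhood of $\bomega$, so any sequence $\bomega_k \in \Omega_u$ must stay bounded away from $\bomega$; hence only cells with $u \in \mathcal{I}(\bomega)$ can appear infinitely often in the index sequence $u_k$. Beyond this bookkeeping, the argument is a direct consequence of the polynomial structure afforded by Theorem~1 together with the topological properties established in Lemma~1.
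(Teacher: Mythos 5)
Your proposal is correct and follows essentially the same route as the paper's proof: pass to a subsequence on which the (finitely many) cell indices are constant, use that $\nabla\Loss|_{\Omega_u}$ is a continuous (polynomial) gradient extending to $\overline\Omega_u$, and invoke the definition of the Clarke subdifferential with $\mathcal{M}=\mathcal{N}$. Your treatment is in fact slightly more complete, since you verify the inclusion $\supseteq$ explicitly (that every $u\in\mathcal{I}(\bomega)$ is realized by some sequence), which the paper leaves implicit.
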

\begin{proof}
For a given point $\bomega \in \Omega$ recall that $\mathcal{I}(\bomega)$ denotes the set of indices of the cells that are adjacent to the point $\bomega$,
$$
\mathcal{I}(\bomega) := \left\{ u \in \{-1,1\}^{ND} : \bomega \in \overline\Omega_{u} \right\}
$$
where $\overline\Omega_u$ stands for the closure of the cell $\Omega_u$. Assume   $\bomega_k \to \bomega$ and $\bomega_k \notin \mathcal{N}$. As $\mathcal{I}(\bomega)$ is clearly finite it suffices to assume, by passing to a subsequence if necessary, that $\bomega_k \in \Omega_u$ for some $u \in \mathcal{I}(\bomega)$ and all $k$ sufficiently large. But then $\nabla \Loss(\bomega_k) = \nabla \Loss|_{\Omega_u}(\bomega_k)$,  and since
$\nabla \Loss|_{\Omega_u}$ is a continuous function (i.e. a sum of multilinear gradients) the limit $\nabla \Loss(\bomega_k)  \to \nabla \Loss|_{\overline \Omega_u}(\bomega)$ follows. As $\mathcal{N}$ has measure zero definition 2 reveals that
  the convex hull
\begin{equation*}
\partial_0 \Loss(\bomega) = \left\{ \sum_{ u \in \mathcal{I}(\bomega) } \theta^{(u)} \nabla \Loss|_{\overline\Omega_u}(\bomega) : \theta^{(u)}\geq 0,\; \sum_{ u } \theta^{(u)} = 1 \right\}
\end{equation*}
is the Clark subdifferential at  $\bomega$. In particular,  if $\bomega \in \Omega_u$ for some $u$ then $\mathcal{I}(\bomega) = \{u\}$  and  \eqref{eq:clarkfin} simply becomes $\partial_0 \Loss(\bomega)=\{
 \nabla \Loss|_{\Omega_u}(\bomega)\}$ as expected.
\end{proof}

\begin{proof}[Proof of Theorem 6]
By assumption $\0 \in \partial_{0}\Loss(\bomega)$ and so
\begin{equation} \label{pipi}
\0 = \sum_{ u \in \mathcal{I}(\bomega) }  \theta^{(u)} \nabla \Loss|_{\overline\Omega_u}(\bomega)
\end{equation}
for some collection of positive coefficients $\theta^{(u)}$ due to the characterization \eqref{eq:clarkfin} of the subdifferential. The explicit formulas from lemma \ref{one_hidden} show
$$
\frac{\partial\Loss|_{\Omega_u}}{\partial \ww_k}(\bomega)= - v_k \aaa_k^{(u)} \qquad \text{and} \qquad 
\frac{\partial\Loss|_{\Omega_u}}{\partial b_k}(\bomega)=-v_k \alpha_k^{(u)},
$$
which by \eqref{pipi} then obviously implies that both
$$
 \sum_{ u \in \mathcal{I}(\bomega) } \theta^{(u)}  v_k \aaa_k^{(u)}= \0  \qquad \text{and} \qquad 
  \sum_{ u \in \mathcal{I}(\bomega) }  \theta^{(u)}  v_k \alpha_k^{(u)}=0 
$$
must hold for all $k$. Substituting the expressions $\aaa_k^{(u)}$ and $b_k$ provided in lemma 2 then gives the equalities
\begin{align*}
\0 &= v_k\left( \sum_{u} \sum_{ i : y^{(i)} = 1 } \theta^{(u)} \mu^{(i)}\veps^{(i,u)} \lambda^{(i,u)}_k\xx^{(i)} - \sum_{u} \sum_{ i : y^{(i)} = -1 } \theta^{(u)} \mu^{(i)}\veps^{(i,u)} \lambda^{(i,u)}_k\xx^{(i)}\right)\\
0 &= v_k\left( \sum_{u} \sum_{ i : y^{(i)} = 1 } \theta^{(u)} \mu^{(i)} \veps^{(i,u)} \lambda^{(i,u)}_k - \sum_{u} \sum_{ i : y^{(i)} = -1 } \theta^{(u)} \mu^{(i)} \veps^{(i,u)} \lambda^{(i,u)}_k\right)
\end{align*}
 If there exists a $k$ for which $v_k \neq 0$ then an interchange of summations reveals
\begin{align} \label{s1}
\sum_{ i : y^{(i)} = 1 } \vrho^{(i)}_k \xx^{(i)} &= \sum_{ i : y^{(i)} = -1 } \vrho^{(i)}_k \xx^{(i)}\\
\sum_{ i : y^{(i)} = 1 } \vrho^{(i)}_k  &= \sum_{ i : y^{(i)} = -1 } \vrho^{(i)}_k \qquad \text{where} \qquad \vrho^{(i)}_{k} := \sum_{u} \theta^{(u)} \mu^{(i)}\veps^{(i,u)} \lambda^{(i,u)}_k  \label{s2}
\end{align}
The claim then follows since (\ref{s1},\ref{s2}) cannot hold unless all the $\vrho^{(i)}_k$ vanish. To see this, 
note that if the $\vrho^{(i)}_k$ do not vanish then 
$$ 0 < Q :=\sum_{ i : y^{(i)} = 1 } \vrho^{(i)}_k = \sum_{ i : y^{(i)} = -1 } \vrho^{(i)}_k,$$
and upon dividing by $Q$ the equality \eqref{s1} implies
$$
\sum_{ i : y^{(i)} = 1 } \left(\frac{\vrho^{(i)}_k}{ Q} \right)  \;\; \xx^{(i)} = \sum_{ i : y^{(i)} = -1 } \left(\frac{\vrho^{(i)}_k}{Q}\right) \;\;  \xx^{(i)}.
$$
In other words a convex combination of data points of class $1$ equals a convex combination of  data points of class $-1$, a contradiction since the data points are linearly separable.

The theorem then easily follows. If $v_k \neq 0$ for some $k$ then all $\vrho^{(i)}_{k}$ must vanish. But $\mu^{(i)}>0$ (by definition) and $\lambda^{(i,u)}_k>0$ (since $\alpha > 0$). The $\theta^{(u)}$'s are nonnegative and sum to $1$ and so at least one of them is strictly positive, say $\theta^{(u_0)}>0$. By \eqref{s2} it is then clear that   $\vrho^{(i)}_k=0$ for all $i \in [N]$ if and only if $\veps^{(i,u_0)}=0$ for all $i \in [N]$. Recall by definition that
$$
\veps^{(i,u_0)} = \sigma'\left( \;  1 - y^{(i)}  \hat y^{(i)}   \;\right) 
$$
inside the cell $\Omega_{u_0}$, and so $\veps^{(i,u_0)}=0$ for all $i \in [N]$ implies that 
$$
0 = \sigma\left( \;  1 - y^{(i)}  \hat y^{(i)}   \;\right) 
$$
for all $i \in [N]$ as well. Thus $\Loss|_{\Omega_{u_0}}=0$, and since $\bomega\in \overline \Omega_{u_0}$ the continuity of the loss implies $\Loss(\bomega)=0$ as well.
\end{proof}

\begin{theorem*}[Theorem 5  from the paper]
Consider the loss (9) with  $\alpha>0$ and data $\xx^{(i)}, i \in [N]$ that are linearly separable. Assume that the $\mu^{(i)}$ weight both classes equally. Then every local minimum of $\Loss(\bomega)$ is a global minimum.
\end{theorem*}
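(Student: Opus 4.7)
The plan is to leverage Theorem 4 (which we have just proven): at any critical point $\bomega = (W,\vv,\bb,c)$ in the Clarke sense, either $\vv = \0$ or $\bomega$ is a global minimum. A local minimum is in particular a critical point, so it suffices to rule out local minima of the first kind. Assume therefore that $\bomega = (W,\0,\bb,c)$ is a local minimum. With $\vv = \0$ the network output collapses to the constant $c$ and the equal-weight hypothesis reduces the loss to
\[
\Loss(\bomega) \;=\; \tfrac{1}{2}\sigma(1-c) + \tfrac{1}{2}\sigma(1+c),
\]
which equals $1$ when $c \in [-1,1]$ and strictly exceeds $1$ otherwise. A one-sided perturbation of $c$ alone strictly decreases the loss whenever $c \notin [-1,1]$, so the local minimum property forces $c \in [-1,1]$ and $\Loss(\bomega) = 1$.

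I would next treat the case $c \in (-1,1)$ directly. For $(\tilde W, \tilde\vv, \tilde\bb, \tilde c)$ sufficiently close to $(W,\0,\bb,c)$, every margin $1 - y^{(i)}(\tilde\vv^T \sigma_\alpha(\tilde W \xx^{(i)} + \tilde\bb) + \tilde c)$ remains strictly positive, so the outer hinge acts as the identity. The equal-weight cancellation $\sum_i \mu^{(i)} y^{(i)} = 0$ then kills the $\tilde c$-dependence and collapses the loss to the affine expression
\[
\Loss(\tilde W, \tilde\vv, \tilde\bb, \tilde c) \;=\; 1 - \tilde\vv^T \psi(\tilde W, \tilde\bb), \qquad \psi(\tilde W, \tilde\bb) := \sum_i \mu^{(i)} y^{(i)} \sigma_\alpha(\tilde W \xx^{(i)} + \tilde\bb).
\]
The local-minimum inequality $\Loss \geq 1$ combined with the freedom to scale $\tilde\vv$ in the $\pm\psi(\tilde W, \tilde\bb)$ direction then forces $\psi(\tilde W, \tilde\bb) \equiv \0$ on an entire \emph{open} neighborhood of $(W,\bb)$ in parameter space. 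At any generic point of this neighborhood, that is, any $(\tilde W, \tilde\bb)$ where no $\tilde W_k \cdot \xx^{(i)} + \tilde b_k$ vanishes, each coordinate $\psi_k$ is smooth and identically zero, so its gradient with respect to $(\tilde W_k, \tilde b_k)$ vanishes. Because $\alpha > 0$ the resulting slopes $\lambda^{(i)}_k = \sigma_\alpha'(\tilde W_k \cdot \xx^{(i)} + \tilde b_k) \in \{1,\alpha\}$ are strictly positive, and the vanishing gradient yields
\[
\sum_{y^{(i)} = 1} \mu^{(i)}\lambda^{(i)}_k \xx^{(i)} = \sum_{y^{(i)} = -1} \mu^{(i)}\lambda^{(i)}_k \xx^{(i)}, \qquad \sum_{y^{(i)} = 1} \mu^{(i)}\lambda^{(i)}_k = \sum_{y^{(i)} = -1} \mu^{(i)}\lambda^{(i)}_k.
\]
These are precisely the equalities from the end of the proof of Theorem 4: they express a convex combination of the positive class as a convex combination of the negative class, contradicting linear separability.

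To handle the boundary case $c = \pm 1$, I would reduce to the previous paragraph by translating $c$ slightly. Suppose $\bomega = (W,\0,\bb,1)$ is a local minimum with $\Loss(\bomega) = 1$, so $\Loss \geq 1$ on some ball $B_\epsilon(\bomega)$. For any $0 < \delta < \epsilon$ the point $\bomega' := (W,\0,\bb,1-\delta)$ still has $\Loss(\bomega') = 1$ by the explicit formula above, lies in $B_\epsilon(\bomega)$, and by the triangle inequality its own ball $B_{\epsilon-\delta}(\bomega')$ sits inside $B_\epsilon(\bomega)$; therefore $\bomega'$ is itself a local minimum, now with $c' = 1-\delta \in (-1,1)$. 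Applying the preceding paragraph to $\bomega'$ delivers the contradiction, and $c = -1$ is handled symmetrically. I expect the delicate step to be the assertion that $\psi(\tilde W, \tilde\bb)$ vanishes on an \emph{open} set (and not merely at the single point $(W,\bb)$), since it is this that opens the door to differentiation at generic points and to the reuse of the Theorem 4 machinery; the ingredients making it work are that $c \in (-1,1)$ keeps the outer hinge in its linear regime throughout a full neighborhood and that the equal-class-weight hypothesis annihilates the first-order contribution in the $\tilde c$ direction.
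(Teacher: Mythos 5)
Your proposal is correct, and its core mechanism differs from the paper's. The paper does not invoke Theorem 4 at all: its Step 1 re-derives $\vv=\0$ from local minimality alone via a first perturbation lemma (perturbing $(\ww_k,b_k)$ along the separating direction $(\qq,\beta)$), then shows $c\in\{-1,1\}$ by combining a second perturbation (in $\vv$ alone, which yields exactly your identity $\psi(W,\bb)=\0$ \emph{at the point}) with a third, joint perturbation of $(\ww_k, v_k, b_k)$ along $(\qq,1,\beta)$ that produces a strict second-order decrease $-t^2 m \sum_i \mu^{(i)}\veps^{(i,u)}\lambda_k^{(i,u)}$; its Step 2 then kills $c=\pm1$ with the same translation trick you use. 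You instead import $\vv=\0$ from Theorem 4 (legitimate, since local minima are Clarke critical points), and for $c\in(-1,1)$ you exploit the fact that all margins stay strictly positive on a full neighborhood, so the loss is exactly the affine function $1-\tilde\vv^{T}\psi(\tilde W,\tilde\bb)$ there; this upgrades the paper's pointwise identity $\psi(W,\bb)=\0$ to $\psi\equiv\0$ on an open set, after which differentiation at a generic point gives the same forbidden equality of convex combinations of the two classes. What the paper's route buys is a self-contained descent-direction construction that works directly at non-differentiable points and reuses the multilinear cell decomposition already built for Theorem 4; what your route buys is brevity and the elimination of any explicit descent direction --- the only nonsmooth input is Theorem 4 itself, and the rest is elementary calculus on an identity that holds on an open set. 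You correctly flag the one delicate point (openness of the set where $\psi$ vanishes), and your justification of it --- $c\in(-1,1)$ keeps the hinge linear on a neighborhood and the equal-weight hypothesis cancels the $\tilde c$ term --- is sound.
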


The proof of theorem 5 relies on the following four auxiliary lemmas.
\begin{lemma} \label{piecewise_polynomial}
Let $\real=I_1 \cup \ldots \cup I_N$ be a partition of the real line into a finite number of non-empty intervals. Let $f(t)$ be a function defined by
 $$
 f(t)= P_j(t) \quad \text{if } t \in I_j,
 $$
 where the $P_1(t), \dots, P_N(t)$ are polynomials. Then there exists  $t_0>0$ such that the function
 $t \mapsto \sign (f(t))$ is constant on $(0,t_0)$. 
\end{lemma}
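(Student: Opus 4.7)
The plan is to reduce the problem to analyzing a single polynomial on a right-neighborhood of $0$. The first step is to produce an index $j^{*}$ and a $\delta>0$ such that $(0,\delta)\subseteq I_{j^{*}}$. Since $N$ is finite, the sequence $j(n)$ defined by $1/n\in I_{j(n)}$ takes some value $j^{*}$ for infinitely many $n$, and so $I_{j^{*}}$ contains a sequence of positive reals tending to $0$. Being an interval, $I_{j^{*}}$ must then contain an entire right-neighborhood $(0,\delta)$ of $0$: its infimum is $\leq 0$ and it contains positive points, so connectedness forces it to contain every sufficiently small positive number.

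Once this reduction is in hand, $f(t)=P_{j^{*}}(t)$ for all $t\in(0,\delta)$. If $P_{j^{*}}\equiv 0$ then $\sign(f)\equiv 0$ on $(0,\delta)$ and we take $t_{0}:=\delta$. Otherwise $P_{j^{*}}$ is a nonzero polynomial and has only finitely many real zeros; let $\rho>0$ be strictly smaller than the smallest positive zero (or take $\rho:=\delta$ if $P_{j^{*}}$ has no positive zero). Setting $t_{0}:=\min(\delta,\rho)$ keeps $P_{j^{*}}$ nonvanishing and hence of constant sign on $(0,t_{0})$ by the intermediate value theorem, so $\sign(f)$ is constant on that interval.

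The main obstacle, if one is willing to call it that, is the first step: one must carefully justify that a whole right-neighborhood of $0$ lies inside a single piece of the partition. This is where the finiteness of the partition is essential, as the argument would fail for an infinite partition such as $I_{n}=[1/(n+1),1/n)$. Everything after that first step is the standard fact that a univariate polynomial has finitely many real zeros.
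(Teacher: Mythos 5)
Your proof is correct and follows essentially the same route as the paper's: locate a single piece $I_{j^{*}}$ containing a right-neighborhood $(0,\delta)$ of the origin, then use the fact that the nonzero polynomial $P_{j^{*}}$ has finitely many roots. The only difference is that you supply a pigeonhole-plus-connectedness justification for the first step, which the paper simply asserts without proof.
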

\begin{proof}
First note that there exists a $t^*$ such that the interval $(0,t^*)$ is contained in one of the intervals $I_j$. On this interval $(0,t^*)$ the function $f(t)$ is simply the polynomial $P_j(t)$. If $P_j(t)$ is the zero polynomial,  then  $\sign (f(t))=0$ for all $t \in (0,t^*)$ and choosing $\tau=t^*$ leads to the claimed result. If $P_j(t)$ is a non-trivial polynomial, it has either no roots on $(0,t^*)$ or a finite number of roots on $(0,t^*)$. In the first case  $\sign (f(t))$ is clearly constant on $(0,t^*)$ and so choosing $\tau=t^*$ gives the claim. In the second case simply choose $\tau$ to be the first root of $P_j(t)$ that is larger than $0$.
\end{proof}

Before turning to the remaining three auxiliary lemmas it is beneficial to recall the decomposition
\begin{gather} \label{tour1}
\Loss |_{\Omega_u}(W,\vv,\bb,c)=
    \phi_0^u(W, \vv)
+ \phi_1^{u}(\bb,\vv)
+ \phi_2^{u}(c)
+ \phi_3^u  \\
 \phi_0^u( W, \vv)= -\sum_k     v_k  \langle  \aaa^{(u)}_k ,  \ww_k  \rangle,\quad
\phi_1^u( \bb,\vv)= -\sum_k     v_k     \alpha^{(u)}_k     b_k,  
 \quad \phi_2^u(c)=      - \gamma^{(u)}   c ,  \label{tour2}
\end{gather}
for the loss on a cell, as well as the constants
\begin{align}
&\aaa_k^{(u)} := \sum_{i} \mu^{(i)}  \varepsilon^{(i,u)}  \lambda_k^{(i,u)} y^{(i)}  \xx^{(i)}    \qquad \alpha_k^{(u)} :=  \sum_{i} \mu^{(i)}   \varepsilon^{(i,u)}  \lambda_k^{(i,u)} y^{(i)} \label{cst1}\\
&\gamma^{(u)} :=\sum_{i} \mu^{(i)}  y^{(i)} \varepsilon^{(i,u)}    \qquad \qquad \qquad \;\, \phi^{u}_{3} :=  \sum_{i} \mu^{(i)} \varepsilon^{(i,u)}. \label{cst2}
\end{align}
used to define the decomposition. By assumption the data $\x^{(i)},i \in [N]$ are linearly separable and so there exists a unit vector $\qq \in \real^{d}$, a bias $\beta \in \real$ and a margin $m>0$ such that the family of inequalities
\begin{align}
\langle \;  \qq \; , \;  y^{(i)} \xx^{(i)} \; \rangle  +  \beta y^{(i)}    \ge m  \label{lin_sep}
\end{align}
hold. Combining \eqref{cst1} with \eqref{lin_sep} gives the estimate
\begin{equation}
\langle  \qq, \aaa_k^{(u)} \rangle +  \beta \alpha_k^{(u)} \ge m   \sum_{i} \mu^{(i)} \varepsilon^{(i,u)}  \lambda_k^{(i,u)},
\label{estimate}
\end{equation}
that will be used repeatedly when proving the remaining auxiliary lemmas.

\begin{lemma}[First perturbation]\label{perturbation1} Let $\bomega=(W,\vv,\bb,c) \in \Omega$ denote any point in the parameter space. Define
$$
\tilde W= \text{ \rm sign}(v_k) \; {\bf e}_k \otimes \qq  \qquad \text{ and } \qquad  \tilde \bb= \beta \,\text{\rm sign}(v_k)\, \ee_k.
$$
For $t \in \R$ let $\bomega(t):=(W+t \tilde W,\vv,\bb+t \tilde\bb,c)$ denote a corresponding perturbation of $\bomega$. Then
\begin{enumerate}[label=(\roman*)]
\item There exists $t_0>0$ and $u \in  \mathcal{I}(\bomega) $ such that $\bomega(t) \in \overline\Omega_u$ for all $t \in [0,t_0).$
\item 
$\Loss(\bomega) \ge  \Loss(\bomega(t))+ t |v_k| m  \sum_{i} \mu^{(i)} \varepsilon^{(i,u)}  \lambda_k^{(i,u)} $ for all $t \in [0,t_0).$
\end{enumerate}
\end{lemma}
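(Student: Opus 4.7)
The plan is to prove (i) by applying the piecewise-polynomial lemma \ref{piecewise_polynomial} to each coordinate of the signature function $\mathcal{S}(\bomega(t))$, and then to prove (ii) by exploiting the multilinear decomposition of $\Loss|_{\overline\Omega_u}$ from lemma \ref{one_hidden} together with the linear-separability estimate \eqref{estimate}.

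For part (i), first I would observe that the perturbation $(\tilde W, \tilde \bb)$ only affects the $k$-th row of $W$ and the $k$-th entry of $\bb$. Hence, for $j \neq k$, the $j$-th hidden-layer signature $\sss^{(i,1)}_j(\bomega(t))$ is constant in $t$. For $j = k$, it equals $\sign(\langle \ww_k, \xx^{(i)}\rangle + b_k + t(\langle \qq, \xx^{(i)}\rangle\,\text{sign}(v_k) + \beta\,\text{sign}(v_k)))$, which is the sign of a polynomial (in fact linear) in $t$. Similarly, each component of the network output $\hat y^{(i)}$ is a piecewise-linear (hence piecewise-polynomial) function of $t$, so the output signature $\sss^{(i,2)}(\bomega(t))$ is also the sign of a piecewise-polynomial function. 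Applying lemma \ref{piecewise_polynomial} to each of the $N(K+1)$ coordinates of $\mathcal{S}(\bomega(t))$ yields a common $t_0>0$ on which every coordinate of $\mathcal{S}(\bomega(t))$ is constant. Call this constant value $u$; then $\bomega(t) \in \Omega_u$ for $t \in (0, t_0)$, and since $\bomega(0) = \bomega$ is the limit from within $\Omega_u$, we obtain $\bomega \in \overline\Omega_u$, i.e.\ $u \in \mathcal{I}(\bomega)$.

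For part (ii), I would use the decomposition from lemma \ref{one_hidden} on the cell $\overline\Omega_u$ identified above. Since only $\ww_k$ and $b_k$ are perturbed and $\vv, c$ are fixed, only the terms $-v_k\langle \aaa_k^{(u)}, \ww_k\rangle$ in $\phi_0^u$ and $-v_k\alpha_k^{(u)} b_k$ in $\phi_1^u$ change. Substituting $\ww_k \to \ww_k + t\,\text{sign}(v_k)\qq$ and $b_k \to b_k + t\,\text{sign}(v_k)\beta$ and using $v_k\,\text{sign}(v_k) = |v_k|$ gives
\begin{equation*}
\Loss|_{\overline\Omega_u}(\bomega(t)) = \Loss|_{\overline\Omega_u}(\bomega) - t|v_k|\bigl(\langle \qq, \aaa_k^{(u)}\rangle + \beta\,\alpha_k^{(u)}\bigr).
\end{equation*}
Next I would invoke the estimate \eqref{estimate}, namely $\langle \qq, \aaa_k^{(u)}\rangle + \beta\,\alpha_k^{(u)} \ge m\sum_i \mu^{(i)}\varepsilon^{(i,u)}\lambda_k^{(i,u)}$, which follows from the linear separability \eqref{lin_sep}. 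Because $\bomega \in \overline\Omega_u$ and $\bomega(t) \in \overline\Omega_u$ for all $t \in [0,t_0)$, the continuous extensions $\Loss|_{\overline\Omega_u}$ agree with $\Loss$ at these points, so substituting yields precisely the asserted inequality.

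The main obstacle, though quite mild, is the rigorous justification in part (i) that every entry of $\mathcal{S}(\bomega(t))$ is the sign of a piecewise polynomial; once that is recognized, lemma \ref{piecewise_polynomial} immediately supplies a single interval $(0,t_0)$ on which the entire signature is constant. Part (ii) is then a direct algebraic substitution combined with the separability estimate.
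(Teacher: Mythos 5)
Your proposal is correct and follows essentially the same route as the paper's own proof: part (i) reduces each signature coordinate to the sign of a piecewise polynomial in $t$ and invokes lemma \ref{piecewise_polynomial} to freeze the signature on a common interval $(0,t_0)$, and part (ii) expands the perturbation through the multilinear decomposition of lemma \ref{one_hidden} and applies the separability estimate \eqref{estimate}. The only minor imprecision is your claim that $\bomega(t)\in\Omega_u$ on $(0,t_0)$: if some signature entry is identically zero there, $\bomega(t)$ lies on the boundary of a cell rather than in its interior, but the needed conclusion $\bomega(t)\in\overline\Omega_u$ still holds, exactly as the paper notes.
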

\begin{proof} 
To prove (i) let $\bomega(t)=(W+t \tilde W,\vv,\bb+t \tilde\bb,c)=(W(t),\vv, \bb(t), c )$ denote the perturbation considered in the lemma. Without loss of generality, it suffices to consider the case $k=1$. Then the first row of $W(t)$ and the first entry of $\bb(t)$ are given by
$$
\ww_1(t)=\ww_1+ t \text{\rm sign}(v_1) \qq,  \qquad b_1(t) =  b_1 + t \text{\rm sign}(v_1) \beta
$$
whereas the other rows and entries remains unchanged,
$$
\ww_k(t)=\ww_k, \qquad \text{ and } \qquad  b_k(t) =  b_k  \qquad \text{for  }k \ge 2.
$$
Define the constants $A_k^{(i)}$ and $B_k^{(i)}$ as
\begin{align*}
&\langle \ww_1(t),  \xx^{(i)} \rangle  + b_1(t) =  \langle \ww_1,  \xx^{(i)} \rangle  + b_1  + t \text{\rm sign}(v_1) \left( \langle \qq, \xx^{(i)} \rangle + \beta \right) = A_1^{(i)} + B_1^{(i)} t \\
& \langle \ww_k,  \xx^{(i)} \rangle  + b_k = A_k^{(i)}  \qquad \text{for  }k \ge 2,
\end{align*}
 so that the signature  functions can be written as:
\begin{align*}
\sss^{(i,1)}_1(\bomega(t))&= \sign(A_1^{(i)} + B_1^{(i)} t)  \\
\sss^{(i,1)}_k(\bomega(t))&= \sign( A^{(i)}_k)  \qquad \text{for }k \ge 2     \\
\sss^{(i,2)}(\bomega(t))&=
\sign \left[ \; 1 -y^{(i)} \left\{ c+ v_1 \sigma_\alpha( A_1^{(i)} + B_1^{(i)} t))  +       \sum_{k=2}^{K} v_k  \sigma_\alpha ( A_k^{(i)})    \right\}    \right]. 
\label{ccc}
\end{align*}
The functions appearing inside the $\sign$ functions are clearly piecewise defined polynomials, and therefore lemma \ref{piecewise_polynomial} implies that there exists $t_0>0$ such that  $t \mapsto \Sc( \bomega(t))$ is constant on $(0,t_0)$. 
This implies that for $t \in (0,t_0)$,  $\bomega(t)$ either remains in a fixed cell $\Omega_u$ (if none of the entries of $\Sc( \bomega(t))$ are equal to 0) or on the boundary of a fixed cell $\Omega_u$  (if some of the entries of $\Sc( \bomega(t))$ are equal to 0). In both cases we have that  $\bomega(t)\in \overline\Omega_u$ for all $t \in (0,t_0)$. Since $\bomega(t)$ is continuous and since $\overline\Omega_u$ is closed, $\bomega(t)\in \overline{\Omega_u}$ for all $t \in [0,t_0)$ and so (i) holds.  To prove (ii), first note that due to the continuity of the loss, equality \eqref{tour1} holds not only for $\bomega \in \Omega_u$, but also for any $\bomega \in \overline\Omega_u$. By part (i), $\bomega(t)$ remains in some fixed  $\overline \Omega_u$ for all $t$ small enough. Thus (\ref{tour1}-\ref{tour2}) apply. The bilinearity of $\phi^{u}_0$ and $\phi^{u}_1$ then yield
\begin{equation*}
\Loss(\bomega(t))-\Loss(\bomega)=  t  \phi^u_0(\tilde W , \vv) +
t \phi^u_1(\tilde \bb , \vv)  
=  -t|v_k| \big( \langle \aaa_k^{(u)}, \qq \rangle + \alpha_k^{(u)} \beta \big),
\end{equation*}
which combined with  \eqref{estimate} proves (ii). 
\end{proof}

\begin{lemma}[Second perturbation]\label{perturbation2} Let $\bomega=(W, \vv,\bb,c) \in \Omega$ denote a point in the parameter space. Assume that $\vv=\0$ and $c \notin \{-1,+1\}$. Define $$\tilde \vv= \ee_k.$$
For $t \in \R$ let $\bomega(t):=(W,  \vv+t \tilde \vv ,\bb,c)$ denote a corresponding perturbation of $\bomega$. Then
\begin{enumerate}[label=(\roman*)]
\item There exists $t_0>0$ such that $\bomega(t) \in \overline \Omega_u$ for all $u \in \mathcal{I}(\bomega)$ and all  $t \in (-t_0,t_0)$.
\item 
$\Loss(\bomega(t)) - \Loss(\bomega) = - t \left( \langle  \aaa^{(u)}_k ,  \ww_k  \rangle - \alpha_k^u b_k \right)$ for all $t \in (-t_0,t_0)$ and all $u \in \mathcal{I}(\bomega)$.
\end{enumerate}
\end{lemma}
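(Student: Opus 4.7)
The plan is to exploit the fact that the perturbation $\bomega \mapsto \bomega(t)$ touches only the output weights $\vv$. In particular the pre-activations $W\xx^{(i)}+\bb$ of the hidden layer are independent of $t$, so the first-layer signatures $\sss^{(i,1)}(\bomega(t))$ are literally constant in $t$. Part (i) therefore reduces to controlling the output-layer signature $\sss^{(i,2)}(\bomega(t)) = \sign(1 - y^{(i)}\hat y^{(i)}(t))$. Since $\vv = \0$ at $\bomega$, one has $\hat y^{(i)}(\bomega) = c$ and by hypothesis $c \notin \{-1,1\}$, so $1 - y^{(i)} c \ne 0$ for every $i\in[N]$. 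Under the perturbation this quantity becomes $(1 - y^{(i)}c) - t y^{(i)} \sigma_\alpha(\langle \ww_k,\xx^{(i)}\rangle + b_k)$, which by continuity retains its sign for $|t|$ sufficiently small; taking the minimum tolerance across the $N$ data points gives a uniform $t_0 > 0$ with $\sss^{(i,2)}(\bomega(t)) = \sss^{(i,2)}(\bomega)$ for every $i$ and every $|t| < t_0$. Combined with the first-layer observation this yields $\Sc(\bomega(t)) = \Sc(\bomega)$ exactly on $(-t_0,t_0)$, which in turn forces $\mathcal{I}(\bomega(t)) = \mathcal{I}(\bomega)$ and hence $\bomega(t) \in \overline\Omega_u$ for every $u \in \mathcal{I}(\bomega)$, proving (i).

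For part (ii), I would fix any $u\in\mathcal{I}(\bomega)$ and invoke the multilinear decomposition in lemma \ref{one_hidden}. By continuity of $\Loss$ on $\overline\Omega_u$ and the fact that $\bomega(t) \in \overline\Omega_u$ for $|t|<t_0$, we may write
$$\Loss(\bomega(t)) = \phi_0^u(W,\vv+t\ee_k) + \phi_1^u(\bb,\vv+t\ee_k) + \phi_2^u(c) + \phi_3^u,$$
in which the last two summands are independent of $\vv$ and therefore cancel in the difference $\Loss(\bomega(t)) - \Loss(\bomega)$. Because $\phi_0^u$ and $\phi_1^u$ are linear in their $\vv$-slot, the two remaining differences reduce, via the explicit formulas from lemma \ref{one_hidden}, to a constant multiple of $t \langle \aaa_k^{(u)},\ww_k\rangle$ and of $t \alpha_k^{(u)} b_k$, respectively. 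Summing and keeping track of the signs produces the stated linear-in-$t$ identity for (ii).

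The only subtle point is the requirement in (i) that $\bomega(t)$ lie simultaneously in $\overline\Omega_u$ for \emph{every} $u$ in the incidence set $\mathcal{I}(\bomega)$, which may contain several cells whenever some entries of $\Sc(\bomega)$ already vanish. The argument above handles this cleanly: any zeros in $\Sc(\bomega)$ must come from the first-layer block (the hypothesis $c \notin \{-1, 1\}$ forces the output-layer block to be uniformly nonzero at $\bomega$), and the first-layer block is literally independent of $t$ under a $\vv$-only perturbation. Once the exact invariance $\Sc(\bomega(t)) = \Sc(\bomega)$ on $(-t_0,t_0)$ is established, the remainder is just multilinear bookkeeping, and in particular there is no need to invoke the piecewise-polynomial lemma \ref{piecewise_polynomial} that was required for the more delicate first perturbation.
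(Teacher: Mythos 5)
Your proof is correct and follows essentially the same route as the paper: a $\vv$-only perturbation leaves the first-layer signatures $\sss^{(i,1)}$ literally unchanged, the hypothesis $c \notin \{-1,1\}$ keeps the output-layer signatures $\sign(1 - y^{(i)}c - t y^{(i)}\sigma_\alpha(\langle \ww_k,\xx^{(i)}\rangle + b_k))$ fixed for $|t|$ small, and then bilinearity of $\phi^u_0,\phi^u_1$ in the $\vv$-slot gives the linear-in-$t$ identity. One caveat on your ``keeping track of the signs'': the explicit formulas of lemma \ref{one_hidden} give $\Loss(\bomega(t))-\Loss(\bomega) = t\big(\phi^u_0(W,\ee_k)+\phi^u_1(\bb,\ee_k)\big) = -t\big(\langle \aaa^{(u)}_k,\ww_k\rangle + \alpha^{(u)}_k b_k\big)$, so the minus sign in front of $\alpha^u_k b_k$ in the stated identity is a typo in the paper (its own later use of this lemma, in Step 1 of the proof of Theorem 5, relies on the ``$+$'' version), and your computation should land on the plus rather than reproduce the stated sign.
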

\begin{proof} To prove (i), note that $\vv=\0$ implies the equalities
\begin{align*}
&\sss^{(i,1)}(\bomega(t))= \sign (  W  \xx^{(i)}+\bb) \\
&\sss^{(i,2)}(\bomega(t))= \sign \left[ \; 1 -  c y^{(i)}  -  t y^{(i)}  \tilde \vv^T  \sigma_\alpha (   W \xx^{(i)}+ \bb)      \right]
\end{align*}
for the signature function. But then $1 -  c y^{(i)} \neq 0$ since $c \notin \{ +1, -1\}$, and so there exists an interval $(-t_0,t_0)$ on which all the functions $\sss^{(i,2)}(\bomega(t))$ do not change. Obviously the functions   $\sss^{(i,1)}(\bomega_2(t))$ do not change as well since $W$ and $\bb$ are not perturbed. So the signature $\Sc( \bomega(t))$ does not change on $(-t_0,t_0)$, which yields (i). To prove (ii), choose an arbitrary $u\in \mathcal I(\bomega)$. Since $\bomega(t)$ remains in $\overline\Omega_u$ for all $t \in (-t_0,t_0)$ the relations \eqref{tour1}-\eqref{tour2} imply
 \begin{equation*}
\Loss(\bomega(t))-\Loss(\bomega^*)=  t  \left(  \phi^u_0( W , \tilde \vv) +
 \phi^u_1(\bb , \tilde \vv)   \right)  = - t \left( \langle  \aaa^{(u)}_k ,  \ww_k  \rangle - \alpha_k^u b_k \right)
\end{equation*}
for all $t \in (-t_0,t_0)$, which is the desired result.
 \end{proof}
 
 \begin{lemma}[Third perturbation]\label{perturbation3}  Let $\bomega=(W,\vv,\bb,c) \in \Omega$ denote a point in the parameter space. Assume that $\vv=\0$. Define
 $$
\tilde W=  {\bf e}_k \otimes \qq,  \qquad  \tilde \vv =  \ee_k  \qquad \text{ and } \qquad  \tilde \bb= \beta  \ee_k.
$$
For $t \in \R$ let $\bomega(t):=(W+t \tilde W,\vv +t \tilde \vv,\bb+t \tilde\bb,c)$ denote a corresponding perturbation of $\bomega$. Then
\begin{enumerate}[label=(\roman*)]
\item There exists $t_0>0$ and  $u \in  \mathcal{I}(\bomega) $ such that $\bomega(t) \in \overline\Omega_u$ for all $t \in [0,t_0).$
\item 
$\Loss(\bomega) \ge  \Loss(\bomega(t))+ t \left( \langle \aaa_k^{(u)},\ww_k \rangle + \alpha_k^{(u)} b_k \right) + t^2 m  \sum_{i} \mu^{(i)} \varepsilon^{(i,u)}  \lambda_k^{(i,u)} $ for all $t \in [0,t_0).$
\end{enumerate}
\end{lemma}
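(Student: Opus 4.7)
My plan is to mimic the proofs of Lemmas \ref{perturbation1} and \ref{perturbation2}, adapted to the fact that three parameter blocks ($\ww_k$, $v_k$, $b_k$) now vary linearly in $t$ simultaneously while $v_k(0)=0$. The coupling between the perturbation of $\vv$ and those of $W,\bb$ generates a genuine second-order term in $t$, and that quadratic term is precisely the source of the $t^{2}\,m\sum_{i}\mu^{(i)}\veps^{(i,u)}\lambda_k^{(i,u)}$ contribution claimed in (ii).

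For (i), I observe that every component of $\bomega(t)$ depends affinely on $t$. The hidden-layer signature entries $\sss^{(i,1)}_j(\bomega(t)) = \sign(\langle \ww_j(t),\xx^{(i)}\rangle + b_j(t))$ have affine arguments, while the output signature $\sss^{(i,2)}(\bomega(t))$ contains the product $v_k(t)\,\sigma_\alpha(\langle\ww_k(t),\xx^{(i)}\rangle+b_k(t))$, which is a piecewise polynomial of degree two in $t$. Lemma \ref{piecewise_polynomial} then produces a $t_0>0$ on which every entry of $\Sc(\bomega(t))$ is constant, so $\bomega(t)$ lies in $\overline\Omega_u$ for some fixed $u$ throughout $(0,t_0)$; continuity of $\bomega(t)$ together with the closedness of $\overline\Omega_u$ extends this to $t\in[0,t_0)$ and in particular forces $u\in\mathcal{I}(\bomega)$.

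For (ii), once (i) is in hand the multilinear decomposition of $\Loss|_{\overline\Omega_u}$ from Lemma \ref{one_hidden} applies throughout the segment. Since only the $k$-th row of $W$, the $k$-th entry of $\vv$, and the $k$-th entry of $\bb$ are perturbed, and $v_j(t)=0$ for $j\neq k$, the sums defining $\phi_0^u$ and $\phi_1^u$ collapse to their $k$-th terms:
\begin{align*}
\phi_0^u(W(t),\vv(t)) &= -t\,\langle \aaa_k^{(u)},\,\ww_k + t\qq\rangle, \\
\phi_1^u(\bb(t),\vv(t)) &= -t\,\alpha_k^{(u)}(b_k + t\beta),
\end{align*}
while $\phi_2^u(c)$ and $\phi_3^u$ are unchanged. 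Subtracting yields
\[
\Loss(\bomega(t))-\Loss(\bomega) \;=\; -t\bigl(\langle\aaa_k^{(u)},\ww_k\rangle + \alpha_k^{(u)}b_k\bigr) \;-\; t^{2}\bigl(\langle\aaa_k^{(u)},\qq\rangle + \alpha_k^{(u)}\beta\bigr),
\]
and inserting the linear-separability bound $\langle\aaa_k^{(u)},\qq\rangle + \alpha_k^{(u)}\beta \ge m\sum_i \mu^{(i)}\veps^{(i,u)}\lambda_k^{(i,u)}$ from \eqref{estimate} and rearranging yields (ii).

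The main obstacle is mostly bookkeeping: unlike Lemma \ref{perturbation1}, where we could choose $\tilde W$ and $\tilde\bb$ with the sign of $v_k$ to eliminate any undetermined linear term, here $\vv=\0$ forces us to take $\tilde\vv = \ee_k$ unsigned, so the linear-in-$t$ term $-t(\langle\aaa_k^{(u)},\ww_k\rangle + \alpha_k^{(u)}b_k)$ has indefinite sign and must be carried along in the inequality. The strict decrease will later come from the quadratic term (where linear separability enters), so the linear term is simply retained for use by the critical-point conditions in the downstream application. Once this is recognized, both parts reduce to direct substitutions into Lemma \ref{one_hidden} combined with \eqref{estimate}.
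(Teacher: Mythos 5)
Your proposal is correct and follows essentially the same route as the paper: part (i) via the piecewise-polynomial signature argument and Lemma \ref{piecewise_polynomial}, and part (ii) by expanding the bilinear forms of Lemma \ref{one_hidden} along the perturbation (your direct substitution $\phi_0^u(W(t),\vv(t)) = -t\langle \aaa_k^{(u)}, \ww_k + t\qq\rangle$ is just the explicit form of the paper's bilinearity expansion with $\vv=\0$) and then applying the separability estimate \eqref{estimate} to the quadratic coefficient. Your closing remark about why the linear term must be carried along unsigned, in contrast to Lemma \ref{perturbation1}, accurately reflects how the lemma is used downstream.
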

 \begin{proof} To prove (i) let $\bomega(t)=(W+t \tilde W,\vv+t\tilde \vv,\bb+t \tilde\bb,c)=(W(t),\vv(t), \bb(t), c )$ denote the perturbation considered in the lemma. Without loss of generality, if suffices to consider the case $k=1$. Define the constants $A_k^{(i)}$ and $B_k^{(i)}$ as
\begin{align*}
& \langle \ww_1(t),  \xx^{(i)} \rangle  + b_1(t) =  \langle \ww_1,  \xx^{(i)} \rangle  + b_1  + t \left( \langle \qq, \xx^{(i)} \rangle + \beta \right) = A_1^{(i)}+B_1^{(i)}t  \\
& \langle \ww_k(t),  \xx^{(i)} \rangle  + b_k(t)  =  \langle \ww_k,  \xx^{(i)} \rangle  + b_k  = A_k^{(i)}  \qquad \text{for  }k \ge 2.
\end{align*}
The fact that $\vv(t)=t \ee_1$ then gives
\begin{align*}
\sss^{(i,1)}_1(\bomega(t))&= \sign(A_1^{(i)}+B_1^{(i)}t  ) \\
\sss^{(i,1)}_k(\bomega(t))&= \sign ( A^{(i)}_k)  \qquad \text{for }k \ge 2     \\
\sss^{(i,2)}(\bomega(t))&=
\sign\left[ \; 1 -y^{(i)} \left\{ c+  t \sigma_\alpha(A_1^{(i)}+B_1^{(i)}t)     \right\}    \right] 
\end{align*}
for the signature functions. As in the proof of part (i) of lemma \ref{perturbation1}, the arguments of the $\sign$ functions are  piecewise defined polynomials and so lemma \ref{piecewise_polynomial} gives the claim. To prove (ii), note that since $\bomega(t)$ remains in a fixed cell $\overline\Omega_u$ for all $t \in (0,t_0)$ the formulas \eqref{tour1}-\eqref{tour2} apply. Expanding the bilinear forms gives
\begin{multline*}
\Loss(\bomega(t)) - \Loss(\bomega)  =  t \Big( \phi_0^u(\tilde W, \vv) +  \phi_0^u( W,  \tilde \vv) + \phi_1^u(\tilde \bb, \vv) +  \phi_1^u( \bb,  \tilde \vv)  \Big) 
+ t^2 \Big( \phi_0^u(\tilde W,  \tilde \vv) +  \phi_1^u( \tilde \bb,  \tilde \vv)  \Big), 
\end{multline*}
and $\vv=\0$ the first order terms are
$$
  \phi_0^u( W,  \tilde \vv) +  \phi_1^u( \bb,  \tilde \vv)  = -  \langle  \aaa^{(u)}_k ,  \ww_k  \rangle - \alpha_k^u b_k .
$$
Applying \eqref{estimate} then yields
$$
\phi_0^u(\tilde W,  \tilde \vv) +  \phi_1^u( \tilde \bb,  \tilde \vv) =  -\langle \aaa_k^{\left(u \right)}, \qq \rangle - \alpha_k^{\left(u \right)} \beta
 \le -  m   \sum_{i} \mu^{(i)} \varepsilon^{(i,u)}  \lambda_k^{(i,u)}  
$$
for second order terms, giving the claim.
\end{proof}

\begin{proof}[Proof of Theorem 5]
 The proof is in two steps. The first step shows that a sub-optimal local minimizer must necessarily be of the form $\bomega=(W,\0,\bb, \pm1),$ while the second step shows that such a sub-optimal minimizer cannot exist if the two classes are equally weighted.

STEP 1: Assume that $\bomega=(W,\vv,\bb,c) \in \Omega$ is a sub-optimal local minimum. Then 
 $\ell^{(i)}(\bomega) >0$ for some data point $\xx^{(i)}$. Take an arbitrary $u\in \mathcal{I}(\bomega)$. By continuity of the loss, there exists $\hat \bomega \in \Omega_u$  such that  $\ell^{(i)}(\hat\bomega) >0$, and, as a consequence, $\varepsilon^{(i,u)}=1$.  Thus
 \begin{equation} \label{newyork}
 \varepsilon^{(i,u)}=1 \qquad \text{for all} \quad u  \in  \mathcal{I}(\bomega).
 \end{equation}
since $u$ was arbitrary. Now choose an arbitrary $k \in [K]$ and  consider the perturbation $\bomega(t):=(W+t \tilde W,\vv,\bb+t \tilde\bb,c)$ described in lemma \ref{perturbation1}.  As $\alpha > 0$ the $\lambda^{(i,u)}_k$ are all strictly positive. By \eqref{newyork}, the term
 $\sum_{i} \mu^{(i)} \varepsilon^{(i,u)}  \lambda_k^{(i,u)}$ appearing in statement  (ii) of lemma \ref{perturbation1}  is strictly positive as well. Since
 $\bomega$ is a local minimum, $v_k$ must necessary be equal to zero, otherwise the considered perturbation would lead to a strict decrease of the loss. Thus $\vv=\0$ since $k$ was arbitrary. Assume that $c \notin \{-1,+1\}$ for the sake of contradiction. The perturbation described in lemma \ref{perturbation2} gives
 \begin{equation} \label{LA}
 \langle \aaa_k^{(u)},\ww_k \rangle + \alpha_k^{(u)} b_k = 0 \qquad \text{for all} \quad u  \in  \mathcal{I}(\bomega),
\end{equation}
which combines with \eqref{newyork}, \eqref{LA} and the perturbation described in lemma \eqref{perturbation3} to give a strict decrease in the loss. This contradicts the fact that $\bomega$ is a local minimum, and so in fact $c \in \{-1,1\}$.

STEP 2. By step 1 a sub-optimal local minimizer must be of the form  $\bomega=(W,\0,\bb, \pm 1)$. Assume $c=1,$ as the argument for the case $c=-1$ is similar. Thus $\bomega=(W,\0,\bb,  1)$. Consider the perturbation
$\bomega(t)=(W,\0,\bb,1-t)$. For $t \in [0,2]$ it then follows that
\begin{align*}
\Loss(\bomega(t))&= \sum_i  \mu^{(i)} \sigma \left( \; 1 -y^{(i)} + y^{(i)}t        \right) \\
&= \sum_{i : y^{(i)}=1} \mu^{(i)}  \sigma ( t ) + \sum_{i: y^{(i)}=-1}  \mu^{(i)}   \sigma ( 2-t) \\
&= \sum_{i : y^{(i)}=1} \mu^{(i)}  t  + \sum_{i: y^{(i)}=-1}  \mu^{(i)}   (2-t) \\
&= 2 \sum_{i: y^{(i)}=-1}  \mu^{(i)}  = 1
\end{align*}
where the equal mass hypothesis
$$
\sum_{i: y^{(i)}=1}  \mu^{(i)} = \sum_{i: y^{(i)}=-1}  \mu^{(i)} = \frac1{2}
$$
justifies the last two equalities. Therefore $\Loss(\bomega)=\Loss(\bomega(t))=1$ for $t$ small enough. But if $t \neq 0$ then $\bomega(t)$ cannot be a local minimizer by stem 1. Thus the point $\bomega=(W,\0,\bb,  1) \in \Omega$ has arbitrarily close neighbors $\bomega(t) \in \Omega$ that have same loss and that are not local minima. This implies that $\bomega$ can not be a local minimum.
\end{proof}

\begin{theorem*}[Theorem 6  from the paper]  
Consider the loss (9) with  $\alpha=0$ and data $\xx^{(i)}, i \in [N] $ that are linearly separable. Assume that $\bomega=(W,\vv,\bb,c)$ is a critical point in the Clarke sense, and that $\xx^{(i)}$ is any data point that contributes a nonzero value to the loss. Then for each hidden neuron $k \in [K]$ either
$$
\text{(i)} \; \langle \ww_k, \xx^{(i)} \rangle + b_k \le 0,\quad \text{or}\quad \text{(ii)}\;v_k=0.
$$
\end{theorem*}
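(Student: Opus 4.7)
The plan is to push the argument for Theorem~4 as far as it will go, and then at the precise point where positivity of $\alpha$ was used in the leaky case, to replace that step with a continuity argument that reads off information at $\bomega$ itself rather than cell by cell.

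To begin, I would apply the Clarke subdifferential formula of Lemma~\ref{lem:subdiffcalc} together with the multilinear decomposition of Lemma~\ref{one_hidden}, exactly as in the proof of Theorem~4. This yields nonnegative coefficients $\{\theta^{(u)}\}_{u \in \mathcal{I}(\bomega)}$ summing to one and satisfying, for every $k \in [K]$,
$$
v_k \sum_{u \in \mathcal{I}(\bomega)} \theta^{(u)} \aaa_k^{(u)} = \0, \qquad v_k \sum_{u \in \mathcal{I}(\bomega)} \theta^{(u)} \alpha_k^{(u)} = 0.
$$
Assuming $v_k \neq 0$, the very same separability argument used in the proof of Theorem~4 (which never invokes $\alpha > 0$) forces the coefficients
$$
\vrho^{(i)}_k := \sum_{u \in \mathcal{I}(\bomega)} \theta^{(u)} \mu^{(i)} \veps^{(i,u)} \lambda^{(i,u)}_k
$$
to vanish for every data index $i \in [N]$.

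Next I would fix a data point $\xx^{(i)}$ whose contribution to the loss at $\bomega$ is strictly positive. Because $\bomega \mapsto 1 - y^{(i)} \hat y^{(i)}(\bomega)$ is continuous and strictly positive at $\bomega$, it remains strictly positive on a whole neighborhood of $\bomega$, which forces $\veps^{(i,u)} = 1$ for every $u \in \mathcal{I}(\bomega)$. The identity $\vrho^{(i)}_k = 0$ therefore collapses to
$$
\sum_{u \in \mathcal{I}(\bomega)} \theta^{(u)} \lambda^{(i,u)}_k = 0,
$$
and since every summand is nonnegative, each product $\theta^{(u)} \lambda^{(i,u)}_k$ must vanish individually.

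Finally I would close by contradiction. Suppose for contradiction that $\langle \ww_k, \xx^{(i)} \rangle + b_k > 0$ at $\bomega$. By continuity this strict inequality persists on an open neighborhood of $\bomega$, and hence for every cell $\Omega_u$ with $u \in \mathcal{I}(\bomega)$, since each such cell intersects every neighborhood of $\bomega$ and $\lambda^{(i,u)}_k$ is constant on $\Omega_u$; thus $\lambda^{(i,u)}_k = 1$ for all such $u$. Combining this with the previous display and $\sum_u \theta^{(u)} = 1$ would yield $0 = 1$, a contradiction, forcing option (i) of the theorem. The main obstacle is precisely this last step: the cell-averaged equality cannot by itself rule out inactive neurons on adjacent cells, so the argument must exploit that strict positivity of the preactivation is an open condition that propagates uniformly across every cell meeting a neighborhood of $\bomega$.
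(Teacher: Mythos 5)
Your proposal is correct and follows essentially the same route as the paper: both reduce to the critical-point relations and the separability argument from the leaky case to force $\vrho^{(i)}_k = 0$, use continuity of the loss to get $\veps^{(i,u)} = 1$ on every adjacent cell, and then conclude that some (in your phrasing, by contradiction, all) activation coefficients $\lambda^{(i,u)}_k$ would have to behave inconsistently unless $\langle \ww_k, \xx^{(i)} \rangle + b_k \le 0$. Your closing contradiction step is just the contrapositive of the paper's direct deduction that $\lambda^{(i,u)}_k = 0$ for some adjacent cell with $\theta^{(u)} > 0$, so the two arguments are the same in substance.
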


\begin{proof}
The proof of theorem 6 shows
\begin{align*}
\0 &= v_k\left( \sum_{u} \sum_{ i : y^{(i)} = 1 } \theta^{(u)} \mu^{(i)}\veps^{(i,u)} \lambda^{(i,u)}_k\xx^{(i)} - \sum_{u} \sum_{ i : y^{(i)} = -1 } \theta^{(u)} \mu^{(i)}\veps^{(i,u)} \lambda^{(i,u)}_k\xx^{(i)}\right)\\
0 &= v_k\left( \sum_{u} \sum_{ i : y^{(i)} = 1 } \theta^{(u)} \mu^{(i)} \veps^{(i,u)} \lambda^{(i,u)}_k - \sum_{u} \sum_{ i : y^{(i)} = -1 } \theta^{(u)} \mu^{(i)} \veps^{(i,u)} \lambda^{(i,u)}_k\right)
\end{align*}
whenever $\bomega$ is a critical point. If $\ell^{(i)}(\bomega)>0$ for some data point $\xx^{(i)}$ then $\ell^{(i)}>0$ on each neighboring cell $\Omega_u$,  $u \in \mathcal{I}(\bomega)$ by continuity of the loss. This implies that 
$\veps^{(i,u)}=1$ for all $u\in \mathcal{I}(\bomega)$. If $v_k\neq 0$ for some $k$ then
\begin{align*}
\sum_{ i : y^{(i)} = 1 } \vrho^{(i)}_k \xx^{(i)} &= \sum_{ i : y^{(i)} = -1 } \vrho^{(i)}_k \xx^{(i)}\\
\sum_{ i : y^{(i)} = 1 } \vrho^{(i)}_k  &= \sum_{ i : y^{(i)} = -1 } \vrho^{(i)}_k \qquad \text{where} \qquad \vrho^{(i)}_{k} := \sum_{u} \theta^{(u)} \mu^{(i)}\veps^{(i,u)} \lambda^{(i,u)}_k
\end{align*}
and the corresponding  $\vrho^{(i)}_k$ must all vanish since the data $\xx^{(i)},i\in[N]$ are separable. If $\ell^{(i)}>0$ this necessarily implies that $\lambda_k^{(i,u)}=0$ for some $u$ since the $\veps^{(i,u)} = 1$ and at least one $\theta^{(u)}$ does not vanish. This in turn implies
$\sigma( \langle \ww_k, \xx^{(i)} \rangle + b_k)=0$ due to the definition of the $\lambda_k^{(i)}$.
\end{proof}

\section*{Proofs of theorems from  section 4: Exact Penalties and Multi-Class Structure}

The proof of theorem 7 requires modifying the notion of a cell. This modification is straightforward; it simply accounts for the fact that the penalized loss
\begin{equation}\label{eq:pen2}
\E_{\gamma}\big(\bomega^{(1)},\ldots,\bomega^{(R)}\big) := \sum^{R}_{r=1} \Loss^{(r)}\big(\bomega^{(r)}\big) + \gamma \mathcal{R}\big(\breve \bomega^{(1)},\ldots,\breve \bomega^{(R)}\big)
\end{equation}
has the $R$-fold Cartesian product $\Omega \times \cdots \times \Omega$ as its parameter domain. The notion of a cell $\Omega_u$ for the model \eqref{eq:pen2} consists of sets (Cartesian products) of the form
\begin{equation}\label{eq:prodset}
\Omega_{u} = \Omega_{u^{(1)}} \times \Omega_{u^{(2)}} \times \cdots \times \Omega_{ u^{(R)} },
\end{equation}
where each $u^{(r)} \in \{-1,1\}^{ND}$ denotes a signature for the individual two-class losses. Thus a binary vector of the form 
$$u = \big( u^{(1)},\ldots,u^{(R)}\big) \in \{-1,1\}^{NDR} \qquad u^{(r)} \in \{-1,1\}^{ND}$$
defines a signature for the full model. That sets of the form \eqref{eq:prodset} cover the product space $\Omega \times \cdots \times \Omega$ up to a set of measure zero follows easily from the fact that if $\big( \bomega^{(1)},\ldots,\bomega^{(R)} \big) \notin \Omega_u$ for all $u$ then at least one of the $u^{(r)}$ (say $u^{(1)}$ WLOG) lies in the set
$$
\mathcal{N} := \Omega \setminus \left( \bigcup_{ u^{(1)} \in \{0,1\}^{ND} } \Omega_{u^{(1)}} \right)
$$
which has measure zero in $\Omega$. Thus $u$ must lie in the set
$$
\mathcal{N} \times \Omega \times \cdots \times \Omega
$$
which has measure zero in the product space $\Omega \times \cdots \times \Omega$, and so the union of the $R$ measure zero sets of the form
$$
\Omega \times \cdots \times \mathcal{N} \times \cdots \times \Omega
$$
contains all parameters $\big( \bomega^{(1)},\ldots,\bomega^{(R)} \big)$ that do not lie in a cell. The proof also relies following auxiliary lemma.
\begin{lemma}\label{lem:yyy}
For any $R$ vectors $\xx^{(1)},\ldots,\xx^{(R)} \in \R^{d},$ if
$$
\|\xx^{(r)}\|^2 = \frac1{R-1} \sum_{s \neq r} \langle \xx^{(s)} , \xx^{(r)} \rangle \qquad \text{for all} \qquad r \in \{1,\ldots,R\}
$$
then $\xx_1 = \cdots = \xx_R$.
\end{lemma}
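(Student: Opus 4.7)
\textbf{Proof proposal for Lemma \ref{lem:yyy}.}

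The plan is to reduce the problem to showing that the single scalar quantity
$$S := \sum_{r \neq s} \|\xx^{(r)} - \xx^{(s)}\|^{2}$$
vanishes. Once this is established, non-negativity of each summand immediately forces $\xx^{(r)} = \xx^{(s)}$ for every pair $(r,s)$, which is the desired conclusion.

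To evaluate $S$, I would first sum the hypothesis over $r \in \{1,\ldots,R\}$ to obtain the auxiliary identity
$$
(R-1) \sum_{r=1}^{R} \|\xx^{(r)}\|^{2} \;=\; \sum_{r=1}^{R} \sum_{s \neq r} \langle \xx^{(s)} , \xx^{(r)} \rangle \;=\; \sum_{r \neq s} \langle \xx^{(r)},\xx^{(s)} \rangle.
$$
Next, I would expand $S$ by bilinearity,
$$
S \;=\; \sum_{r \neq s} \Bigl( \|\xx^{(r)}\|^{2} + \|\xx^{(s)}\|^{2} - 2\langle \xx^{(r)},\xx^{(s)}\rangle \Bigr) \;=\; 2(R-1)\sum_{r=1}^{R}\|\xx^{(r)}\|^{2} \;-\; 2 \sum_{r \neq s} \langle \xx^{(r)},\xx^{(s)}\rangle,
$$
using the fact that each index appears as $r$ in exactly $R-1$ off-diagonal pairs and likewise as $s$. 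Substituting the auxiliary identity into this expansion shows that the two terms cancel exactly, so $S = 0$.

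Because $S$ is a sum of non-negative real numbers, each term $\|\xx^{(r)}-\xx^{(s)}\|^{2}$ must individually vanish, so $\xx^{(r)}=\xx^{(s)}$ for all $r,s$, completing the argument. There is no real obstacle here: the entire proof is a two-step algebraic manipulation, and the only "trick" is recognising that summing the hypothesis over $r$ produces precisely the inner-product sum needed to make the expansion of $S$ collapse. An equivalent route, should one prefer a more conceptual presentation, is to set $\bar\xx := \tfrac1R \sum_{r} \xx^{(r)}$, observe that the hypothesis is equivalent to $\langle \xx^{(r)} , \bar\xx \rangle = \|\xx^{(r)}\|^{2}$ for every $r$, and then compute $\|\xx^{(r)} - \bar\xx\|^{2} = \|\bar\xx\|^{2} - \|\xx^{(r)}\|^{2}$; averaging this identity over $r$ forces both sides to vanish and gives $\xx^{(r)} = \bar\xx$ for every $r$.
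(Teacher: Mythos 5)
Your proof is correct, and it takes a genuinely different route from the paper's. The paper argues by an extremal/Cauchy--Schwarz device: it selects a vector of maximal norm, applies Cauchy--Schwarz to the hypothesis to deduce that all the norms must coincide, and then returns to the hypothesis to conclude that all pairwise angles vanish, whence the vectors agree. You instead sum the hypothesis over $r$ to obtain the single identity $(R-1)\sum_r \|\xx^{(r)}\|^2 = \sum_{r\neq s}\langle \xx^{(r)},\xx^{(s)}\rangle$, and observe that this is exactly the cancellation needed to make $\sum_{r\neq s}\|\xx^{(r)}-\xx^{(s)}\|^2$ collapse to zero; positivity of each summand finishes the job. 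Both arguments are short, but yours is more uniform: it needs no choice of a maximal element, no case split on whether that element vanishes, and no appeal to Cauchy--Schwarz or to angles, so it reads as a one-line algebraic identity plus positivity. Your alternative formulation via the mean $\bar\xx$ (the hypothesis is equivalent to $\langle \xx^{(r)},\bar\xx\rangle = \|\xx^{(r)}\|^2$, and averaging $\|\xx^{(r)}-\bar\xx\|^2 = \|\bar\xx\|^2 - \|\xx^{(r)}\|^2$ over $r$ gives zero) is also correct and perhaps the most conceptual of the three, since it identifies the common value of the $\xx^{(r)}$ as their mean. The paper's proof, for its part, makes visible the geometric content (equal lengths, then zero angles), which some readers may find more illuminating, but it gains nothing in generality.
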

\begin{proof}
By relabelling if necessary, it suffices to assume $\xx^{(1)}$ has largest norm amongst the $\xx^{(r)}$. Thus $\| \xx^{(1)}\| \geq \| \xx^{(r)} \|$ for all $1 \leq r \leq R$. If $\|\xx^{(1)}\| = 0$ then there is nothing to prove. Otherwise apply Cauchy-Schwarz and the hypothesis of the lemma to find
\begin{align*}
 \|\xx^{(1)}\|^2 &\leq \frac1{R-1} \sum_{s \neq 1} \|\xx^{(s)}\| \| \|\xx^{(1)}\| \\
\|\xx^{(1)}\| &\leq \frac1{R-1} \sum_{s \neq 1} \|\xx^{(s)}\|.
\end{align*}
The latter inequality implies $\|\xx^{(1)}\| = \cdots = \| \xx^{(R)}\|$ since $\xx^{(1)}$ has largest norm. Thus
\begin{align*}
\| \xx^{(1)}\|^{2} &= \frac1{R-1} \sum_{ s \neq 1} \cos \theta_r \|\xx^{(1)}\|^2\\
1 &= \frac1{R-1} \sum_{ s \neq 1} \cos \theta_r
\end{align*}
by the hypothesis of the lemma. The latter equality implies $\cos \theta_r = 1$ for all $r,$ and so the lemma is proved.
\end{proof}

\begin{theorem*}[Theorem 7  from the paper]\label{thm:multiappx}
If $\gamma > 0$ then the following hold for \eqref{eq:pen2} ---
\begin{enumerate}[label=(\roman*)]
\item The penalty is exact, that is, at \textbf{any} critical point $\big( \bomega^{(1)},\ldots,\bomega^{(R)} \big)$ of $\E_{\gamma}$ the equalities
\begin{align*}
\omega^{(\ell,1)} &= \cdots = \omega^{(\ell,R)} = \bar \omega^{(\ell)} := \frac1{R} \sum^{R}_{r=1} \omega^{(\ell,r)}\\
\bb^{(\ell,1)} &= \cdots = \bb^{(\ell,R)} = \bar \bb^{(\ell)} := \frac1{R} \sum^{R}_{r=1} \bb^{(\ell,r)}
\end{align*}
hold for all $\ell \in [L]$.
\item At \textbf{any} critical point of $\E_{\gamma}$ the two-class critical point relations
\begin{equation}\label{eq:magic2}
\0 \in \partial_{0} \Loss^{(r)}( \breve \bomega , \vv_r , c_r )
\end{equation}
hold for all $r \in [R]$.
\end{enumerate}
\end{theorem*}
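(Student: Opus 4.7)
The plan is to reduce $\0 \in \partial_{0}\E_{\gamma}$ to per-class Clarke subgradient relations for each $\Loss^{(r)}$, and then exploit the homogeneity of the multilinear decomposition from Theorem \ref{theorem1appx}, closing with Lemma \ref{lem:yyy}.

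First I would extend Lemma \ref{lem:subdiffcalc} to the product parameter space. Product cells $\Omega_{u} = \Omega_{u^{(1)}} \times \cdots \times \Omega_{u^{(R)}}$ cover $\Omega^{R}$ up to a measure-zero set on which $\E_{\gamma}$ is smooth (each $\Loss^{(r)}|_{\Omega_{u^{(r)}}}$ is smooth and $\mathcal{R}$ is polynomial), so $\partial_{0}\E_{\gamma}(\bomega)$ is the convex hull of gradients from adjacent product cells.  Since $\nabla_{\bomega^{(r)}}\Loss^{(r)}|_{\Omega_{u^{(r)}}}$ depends only on $u^{(r)}$, the joint weights $\{\theta^{(u)}\}$ collapse under marginalization to per-class weights $\tilde\theta^{(r,u^{(r)})} := \sum_{u^{(-r)}}\theta^{(u)}$. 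Componentwise, criticality reads
\begin{align*}
\mathbf{g}^{(r)}_{\omega^{(\ell,r)}} &= -\tfrac{2\gamma R}{R-1}(\omega^{(\ell,r)} - \bar\omega^{(\ell)}), &
\mathbf{g}^{(r)}_{\bb^{(\ell,r)}} &= -\tfrac{2\gamma R}{R-1}(\bb^{(\ell,r)} - \bar\bb^{(\ell)}), \\
\mathbf{g}^{(r)}_{\vv^{(r)}} &= \0, &
g^{(r)}_{c^{(r)}} &= 0,
\end{align*}
where each $\mathbf{g}^{(r)}_{p}$ is formed from the \emph{same} weights $\tilde\theta^{(r,u^{(r)})}$ across the parameters $p$ of class $r$.

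Next I would apply Euler's identity to the multilinear forms.  Each $\phi^{u^{(r)}}_{k}$ is homogeneous of degree one in each argument, so $\langle p, \nabla_{p}\phi^{u^{(r)}}_{k}\rangle = \phi^{u^{(r)}}_{k}$ whenever $p$ is an argument of $\phi^{u^{(r)}}_{k}$.  The argument lists of Theorem \ref{theorem1appx} show that $\omega^{(\ell,r)}$ appears in $\phi_{0},\ldots,\phi_{\ell-1}$, while $\bb^{(\ell,r)}$ lives only in $\phi_{\ell}$, $\vv^{(r)}$ in $\phi_{0},\ldots,\phi_{L}$, and $c^{(r)}$ only in $\phi_{L+1}$.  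Taking inner products of the four relations above with their respective parameters and averaging over cells produces, for every $r$ and $\ell \in [L]$,
\begin{align*}
\sum_{u^{(r)}} \tilde\theta^{(r,u^{(r)})}\sum_{k=0}^{\ell-1}\phi^{u^{(r)}}_{k} &= -\tfrac{2\gamma R}{R-1} A_{\ell}^{(r)}, &
\sum_{u^{(r)}} \tilde\theta^{(r,u^{(r)})}\phi^{u^{(r)}}_{\ell} &= -\tfrac{2\gamma R}{R-1} B_{\ell}^{(r)}, \\
\sum_{u^{(r)}} \tilde\theta^{(r,u^{(r)})}\sum_{k=0}^{L}\phi^{u^{(r)}}_{k} &= 0, &
\sum_{u^{(r)}} \tilde\theta^{(r,u^{(r)})}\phi^{u^{(r)}}_{L+1} &= 0,
\end{align*}
where $A_{\ell}^{(r)} := \langle \omega^{(\ell,r)} - \bar\omega^{(\ell)},\, \omega^{(\ell,r)}\rangle$ and $B_{\ell}^{(r)} := \langle \bb^{(\ell,r)} - \bar\bb^{(\ell)},\, \bb^{(\ell,r)}\rangle$ encode the deviations from the mean.

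The crux is to telescope this system. Subtracting the $\omega^{(\ell)}$ identity from the $\omega^{(\ell+1)}$ identity and matching the $\bb^{(\ell)}$ identity gives $A_{\ell+1}^{(r)} - A_{\ell}^{(r)} = B_{\ell}^{(r)}$; subtracting the $\omega^{(L)}$ identity from the $\vv^{(r)}$ identity and matching the $\bb^{(L)}$ identity gives $A_{L}^{(r)} = -B_{L}^{(r)}$. Chaining collapses everything to $A_{1}^{(r)} + \sum_{k=1}^{L} B_{k}^{(r)} = 0$ for every $r$. Summing over $r$ uses the identities $\sum_{r} A_{1}^{(r)} = \sum_{r} \|\omega^{(1,r)} - \bar\omega^{(1)}\|^{2}$ and $\sum_{r} B_{k}^{(r)} = \sum_{r} \|\bb^{(k,r)} - \bar\bb^{(k)}\|^{2}$, making every summand non-negative. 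Each must therefore vanish, giving $\omega^{(1,r)} = \bar\omega^{(1)}$ and $\bb^{(\ell,r)} = \bar\bb^{(\ell)}$ for all $\ell,r$. Back-substituting $A_{\ell}^{(r)} = A_{1}^{(r)} + \sum_{k=1}^{\ell-1} B_{k}^{(r)} = 0$ yields $\|\omega^{(\ell,r)}\|^{2} = \langle \bar\omega^{(\ell)}, \omega^{(\ell,r)}\rangle$ for every $r$, which is exactly the hypothesis of Lemma \ref{lem:yyy} with $\xx^{(r)} = \omega^{(\ell,r)}$; the lemma then forces $\omega^{(\ell,r)} = \bar\omega^{(\ell)}$ for the upper layers as well, completing (i). Finally for (ii), once (i) holds the criticality equations reduce to $\mathbf{g}^{(r)}_{p} = \0$ for every parameter $p$ of class $r$; because all four subgradient components share the same weights $\{\tilde\theta^{(r,u^{(r)})}\}$, the product-cell analogue of Lemma \ref{lem:subdiffcalc} assembles them into a single element $\0 \in \partial_{0}\Loss^{(r)}(\breve\bomega,\vv^{(r)},c^{(r)})$.

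The hard part is the third paragraph: I expect the delicate step to be verifying that the Euler identities telescope exactly as above, so that the argument proceeds in two stages — a non-negativity argument eliminates the first-layer weights and all biases, and only then does Lemma \ref{lem:yyy} come in to handle the deeper layers $\ell \geq 2$.
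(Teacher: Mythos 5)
Your proposal is correct and rests on the same three pillars as the paper's proof: the Clarke subdifferential computed over product cells $\Omega_{u^{(1)}}\times\cdots\times\Omega_{u^{(R)}}$, Euler's identity $\phi = \langle p, \nabla_{p}\phi\rangle$ applied to the multilinear forms of Theorem 1, and Lemma \ref{lem:yyy}. The difference is in how the Euler identities are combined. The paper works top-down: it pairs the $\omega^{(L,r)}$ and $\bb^{(L,r)}$ identities with the $\vv_r$ identity to obtain $\|\omega^{(L,r)}\|^{2}+\|\bb^{(L,r)}\|^{2} = \langle\omega^{(L,r)},\tilde\omega^{(L,r)}\rangle+\langle\bb^{(L,r)},\tilde\bb^{(L,r)}\rangle$, invokes Lemma \ref{lem:yyy} on the concatenated weight--bias pair at layer $L$, deduces that the relevant partial sums of the $\phi^{(u,r)}_{k}$ vanish, and then descends inductively to layer $L-1$, and so on. You instead telescope all the identities at once into the single relation $A_{1}^{(r)}+\sum_{k=1}^{L}B_{k}^{(r)}=0$, sum over $r$ so that every term becomes a non-negative squared deviation (this disposes of $\omega^{(1,r)}$ and \emph{all} the biases without Lemma \ref{lem:yyy}), and only then back-substitute $A_{\ell}^{(r)}=0$ and apply Lemma \ref{lem:yyy} to the upper-layer weights $\omega^{(\ell,r)}$, $\ell\geq 2$. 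I verified the telescoping ($A_{\ell+1}^{(r)}-A_{\ell}^{(r)}=B_{\ell}^{(r)}$ and $A_{L}^{(r)}=-B_{L}^{(r)}$) and the identity $\sum_{r}\langle\omega^{(1,r)}-\bar\omega^{(1)},\omega^{(1,r)}\rangle=\sum_{r}\|\omega^{(1,r)}-\bar\omega^{(1)}\|^{2}$; both are sound, and your explicit marginalization of the joint cell weights $\theta^{(u)}$ to per-class weights $\tilde\theta^{(r,u^{(r)})}$ is actually slightly more careful than the paper on the point needed for part (ii), namely that the four components of each class-$r$ subgradient are built from a \emph{common} convex combination supported on cells adjacent to $(\breve\bomega,\vv_r,c_r)$. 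What your route buys is a cleaner global positivity argument and fewer invocations of Lemma \ref{lem:yyy}; what the paper's route buys is that each inductive step is local to one layer, which makes the vanishing of the partial sums $\sum_{u}\theta^{(u)}(\phi^{(u,r)}_{0}+\cdots+\phi^{(u,r)}_{\ell-1})$ explicit along the way.
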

\begin{proof}
Let $\big( \bomega^{(1)},\ldots,\bomega^{(R)} \big)$ denote any critical point. For each $(\ell,r)$ the equalities
\begin{align*}
\nabla_{\omega^{(\ell,r)}} \mathcal{R} = \gamma\big( \omega^{(\ell,r)} - \tilde \omega^{(\ell,r)} \big) \qquad  \tilde \omega^{(\ell,r)} := \frac1{R-1} \sum_{s \neq r}\omega^{(\ell,s)} \\
\nabla_{\bb^{(\ell,r)}} \mathcal{R} = \gamma\big( \bb^{(\ell,r)} - \tilde \bb^{(\ell,r)} \big) \qquad  \tilde \bb^{(\ell,r)} := \frac1{R-1} \sum_{s \neq r}\bb^{(\ell,s)} 
\end{align*}
follow from a straightforward calculation. By definition of a critical point, for each cell $\Omega_u$ adjacent to the critical point there exist corresponding constants $\theta^{(u)}\geq 0$ with $\sum_{u} \theta^{(u)} = 1$ so that the equalities
\begin{align}\label{eq:cprel}
\0 &= \sum_{u} \theta^{(u)} \nabla_{\vv_r} \bar \Loss|_{\Omega_u} \nonumber \\
\0 &= \sum_{ u } \theta^{(u)} \left( \nabla_{\omega^{(\ell,r)}} \bar \Loss|_{\Omega_u}  + \nabla_{\omega^{(\ell,r)}} \mathcal{R} \right)  =  \gamma\big( \omega^{(\ell,r)} - \tilde \omega^{(\ell,r)} \big) + \sum_{ u } \theta^{(u)}  \nabla_{\omega^{(\ell,r)}} \bar \Loss|_{\Omega_u} \nonumber \\
\0 &= \sum_{ u } \theta^{(u)} \left( \nabla_{\bb^{(\ell,r)}} \bar \Loss|_{\Omega_u}  + \nabla_{\bb^{(\ell,r)}} \mathcal{R} \right)  =  \gamma\big( \bb^{(\ell,r)} - \tilde \bb^{(\ell,r)} \big) + \sum_{ u } \theta^{(u)} \nabla_{\bb^{(\ell,r)}} \bar \Loss|_{\Omega_u} 
\end{align}
hold for all $\ell \in [L]$ and $r \in [R]$, where the final equalities in the second and third line follow from the fact that $\mathcal{R}$ is smooth and so its gradients do not depend upon the cell. Now on any cell $\Omega_u$ the loss $\Loss^{(r)}$ decomposes into a sum of multilinear forms
\begin{align*}
\Loss^{(r)}|_{\Omega_u} &= \phi_0^{(u,r)}\big( \omega^{(1,r)},\ldots,\omega^{(L,r)},\vv_r \big) + \sum^{L-1}_{\ell=1} \phi_{\ell}^{(u,r)}\big( \bb^{(\ell,r)} , \omega^{(\ell+1,r)},\ldots,\omega^{(L,r)},\vv_r\big)\\ &+ \phi_{L}^{(u,r)}\big(\bb^{(L,r)},\vv_r\big) + \phi_{L+1}^{(u,r)}(c_r) + \phi_{L+2}^{(u,r)}
\end{align*}
by theorem 1. For any multilinear form $\phi(\vv_1,\ldots,\vv_n)$ the equality
$$
\phi(\vv_1,\ldots,\vv_n) = \langle \vv_k , \nabla_{\vv_k} \phi (\vv_1,\ldots,\vv_n) \rangle
$$
holds for all $k \in [n]$ by Euler's theorem for homogeneous functions. Taking the inner-product of \eqref{eq:cprel} with $\vv_r,\omega^{(L,r)}$ and $\bb^{(L,r)}$ then shows
\begin{align}\label{eq:linforms}
0 &= \sum_{u} \theta^{(u)}\big( \phi^{(u,r)}_{0} + \cdots + \phi^{(u,r)}_{L} \big) \nonumber \\
0 &= \sum_{u} \theta^{(u)}\big( \phi^{(u,r)}_{0} + \cdots + \phi^{(u,r)}_{L-1} \big) + \gamma \big( \| \omega^{(L,r)} \|^{2} - \langle \omega^{(L,r)} , \tilde \omega^{(L,r)} \rangle \big) \nonumber \\
0 &= \sum_{u} \theta^{(u)}\big( \phi^{(u,r)}_{L} \big) + \gamma \big( \| \bb^{(L,r)} \|^{2} - \langle \bb^{(L,r)} , \tilde \bb^{(L,r)} \rangle \big)
\end{align}
which upon adding the second and third equalities yields
$$
\| \omega^{(L,r)}\|^{2} + \|\bb^{(L,r)}\|^{2} = \langle \omega^{(L,r)} , \tilde \omega^{(L,r)} \rangle + \langle \bb^{(L,r)} , \tilde \bb^{(L,r)} \rangle
$$
for all $r \in [R]$. By the definitions of $\tilde \omega^{(L,r)}$ and $\tilde \bb^{(L,r)}$ (c.f. lemma \ref{lem:yyy}), this can happen if and only if
$$
\omega^{(L,1)} = \cdots = \omega^{(L,R)} \qquad \text{and} \qquad \bb^{(L,1)} = \cdots = \bb^{(L,R)}.
$$
Using this in the second and third equations in \eqref{eq:linforms} then shows that
\begin{equation}\label{eq:vanishing}
0 = \sum_{u} \theta^{(u)}\big( \phi^{(u,r)}_{0} + \cdots + \phi^{(u,r)}_{L-1} \big) = \sum_{u} \theta^{(u)}\big( \phi^{(u,r)}_{L} \big)
\end{equation}
for all $r \in [R]$ as well. Now take the inner-product of \eqref{eq:cprel} with $\omega^{(L-1,r)}$ and $\bb^{(L-1,r)}$ to find
\begin{align*}
0 &= \sum_{u} \theta^{(u)}\big( \phi^{(u,r)}_{0} + \cdots + \phi^{(u,r)}_{L-2} \big) + \gamma \big( \| \omega^{(L-1,r)} \|^{2} - \langle \omega^{(L-1,r)} , \tilde \omega^{(L-1,r)} \rangle \big) \nonumber \\
0 &= \sum_{u} \theta^{(u)}\big( \phi^{(u,r)}_{L-1} \big) + \gamma \big( \| \bb^{(L-1,r)} \|^{2} - \langle \bb^{(L-1,r)} , \tilde \bb^{(L-1,r)} \rangle \big)
\end{align*}
Adding these equations and using \eqref{eq:vanishing} then reveals
$$
\omega^{(L-1,1)} = \cdots = \omega^{(L-1,R)} \qquad \text{and} \qquad \bb^{(L-1,1)} = \cdots = \bb^{(L-1,R)}
$$
must hold as well, and so also
$$
0 = \sum_{u} \theta^{(u)}\big( \phi^{(u,r)}_{0} + \cdots + \phi^{(u,r)}_{L-2} \big) = \sum_{u} \theta^{(u)}\big( \phi^{(u,r)}_{L-1} \big)
$$
must hold. Continuing from $\ell = L-2$ to $\ell = 1$ by induction reveals
$$
\omega^{(\ell,1)} = \cdots = \omega^{(\ell,R)} \qquad \text{and} \qquad \bb^{(\ell,1)} = \cdots = \bb^{(\ell,R)}.
$$
for all $\ell \in [L],$ and so part (i) is proved. Part (ii) then follows from part (i) since the equalities
$$
\tilde \omega^{(\ell,r)} = \bar \omega^{(\ell)} = \omega^{(\ell,r)} \qquad \tilde \bb^{(\ell,r)} = \bb^{(\ell)} = \bb^{(\ell,r)}
$$
hold for all $(\ell,r)$ at any critical point. Thus \eqref{eq:cprel} yields
\begin{align}\label{eq:finalrel}
\0 &= \sum_{u} \theta^{(u)} \nabla_{\vv_r}  \Loss^{(r)}|_{\Omega_{u^{(r)}}} \nonumber  \\
\0 &= \sum_{ u } \theta^{(u)} \nabla_{\omega^{(\ell,r)}}  \Loss^{(r)}|_{\Omega_{u^{(r)}}} \nonumber \\
\0 &= \sum_{ u } \theta^{(u)} \nabla_{\bb^{(\ell,r)}}  \Loss^{(r)}|_{\Omega_{u^{(r)}}} 
\end{align}
for all $\ell \in [L], r \in [R]$. Now consider \eqref{eq:finalrel} for $r=1$. Any cells appearing in the sum \eqref{eq:finalrel} satisfy either $(\breve \bomega,\vv_1,c_1) \in \Omega_{u^{(1)}}$ or $(\breve \bomega,\vv_1,c_1) \in \partial \Omega_{u^{(1)}}$. If $(\breve \bomega,\vv_1,c_1) \in \Omega_{u^{(1)}}$ for some $u^{(1)}$ then \eqref{eq:finalrel} must consist only of gradients on the single cell $\Omega_{u^{(1)}}$ and so $(\breve \bomega,\vv_1,c_1) \in \Omega_{u^{(1)}}$ is a critical point of $\Loss^{(1)}$ in the classical sense. If $(\breve \bomega,\vv_1,c_1) \in \partial \Omega_{u^{(1)}}$ for some $u^{(1)}$ in the sum then $(\breve \bomega,\vv_1,c_1) \in \partial \Omega_{u^{(1)}}$ for all cells $u$ the sum. Thus \eqref{eq:finalrel} consists of a positive combination of gradients of $\Loss^{(1)}$ on cells adjacent to $(\breve \bomega,\vv_1,c_1),$ and so $(\breve \bomega,\vv_1,c_1)$ defines a critical point of $\Loss^{(1)}$ in the extended Clarke sense. Applying this reasoning for $r=2,\ldots,R$ then yields part (ii) and proves the theorem.
\end{proof}

The following preliminary lemma will aid the proofs of the stated corollaries to this theorem.
\begin{lemma}\label{lem:locminlem}
Consider a piecewise multilinear loss
$$
\bar \Loss( \bomega ) = \sum^{R}_{r=1} \bar \Loss^{(r)}(\breve \bomega , \vv_r , c_r )
$$
\end{lemma}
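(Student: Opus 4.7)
The plan is to show that any local minimum $\bomega^\star$ of $\bar\Loss$ projects onto a local minimum of each summand $\bar\Loss^{(r)}$ in its corresponding variables $(\breve\bomega, \vv_r, c_r)$. I would exploit the piecewise multilinear structure (Theorem~1 applied to each $\bar\Loss^{(r)}$) together with the additive decoupling of the $(\vv_r, c_r)$ variables across $r$.

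The first and easiest observation is that perturbations restricted to the $r$-th output block $(\vv_r, c_r)$ leave all summands $\bar\Loss^{(s)}$ with $s \ne r$ unchanged. The local-min property of $\bar\Loss$ at $\bomega^\star$ therefore immediately delivers the local-min property of $\bar\Loss^{(r)}(\breve\bomega^\star, \cdot, \cdot)$ at $(\vv_r^\star, c_r^\star)$. The substantive task is to upgrade this partial local minimality to full joint local minimality in all of $(\breve\bomega, \vv_r, c_r)$.

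For the upgrade I would fix an arbitrary joint perturbation $(\delta\breve\bomega, \delta\vv_r, \delta c_r)$ and consider the full-space displacement that modifies $\breve\bomega^\star$ and the $r$-th output slice simultaneously. The local-min inequality yields
$$
\bar\Loss^{(r)}(\breve\bomega^\star+\delta\breve\bomega, \vv_r^\star+\delta\vv_r, c_r^\star+\delta c_r) - \bar\Loss^{(r)}(\breve\bomega^\star, \vv_r^\star, c_r^\star) \ge -\sum_{s \ne r}\bigl[\bar\Loss^{(s)}(\breve\bomega^\star+\delta\breve\bomega, \vv_s^\star, c_s^\star) - \bar\Loss^{(s)}(\breve\bomega^\star, \vv_s^\star, c_s^\star)\bigr].
$$
Expanding each $\bar\Loss^{(s)}$ on a cell $\Omega_u \in \mathcal{I}(\bomega^\star)$ via Theorem~1 and using that $(\vv_s^\star, c_s^\star)$ separately minimizes the convex map $\bar\Loss^{(s)}(\breve\bomega^\star, \cdot, \cdot)$, I would show that the right-hand side is a higher-order term in $\|\delta\breve\bomega\|$ --- that is, the first-order variation of the summands $s \ne r$ in the $\breve\bomega$-direction vanishes at $\bomega^\star$. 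Substituting back gives the desired local-min inequality for $\bar\Loss^{(r)}$ jointly in $(\breve\bomega, \vv_r, c_r)$.

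The hard part will be the non-smooth boundary case $\bomega^\star \in \mathcal{N}$, where I must invoke the Clarke subdifferential formula from Lemma~\ref{lem:subdiffcalc} and rework the cancellation argument in terms of convex combinations of cell gradients across $\mathcal{I}(\bomega^\star)$. The combinatorial explosion of adjacent cells, together with the cross-interactions mediated by the shared parameter $\breve\bomega$, are the principal technical obstacles; careful bookkeeping analogous to the end of the proof of Theorem~7 --- where the convex combination of cell gradients was decomposed class-by-class --- should resolve them and yield $\0 \in \partial_0\bar\Loss^{(r)}$ at $(\breve\bomega^\star,\vv_r^\star,c_r^\star)$ together with its local-minimum upgrade.
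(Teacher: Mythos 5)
Your proposal addresses a different statement from the one the lemma actually makes. The displayed formula is only the hypothesis; the conclusion (which continues after the display in the source) is: if $\big(\bomega^{(1)},\ldots,\bomega^{(R)}\big)$ is a local minimum of the penalized objective $\E_{\gamma}$, then the collapsed point $\bomega = (\breve\bomega,\vv_1,c_1,\ldots,\vv_R,c_R)$ is a local minimum of $\bar\Loss$. You instead set out to prove that a local minimum of $\bar\Loss$ projects to a joint local minimum of each summand $\bar\Loss^{(r)}$ in $(\breve\bomega,\vv_r,c_r)$. That claim is false, and the paper is built around its failure: the summands are coupled through the shared parameter $\breve\bomega$, so a perturbation of $\breve\bomega$ can decrease $\bar\Loss^{(r)}$ while increasing some $\bar\Loss^{(s)}$, and $\bar\Loss$ can sit at a local minimum (indeed a critical point) without $\0 \in \partial_{0}\Loss^{(r)}$ holding for each individual $r$ --- this is exactly the phenomenon illustrated by the four-class counterexample and the stated reason the penalty $\E_{\gamma}$ is introduced at all. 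Your key step, that ``the first-order variation of the summands $s \neq r$ in the $\breve\bomega$-direction vanishes at $\bomega^\star$,'' is unjustified: local minimality of the sum only constrains the sum over all $s$ of these first-order variations, not each term individually, and the separate minimality of $\bar\Loss^{(s)}(\breve\bomega^\star,\cdot,\cdot)$ in $(\vv_s,c_s)$ says nothing about its $\breve\bomega$-derivatives.

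The intended proof is far shorter and runs in the opposite direction. By Theorem 7 a local minimum of $\E_{\gamma}$ is a critical point, hence its replicated hidden-layer parameters already coincide, so that $\bomega^{(r)} = (\breve\bomega,\vv_r,c_r)$ for a common $\breve\bomega$. If the collapsed point were not a local minimum of $\bar\Loss$, a descent sequence $\bomega_k \to \bomega$ with $\bar\Loss(\bomega_k) < \bar\Loss(\bomega)$ could be lifted to the product space by identical replication, $\bomega^{(r,k)} := (\breve\bomega_k,\vv_r^{(k)},c_r^{(k)})$; the penalty $\mathcal{R}$ vanishes on identically replicated tuples, so $\E_{\gamma}$ evaluated along the lifted sequence equals $\bar\Loss(\bomega_k)$, which contradicts local minimality of $\E_{\gamma}$. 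No cell decomposition, Clarke calculus, or cancellation argument is needed.
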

that satisfies theorem 7, and for $\gamma > 0$ let
$$
\E_{\gamma}(\bomega^{(1)},\ldots,\bomega^{(R)}) := \sum^{R}_{r=1} \bar \Loss^{(r)}\big( \bomega^{(r)} \big) + \gamma \mathcal{R}\big( \breve \bomega^{(1)},\ldots,\breve \bomega^{(R)}\big) \qquad \bomega^{(r)} = (\breve \bomega^{(r)},\vv_r  , c_r )
$$
denote its corresponding exact penalization. If $\big(\bomega^{(1)},\ldots,\bomega^{(R)}\big)$ is a local minimum of $\E_{\gamma}$ and $\breve \bomega := (\breve \bomega^{(1)}+\cdots+\breve \bomega^{(R)})/R$ then $\bomega := (\breve \bomega, \vv_1,c_1,\ldots,\vv_R,c_R)$ is a local minimum of $\bar \Loss$.
\begin{proof}
As $\big(\bomega^{(1)},\ldots,\bomega^{(R)}\big)$ is a local minimum of $\E_{\gamma}$ it must be a critical point. Thus each $\bomega^{(r)}$ takes the form
$$
\bomega^{(r)} = (\breve \bomega , \vv_r , c_r)
$$
by theorem 7. If $\breve \bomega$ is not a local minimizer of $\bar \Loss$ then there exists a sequence $\bomega_k \to \bomega$ for which $\bar \Loss( \bomega_{k} ) < \bar \Loss ( \bomega)$ holds. Define the identically replicated points 
$$ \bomega^{(r,k)} := (\breve \bomega_k,\vv^{(k)}_r,c^k_r), \quad \text{so that} \quad \big(\bomega^{(1,k)},\ldots,\bomega^{(R,k)}\big) \to \big(\bomega^{(1)},\ldots,\bomega^{(R)}\big)$$ and moreover
$$
\E_{\gamma}\big(\bomega^{(1,k)},\ldots,\bomega^{(R,k)}\big) = \bar \Loss( \bomega _k ) < \bar \Loss( \bomega ) = \E_{\gamma}\big(\bomega^{(1)},\ldots,\bomega^{(R)}\big)
$$
which contradicts the fact that $\big(\bomega^{(1)},\ldots,\bomega^{(R)}\big)$ is a local minimizer of $\E_{\gamma}$.
\end{proof}

For our multiclass analysis we begin at $\alpha=1$ and study the deep linear problem
\begin{align}\label{eq:dlmulti}
\Loss(\bomega) &= \sum^{R}_{r=1} \Loss^{(r)}(\bomega \big) \quad \text{for} \\
 \Loss^{(r)}(\bomega) &:= \sum\mu^{(i,r)} \sigma\big( 1 - y^{(i,r)}( \langle \vv_r , \xx^{(i,L)} \rangle + c_r) \big) \nonumber
\end{align}
using the soft penalty approach. The features $\xx^{(i,L)} := W^{(L)}\cdots W^{(1)}\xx^{(i)}$ result from a deep linear network, and so if we define $\bar \vv^{(r)} := (W^{(L)}\cdots W^{(1)})^{T}\vv_r$ then we may once again view \eqref{eq:dlmulti} as a convex loss 
$$E^{(1)}(\bar \vv_1,c_1) + \cdots + E^{(R)}(\bar \vv_R,c_R)$$
with over-parametrized arguments. If the positive weights $\mu^{(i,r)}>0$ satisfy
$
\sum_{y^{(i,r)}=1} \mu^{(i,r)}  = \sum_{ y^{(i,r)}=-1} \mu^{(i,r)} = \frac1{2}
$
then we say that the $\mu^{(i,r)}$ give equal weight to all classes. Directly appealing to the critical point relations \eqref{eq:magic} gives our first simple corollary using this approach.

\begin{corollary}[Multiclass Deep Linear Networks, I]
Consider the loss  \eqref{eq:dlmulti} and its corresponding penalty \eqref{eq:pen2} with $\gamma >0$ and arbitrary data. Assume that $\bomega = (\bomega^{(1)},\ldots,\bomega^{(R)})$ is any critical point of $\E_{\gamma}$ in the Clarke sense. If $\bar \vv^{(r)} \neq \0$ for all $r \in [R]$ then $\bomega$ is a global minimum of $\Loss$ and of $\E_{\gamma}$.
\end{corollary}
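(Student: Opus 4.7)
The plan is to reduce the multiclass statement to the binary deep-linear result (Theorem 100) by invoking the exact-penalty structure furnished by Theorem 7.

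First, apply Theorem 7 at the given critical point $(\bomega^{(1)},\ldots,\bomega^{(R)})$ of $\E_\gamma$. Part (i) forces the hidden-layer weights and biases to agree across the $R$ replicates, so each $\bomega^{(r)}$ has the form $(\breve \bomega,\vv_r,c_r)$ for a common $\breve \bomega$. Part (ii) then delivers the binary critical-point inclusions
$$\0 \in \partial_0 \Loss^{(r)}(\breve\bomega,\vv_r,c_r) \qquad \text{for each } r\in[R].$$

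Next, for each fixed $r$ view $\Loss^{(r)}$ as a deep-linear binary problem with labels $y^{(i,r)}$ and weights $\mu^{(i,r)}$, and apply Theorem 100(i): the hypothesis $\bar\vv^{(r)}=(W^{(L)}\cdots W^{(1)})^T\vv_r\ne \0$ upgrades each binary Clarke-critical point to a global minimum of $\Loss^{(r)}$. Summing over $r$ shows that $\bomega := (\breve \bomega, \vv_1, c_1, \ldots, \vv_R, c_R)$ attains $\sum_r \min \Loss^{(r)}$. Since the multiclass loss $\Loss$ shares the hidden-layer parameters across its $R$ summands, for any parameter choice $\tilde\bomega$ we have $\Loss(\tilde\bomega) = \sum_r \Loss^{(r)}(\tilde\bomega) \geq \sum_r \min \Loss^{(r)}$, so $\bomega$ globally minimizes $\Loss$.

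Finally, the penalty $\mathcal{R}$ vanishes at $\bomega$ because the replicates agree, so $\E_\gamma(\bomega) = \sum_r \Loss^{(r)}(\bomega)$. For any other replicated point,
$$\E_\gamma(\tilde \bomega^{(1)}, \ldots, \tilde \bomega^{(R)}) \geq \sum_r \Loss^{(r)}(\tilde\bomega^{(r)}) \geq \sum_r \min \Loss^{(r)} = \E_\gamma(\bomega),$$
establishing that $\bomega$ is a global minimum of $\E_\gamma$ as well. There is no real technical obstacle once Theorem 7 has been invoked; the only conceptual point is recognizing that exact penalization decouples the multiclass critical-point condition into $R$ independent binary critical-point conditions, each of which has already been fully analyzed by Theorem 100.
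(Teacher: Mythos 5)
Your proposal is correct and follows essentially the same route as the paper: invoke Theorem 7 to collapse the replicates and obtain the binary critical-point inclusions, then apply Theorem 100(i) with $\bar\vv^{(r)}\neq\0$ to upgrade each to a global minimum of $\Loss^{(r)}$, and combine. The only (harmless) difference is in the last step, where you argue directly that a simultaneous minimizer of all the $\Loss^{(r)}$ attains the lower bound $\sum_r\min\Loss^{(r)}$, whereas the paper phrases the same conclusion via the subdifferential sum rule for the convex reformulations $E^{(r)}(\bar\vv_r,c_r)$.
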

\begin{proof}
By theorem 7 any critical point of $\E_{\gamma}$ yields a common set of weights $W^{(\ell)}, \ell \in [L]$ for which the two-class critical point relations
$$
\0 \in \partial_{0} \Loss^{(r)}\big(W^{(1)},\ldots,W^{(L)},\vv_r,c_r)
$$
hold for all $r \in [R]$. By theorem 100, if $\bar \vv_r \neq \0$ then $(\bar \vv_r,c_r)$ is a global minimum of the convex function
$$
E^{(r)}(\ww_r,c_r) := \sum^{N}_{i=1} \mu^{(i,r)} \sigma\big( 1 - y^{(i,r)}( \langle \ww_r , \xx^{(i)} \rangle + c_r) \big)
$$
and so $\0 \in \partial E^{(r)}(\bar \vv_r,c_r)$ by definition of the subgradient for convex functions. If $\bar \vv_r \neq \0$ for all $r \in [R]$ it therefore follows that
$$
\0 \in \partial E^{(r)}(\bar \vv_r , c_r)
$$
for all $r \in [R]$. Finally, define the convex function $E(\ww_1,c_1,\ldots,\ww_R,c_R) := E^{(1)}(\ww_1,c_1)+\cdots+E^{(R)}(\ww_R,c_R)$ and note that the sum rule
$$
\partial E( \bar \vv_1 , c_1 ,\ldots,\bar \vv_R , c_R) = \sum^{R}_{r=1} \partial E^{(r)}(\bar \vv_r , c_r)
$$
holds since each $E^{(r)}$ is Lipschitz. Thus $\0 \in \partial E( \bar \vv_1 , c_1 ,\ldots,\bar \vv_R , c_R),$ and as $\gamma >0$ it follows that $\bomega$ is a global minimizer.
\end{proof}

\begin{corollary}[Multiclass Deep Linear Networks, II]
Consider the loss  \eqref{eq:dlmulti} and its corresponding penalty \eqref{eq:pen2} with $\gamma >0$ and arbitrary data. Assume that $\bomega = (\bomega^{(1)},\ldots,\bomega^{(R)})$ is a local minimum of $\E_{\gamma}$ with $\bar \vv^{(r)} = \0$ for some $r \in [R]$. If the $\mu^{(i,r)}$ give equal weight to all classes then $\bomega$ is a global minimum of $\Loss$ and of $\E_{\gamma}$.
\end{corollary}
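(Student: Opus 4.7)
The plan is to collapse the replicated parameters via Theorem 7, reduce to a local minimum of $\Loss$ via Lemma \ref{lem:locminlem}, and then apply Theorem 100 class-by-class. Concretely, given a local minimum $(\bomega^{(1)},\ldots,\bomega^{(R)})$ of $\E_\gamma$, Theorem 7(i) furnishes a common set of hidden-layer parameters $\breve\bomega$, Lemma \ref{lem:locminlem} asserts that the identically-replicated point $\bomega = (\breve\bomega,\vv_1,c_1,\ldots,\vv_R,c_R)$ is a local minimum of $\Loss$, and Theorem 7(ii) gives $\0 \in \partial_0 \Loss^{(r)}(\breve\bomega,\vv_r,c_r)$ for every $r\in[R]$.

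Next I would partition the class indices into $R_+ = \{r : \bar\vv^{(r)} \neq \0\}$ and $R_0 = \{r : \bar\vv^{(r)} = \0\}$, with $\bar\vv^{(r)} := (W^{(L)}\cdots W^{(1)})^{T}\vv_r$. For every $r \in R_+$ the Clarke critical-point relation together with Theorem 100(i) immediately yields $(\breve\bomega,\vv_r,c_r)$ as a global minimum of $\Loss^{(r)}$. The genuine work is with $r \in R_0$, and there I mirror the proof of Theorem 100(ii). Since perturbing $\bomega$ only in the coordinate $c_r$ changes $\Loss$ solely through $\Loss^{(r)}$, and since $\bar\vv^{(r)}=\0$ combined with the equal-class-weight hypothesis reduces the induced one-variable function to
\[
c \mapsto \tfrac12 \sigma(1-c) + \tfrac12 \sigma(1+c),
\]
which equals $1$ on $[-1,1]$ and is strictly monotone off this interval, local minimality of $\bomega$ forces $c_r \in [-1,1]$. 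Exactly as in the binary proof I may take $c_r \in (-1,1)$ without loss of generality.

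Once $c_r \in (-1,1)$ and $\bar\vv^{(r)} = \0$, the quantity $1 - y^{(i,r)}(\langle\bar\vv^{(r)},\xx^{(i)}\rangle + c_r)$ is strictly positive in an open neighborhood of $\bomega$, so $\Loss^{(r)}$ is smooth there and its Clarke subdifferential reduces to $\{\nabla \Loss^{(r)}(\bomega)\}$; the critical-point condition from Theorem 7(ii) thereby collapses to the ordinary gradient equation $\nabla \Loss^{(r)}(\bomega) = \0$. In this smooth regime $\Loss^{(r)}$ factors as a smooth convex outer loss composed with the deep-linear map $W^{(L)}\cdots W^{(1)}$, so I would invoke Theorem 1 of \cite{laurent2017deep} exactly as in the binary proof of Theorem 100(ii) to upgrade this stationary point to a global minimum of $\Loss^{(r)}$.

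Assembling the two cases, each $(\breve\bomega,\vv_r,c_r)$ attains $\min \Loss^{(r)}$, whence $\Loss(\bomega) = \sum_r \min \Loss^{(r)} = \min \Loss$. Theorem 7(i) forces $\mathcal{R}(\breve\bomega^{(1)},\ldots,\breve\bomega^{(R)}) = 0$ at the original critical point, so $\E_\gamma(\bomega) = \Loss(\bomega) = \min \Loss \leq \Loss(\bomega') \leq \E_\gamma(\bomega')$ for every competitor $\bomega'$, identifying $\bomega$ as a global minimum of $\E_\gamma$ as well. The main obstacle is the $R_0$ case and specifically the localization $c_r \in (-1,1)$: it is this step that unlocks smoothness of $\Loss^{(r)}$ in a full neighborhood of $\bomega$, turns the Clarke condition into a classical gradient identity, and enables the external deep-linear smooth-loss result to close the argument.
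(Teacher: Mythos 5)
Your overall scaffolding (Theorem 7 to collapse the replicates, splitting $[R]$ according to whether $\bar\vv^{(r)}$ vanishes, handling $r$ with $\bar\vv^{(r)}\neq\0$ via Theorem 100(i), and localizing $c_r\in(-1,1)$ via the equal-weight hypothesis) matches the paper. But the step that closes the $\bar\vv^{(r)}=\0$ case has a genuine gap. After localizing $c_r$, all you possess for such an $r$ is the stationarity relation $\nabla\Loss^{(r)}(\breve\bomega,\vv_r,c_r)=\0$, obtained from Theorem 7(ii) plus smoothness. Theorem 1 of \cite{laurent2017deep} (and likewise the way Theorem 100(ii) uses it) upgrades a differentiable \emph{local minimum} of a deep linear network to a global minimum; it says nothing about mere stationary points, and deep linear losses have non-minimal stationary points (e.g.\ $W^{(L)}\cdots W^{(1)}=0$, $\vv_r=\0$ is stationary with $\Loss^{(r)}=1$ even when $\zz_r=\sum_i\mu^{(i,r)}y^{(i,r)}\xx^{(i)}\neq\0$ and the true minimum is smaller). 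So you cannot ``invoke Theorem 1 of \cite{laurent2017deep} exactly as in the binary proof'' unless you first show that $(\breve\bomega,\vv_r,c_r)$ is a local minimum of $\Loss^{(r)}$ \emph{in isolation} --- and that does not follow from local minimality of $\E_\gamma$ or of $\bar\Loss$, because the hidden-layer parameters are shared across classes: a perturbation of $\breve\bomega$ that decreases $\Loss^{(r)}$ may increase $\Loss^{(s)}$ for $s\neq r$, and a perturbation of only the $r$-th replicate incurs a penalty cost that is merely quadratic in the perturbation size, so it only rules out \emph{first-order} decrease of $\Loss^{(r)}$, not local minimality.

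This is precisely where the paper's proof does its real work. It perturbs only the $r$-th replicate's hidden weights, balances the $O(\delta^2)$ penalty cost against the multilinear expansion of $E^{(r)}$, and first extracts $W^{(L)}\cdots W^{(1)}\zz_r=\0$. When $\zz_r\neq\0$ it then locates the first index $k$ at which the partial product annihilates $\zz_r$ and runs a cascade of anisotropically scaled perturbations (e.g.\ $\delta^{(k+1)}=(\delta^{(0)})^2$) to force either $(W^{(L)}\cdots W^{(k+2)})^T\vv_r=\0$ or a degenerate configuration in which $\bar\vv^{(s)}=\0$ for \emph{all} $s$; in that last case it passes to $\bar\Loss$ via the local-minimum transfer lemma, uses harmonicity/multilinearity to conclude the loss is locally constant, and expands in powers of $\delta$ to force $Z=[\zz_1,\ldots,\zz_R]=0$. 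None of this second-order bookkeeping is present in your proposal, and without it the conclusion $\zz_r=\nabla E^{(r)}(\0,c_r)=\0$ for $r$ with $\bar\vv^{(r)}=\0$ is not established. Your final assembly (summing the per-class minima and observing $\mathcal{R}=0$ at the collapsed critical point) is fine once the per-class claims are in hand, but the core of the argument is missing.
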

\begin{proof}
Any local minimum $(\bomega^{(1)},\ldots,\bomega^{(R)})$ of $\E_{\gamma}$ is necessarily a critical point, and so each $\bomega^{(r)}$ takes the form
$$
\bomega^{(r)} = (W^{(1)},\ldots,W^{(L)},\vv_r,c_r)
$$
for some common weight matrices $W^{(\ell)}, \ell \in [L]$. Moreover, $\0 \in \partial_{0} \Loss^{(r)}(W^{(1)},\ldots,W^{(L)},\vv_r,c_r)$ for all $r \in [R]$ as well.

Consider any $r \in [R]$ for which $\bar \vv^{(r)} = \0$. Then as a function of $c_r$ the convex function $E^{(r)}$ obeys
$$
E^{(r)}(\0,c_r) = \sum^{N}_{i=1} \mu^{(i,r)}\sigma\big( 1 - y^{(i,r)}c_r \big) = \frac{ \sigma( 1 - c_r ) + \sigma(1+c_r) }{2}
$$
due to the equal weight hypothesis. Thus $\E_{\gamma}$ can only attain a local minimum if $c_r$ lies in the unit interval $-1 \leq c_r \leq 1,$ and moreover $E(\0,c_r) \equiv 1$ is constant on the unit interval. It therefore suffices to assume that $-1 < c_r < 1$ without loss of generality, and so in particular, that the function $E^{(r)}$ is differentiable (in fact smooth) near $\bomega$. Define the perturbation $\tilde \bomega$ of $\bomega = (\bomega^{(1)},\ldots,\bomega^{(R)})$ as
\begin{align*}
\tilde W^{(\ell,r)} &:= W^{(\ell)} + \delta^{(\ell)}X^{(\ell)} \qquad \tilde W^{(\ell,s)} := W^{(\ell)} \quad (s \neq r),\\
\tilde \vv_r &:= \vv_r + \delta^{(0)}\ww_r,
\end{align*}
where the $\delta^{(\ell)},\ell \in [L]$ and $\delta^{(0)}$ are small scalars, the $X^{(\ell)}$ are arbitrary matrices and $\ww_r$ is an arbitrary vector. Then the energy $\E_{\gamma}$ becomes
$$
\E_{\gamma}( \tilde \bomega ) - \E_{\gamma}( \bomega ) = E^{(r)}\big( (\tilde W^{(L,r)}\cdots \tilde W^{(1,r)})^{T} \tilde \vv_r , c_r \big) - E^{(r)}(\0,c_r) + \gamma \sum^{L}_{\ell=1} \big(\delta^{(\ell)}\big)^2\|X^{(\ell)}\|^{2}.
$$
Define the vector
$$
\zz_r := \sum^{N}_{i=1} \mu^{(i,r)}y^{(i,r)}\xx^{(i)},
$$
and note that
$$
E^{(r)}\big( (\tilde W^{(L,r)}\cdots \tilde W^{(1,r)})^{T} \tilde \vv_r , c_r \big) - E^{(r)}(\0,c_r) = -\langle (\tilde W^{(L,r)}\cdots \tilde W^{(1,r)})\zz_r , \tilde \vv_r \rangle
$$
for all $\delta^{(0)},\delta^{(\ell)}$ sufficiently small. As $\bomega$ is a local minimizer, the inequality
\begin{equation}\label{eq:thisstar}
\gamma \sum^{L}_{\ell=1} \big(\delta^{(\ell)}\big)^2\|X^{(\ell)}\|^{2} \geq \langle (\tilde W^{(L,r)}\cdots \tilde W^{(1,r)})\zz_r , \tilde \vv_r \rangle
\end{equation}
must therefore hold for all $X^{(\ell)},\ww_r$ and corresponding $\delta^{(\ell)},\delta^{(0)}$ sufficiently small. 

First, apply \eqref{eq:thisstar} with $\delta^{(\ell)} = 0$ for all $\ell \in [L]$ to find that
$$
0 \geq \langle W^{(L)}\cdots W^{(1)} \zz_r , \vv_r + \delta^{(0)}\ww_r \rangle
$$
for any $\ww_r$ arbitrary and corresponding $\delta^{(0)}$ sufficiently small. But $\bar \vv^{(r)} = \0$ and so $0 = \langle W^{(L)}\cdots W^{(1)} \zz_r , \ww_r \rangle$ for all $\ww_r$, whence
$$
W^{(L)}\cdots W^{(1)} \zz_r = \0.
$$
Now, if $\zz_r = \0$ then
$$
\0 = \nabla E^{(r)}(\0,c_r),
$$
and so if $\zz_r$ for all $r \in [R]$ for which $\bar \vv^{(r)} = \0$ then $\0 = \nabla E^{(r)}(\0,c_r)$ for all such $r \in [R]$. If $\bar \vv_r \neq \0$ the relation $\0 \in  \partial E^{(r)}(\bar \vv_r,c_r)$ holds as well (as in the previous corollary), and so $\bomega$ is a global minimum.

It therefore remains to consider the case where there exists $r \in [R]$ for which $\bar \vv^{(r)} = \0$ but $\zz_r \neq \0$. As $W^{(L)}\cdots W^{(1)} \zz_r = \0$ there exists an index $0 \leq k \leq L-1$ for which
$$
W^{(k)}\cdots W^{(1)}\zz_r \neq \0 \qquad W^{(k+1)}\cdots W^{(1)}\zz_r = \0,
$$
where clearly $k = 0$ means $\zz_r \neq \0$ but $W^{(1)}\zz_r = \0$. Apply \eqref{eq:thisstar} with $\delta^{(\ell)} = 0$ for all $\ell \neq k+1$ to find that
$$
\gamma \big(\delta^{(k+1)}\big)^{2}\| X^{(k+1)} \|^{2} \geq \langle W^{(L)}\cdots W^{(k+2)}( W^{(k+1)} + \delta^{(k+1)}X^{(k+1)})W^{(k)}\cdots W^{(1)}\zz_r , \vv_r + \delta^{(0)} \ww_r \rangle
$$
for all $X^{(k+1)},\ww_r$ arbitrary and corresponding $\delta^{(k+1)},\delta^{(0)}$ sufficiently small. But $W^{(k+1)}\cdots W^{(1)}\zz_r = \0$ and so 
\begin{equation}\label{eq:nextstar}
\gamma \big(\delta^{(k+1)}\big)^{2}\| X^{(k+1)} \|^{2} \geq \delta^{(k+1)}\langle W^{(L)}\cdots W^{(k+2)}X^{(k+1)}W^{(k)}\cdots W^{(1)}\zz_r , \vv_r + \delta^{(0)} \ww_r \rangle
\end{equation}
must hold for all $X^{(k+1)},\ww_r$ arbitrary and corresponding $\delta^{(k+1)},\delta^{(0)}$ sufficiently small as well. Now apply \eqref{eq:nextstar} with $\delta^{(0)} = 0$ to see
$$
\gamma \big(\delta^{(k+1)}\big)^{2}\| X^{(k+1)} \|^{2} \geq \delta^{(k+1)}\langle W^{(L)}\cdots W^{(k+2)}X^{(k+1)}W^{(k)}\cdots W^{(1)}\zz_r , \vv_r \rangle,
$$
but as $W^{(k)}\cdots W^{(1)}\zz_r \neq \0$ and $X^{(k+1)}$ is arbitrary this can happen if and only if $(W^{(L)}\cdots W^{(k+2)})^{T}\vv_r = \0$. But then \eqref{eq:nextstar} shows
$$
\gamma \big(\delta^{(k+1)}\big)^{2}\| X^{(k+1)} \|^{2} \geq \delta^{(k+1)}\delta^{(0)}\langle W^{(L)}\cdots W^{(k+2)}X^{(k+1)}W^{(k)}\cdots W^{(1)}\zz_r , \ww_r \rangle
$$
must hold for all $X^{(k+1)},\ww_r$ and corresponding $\delta^{(k+1)},\delta^{(0)}$ sufficiently small. Take $\delta^{(k+1)} = \big(\delta^{(0)}\big)^2$ for $\delta^{(0)}$ small to find
$$
\gamma \big(\delta^{(0)}\big)^{4}\| X^{(k+1)} \|^{2} \geq \big( \delta^{(0)} \big)^{3}\langle W^{(L)}\cdots W^{(k+2)}X^{(k+1)}W^{(k)}\cdots W^{(1)}\zz_r , \ww_r \rangle
$$
and so in fact $\langle W^{(L)}\cdots W^{(k+2)}X^{(k+1)}W^{(k)}\cdots W^{(1)}\zz_r , \ww_r \rangle = 0$ for $X^{(k+1)},\ww_r$ arbitrary. This cannot happen unless $k \leq L-2$ and $\0 = W^{(L)}\cdots W^{(k+2)},$ in which case in fact $\bar \vv^{(r)} = \0$ for \textbf{all} $r \in [R]$. But then $E^{(r)}$ is differentiable for \textbf{all} $r \in [R]$ near $(\0,c_r)$, and so $(W^{(1)},\ldots,W^{(L)},\vv_1,\ldots,\vv_R,c_1,\ldots,c_R)$ is a differentiable local minimum of $\bar \Loss$ by lemma \ref{lem:locminlem}. As $\bar \Loss$ is piecewise multilinear and the minimum is differentiable, $\bar \Loss$ must be constant near the minimum. Take $X^{(\ell)},\ww_r$ arbitrary and define
$$
\tilde W^{(\ell)} = W^{(\ell)} + \delta X^{(\ell)} \qquad \tilde \vv_r = \vv_r + \delta \ww_r
$$
for all $\delta$ sufficiently small. Then
$$
\bar \Loss\big( \tilde W^{(1)},\ldots,\tilde W^{(L)} , \tilde V \big) - \bar \Loss\big( W^{(1)},\ldots,W^{(R)},V\big) = 0
$$
for all $\delta$ small enough, and since
$$
\bar \Loss\big( \tilde W^{(1)},\ldots,\tilde W^{(L)} , \tilde V \big) - \bar \Loss\big( W^{(1)},\ldots,W^{(R)},V\big) = \langle \big( \tilde W^{(L)} \cdots \tilde W^{(1)} \big) ^{T} \tilde V ,  Z \rangle \qquad Z = [\zz_1,\ldots,\zz_R]
$$
it follows that $\langle \big( \tilde W^{(L)} \cdots \tilde W^{(1)} \big) ^{T} \tilde V ,  Z \rangle = 0$ for all $\delta$ small enough. Expanding in powers of $\delta$ yields
$$
0 = f_0 + \delta f_1\big(X^{(1)},\ldots,X^{(L)},W\big) + \cdots + \delta^{(L+1)} f_{L+1}\big(X^{(1)},\ldots,X^{(L)},W\big).
$$
for some constant $f_0$ and functions $f_{\ell}, \ell \in [L+1]$ and all $\delta$ small enough. But then
$$
f_{L+1}\big(X^{(1)},\ldots,X^{(L)},W\big) = 0
$$
for all $X^{(\ell)},W$ arbitrary. As $f_{L+1}\big(X^{(1)},\ldots,X^{(L)},W\big) = \langle \big( X^{(L)} \cdots \tilde X^{(1)} \big) ^{T}  W ,  Z \rangle $ this can happen if and only if $Z = 0$. Thus 
$$
\zz_r = \nabla E^{(r)}(\0,c_r) = \0
$$
for all $r \in [R]$ and so $\bomega$ is a global minimum.
\end{proof}

To finish the analysis, we first recall the loss
\begin{align}\label{eq:lrmulti}
\Loss(\bomega) &= \sum^{R}_{r=1} \Loss^{(r)}(\bomega \big) \quad \text{for} \\
 \Loss^{(r)}(\bomega) &:= \sum\mu^{(i,r)} \sigma\big( 1 - y^{(i,r)}( \langle \vv_r , \xx^{(i,1)} \rangle + c_r) \big) \nonumber
\end{align}
for a leaky network with one hidden layer.
\begin{corollary*}[Corollary 3 from the paper]
Consider the loss  \eqref{eq:lrmulti} and its corresponding penalty \eqref{eq:pen2} with $\gamma >0, 0 < \alpha < 1$ and data $\x^{(i)}, i \in [N]$ that are linearly separable. 
\begin{enumerate}[label=(\roman*)]
\item Assume that $\bomega = (\bomega^{(1)},\ldots,\bomega^{(R)})$ is a critical point of $\E_{\gamma}$ in the Clarke sense. If $\vv^{(r)} \neq \0$ for all $r \in [R]$ then $\bomega$ is a global minimum of $\Loss$ and of $\E_{\gamma}$.
\item Assume that the $\mu^{(i,r)}$ give equal weight to all classes. If $\bomega = (\bomega^{(1)},\ldots,\bomega^{(R)})$ is a local minimum of $\E_{\gamma}$ and $\vv_r = \0$ for some $r \in [R]$ then $\bomega$ is a global minimum of $\Loss$ and of $\E_{\gamma}$.
\end{enumerate}
\end{corollary*}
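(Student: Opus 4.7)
The plan is to reduce the corollary to its binary counterparts Theorems 4 and 5 via the exact-penalty result of Theorem 7. At any critical point of $\E_\gamma$, Theorem 7(i) forces the replicated hidden-layer parameters to coincide, so $\mathcal{R}(\bomega)=0$ at the critical point, and Theorem 7(ii) furnishes the binary critical-point relations $\0 \in \partial_0 \Loss^{(r)}(\breve \bomega, \vv_r, c_r)$ for every $r \in [R]$. Each $\Loss^{(r)}$ is exactly the one-hidden-layer leaky ReLU loss (9) applied to the one-versus-all labels $y^{(i,r)}$, inheriting linear separability of the data. Under these reductions part (i) is immediate: since $\vv_r\neq \0$ for every $r$, Theorem 4 gives $\Loss^{(r)}(\bomega)=0$, and combined with $\mathcal{R}(\bomega)=0$ this yields $\E_\gamma(\bomega)=0$, the global minimum.

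For part (ii), I would fix any $s$ with $\vv_s = \0$ and show $\Loss^{(s)}(\bomega)=0$ by contradiction; indices with $\vv_r\neq\0$ are again handled by Theorem 4. Assume $\Loss^{(s)}(\bomega)>0$, so that $\veps^{(i,u,s)}=1$ for at least one $i$ on every $u \in \mathcal{I}(\bomega)$ and the quantity $\sum_i \mu^{(i,s)}\veps^{(i,u,s)}\lambda_k^{(i,u,s)}$ is strictly positive (using $\alpha>0$). The argument then mirrors Theorem 5 and splits on whether $c_s \in \{-1,+1\}$. When $c_s \notin \{-1,+1\}$, perturbing $\vv_s$ alone (Lemma \ref{perturbation2}) leaves $\mathcal{R}$ and every $\Loss^{(s')}$ with $s'\neq s$ untouched, so the local-minimum hypothesis on $\E_\gamma$ passes through to the one-neuron orthogonality $\langle \aaa_k^{(u,s)},\ww_k^{(1)}\rangle + \alpha_k^{(u,s)} b_k^{(1)} = 0$ on every adjacent cell. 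When $c_s\in\{-1,+1\}$ the equal-weight hypothesis makes $\Loss^{(s)}$ constant along the path $c_s\mapsto c_s\mp t$, so $\E_\gamma$ is constant along that path and we reduce to a nearby point with $c_s\notin\{-1,+1\}$ where the previous case applies, mimicking step 2 of Theorem 5's proof.

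The main obstacle is adapting Lemma \ref{perturbation3} to the penalized objective, because its quadratic loss decrease could otherwise be absorbed into $\gamma\mathcal{R}$. The key trick is to rescale the perturbation: for the $s$-th replicate of the first hidden layer and for $\vv_s$, use
$$
\tilde W = (1/\lambda)\,\ee_k\otimes\qq, \qquad \tilde\vv = \lambda\,\ee_k, \qquad \tilde\bb = (\beta/\lambda)\,\ee_k,
$$
where $\qq,\beta,m$ is the linear separator of the binary subproblem $(\xx^{(i)},y^{(i,s)})$. Expanding the bilinear forms $\phi_0^{(u)},\phi_1^{(u)}$ from Lemma \ref{one_hidden} shows that the quadratic loss decrease depends only on the product of the two scalings and equals $-t^2 m \sum_i \mu^{(i,s)}\veps^{(i,u,s)}\lambda_k^{(i,u,s)}$ independent of $\lambda$, while the $\mathcal{R}$ cost scales as $t^2/\lambda^2\,(\|\qq\|^2 + \beta^2)$ since $\vv_s$ is unregularized and only a single replicate of the hidden-layer parameters is perturbed. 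Taking $\lambda$ sufficiently large yields a net strict decrease in $\E_\gamma$, contradicting local minimality. Iterating over every $s \in [R]$ with $\vv_s = \0$ then gives $\Loss(\bomega)=0$ and $\mathcal{R}(\bomega)=0$, so $\E_\gamma(\bomega)=0$ is a global minimum of both $\Loss$ and $\E_\gamma$.
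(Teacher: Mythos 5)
Your proposal is correct, and part (i) coincides with the paper's argument: Theorem 7 reduces to the binary critical-point relations, Theorem 4 plus separability gives $\Loss^{(r)}=0$ for every $r$, and exactness of the penalty gives $\mathcal{R}=0$. For part (ii), however, you take a genuinely different route. The paper never re-runs the perturbation machinery on the penalized objective; it instead normalizes the local minimizer --- using the equal-weight hypothesis to place $c_r$ in the open interval $(-1,1)$ when $\vv_r=\0$, dilating $(\vv_r,c_r)$ to make the margins strict when $\vv_r\neq\0$, and then nudging the hidden biases --- so that the point sits in the interior of a cell; the auxiliary lemma transferring local minimality from $\E_\gamma$ to $\Loss$ then applies, and the one-hidden-layer multilinear decomposition together with separability and $\alpha>0$ forces every error indicator $\veps^{(i,u,r)}$ to vanish. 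You instead adapt the three perturbation lemmas behind Theorem 5 directly to $\E_\gamma$, and your one new ingredient --- rescaling the rank-one perturbation as $\tilde W=\lambda^{-1}\ee_k\otimes\qq$, $\tilde\vv=\lambda\ee_k$, $\tilde\bb=\lambda^{-1}\beta\,\ee_k$ so that the $O(t^2)$ loss decrease is $\lambda$-independent while the $O(t^2)$ penalty cost scales like $\lambda^{-2}$ --- is exactly what is needed to keep the quadratic penalty from absorbing the quadratic gain. The bookkeeping checks out because $\vv$ is unpenalized, only one replicate of the hidden layer is moved, the first-order penalty term vanishes at a critical point by Theorem 7(i), and the guaranteed decrease $m\sum_i\mu^{(i,s)}\veps^{(i,u,s)}\lambda^{(i,u,s)}_k\geq m\mu^{(i_0,s)}\alpha>0$ is uniform over the finitely many adjacent cells, so a single sufficiently large $\lambda$ works for whichever cell the perturbation lands in. The paper's route is shorter given its surrounding machinery; yours is more quantitative and self-contained, making explicit why the soft penalty can never stabilize a dead output vector $\vv_s=\0$ carrying positive loss. (Incidentally, both proofs conclude $\Loss^{(s)}=0$ even though $\vv_s=\0$ together with equal weights forces $\Loss^{(s)}\geq 1$, so both in fact show the hypothesis of part (ii) is never realized; the stated implication holds either way.)
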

\begin{proof}
For part (i), note once again that theorem 7 any critical point of $\E_{\gamma}$ yields a common set of weights $(W,\bb)$ for which the two-class critical point relations
$$
\0 \in \partial_{0} \Loss^{(r)}\big(W,\bb,\vv_r,c_r)
$$
hold for all $r \in [R]$. By theorem 4, if $\vv_r \neq \0$ for all $r \in [R]$ then $(W,\bb,\vv_r,c_r)$ is a global minimum of $\Loss^{(r)}$ for all $r \in [R]$. But the $\x^{(i)}$ are separable, and so
$$
0 = \Loss^{(r)}(W,\bb,\vv_r,c_r)
$$
for all $r \in [R]$ and therefore $\bomega$ is a global minimum.

For part (ii), define the sets
$$
[R]_{0} := \{ r \in [R] : \vv_r = \0 \} \qquad \text{and} \qquad [R]_{+} := \{r \in [R] : \vv_r \neq \0 \}
$$
as those classes where $\vv_r$ does and does not vanish, respectively. If $r \in [R]_{0}$ then $\vv_r = \0$ and so as a function of $c_r$ the corresponding loss $\Loss^{(r)}$ takes the form
$$
\Loss^{(r)}(W,\bb,\vv_r,c_r) = \frac{\sigma(1 - c_r) + \sigma(1 + c_r)}{2}
$$
due to the equal weight hypothesis. A local minimum $\bomega$ must therefore have $c_r \in [-1,1],$ and as $\Loss^{(r)}$ is constant in for $c_r \in [-1,1]$ it suffices to assume that $c_r \in (-1,1)$ for all $r \in [R]_0$ without loss of generality. If $r \in [R]_{+}$ then $(W,\bb,\vv_r,c_r)$ is a global minimum of $\Loss^{(r)}$ by part (i), and since the $\xx^{(i)}$ are separable a global minimum of $\Loss^{(r)}$ must have zero loss. Thus each of the $N$ equalities
$$
y^{(i,r)}\big( \langle \vv_r , \xx^{(i,1)} \rangle + c_r \big) \geq 1
$$
must hold. By replacing $(\vv_r,c_r)$ with $(\lambda \vv_r , \lambda c_r)$ for any $\lambda > 1$ with $\lambda$ arbitrarily close to $1$ it therefore suffices to assume that 
$$
y^{(i,r)}\big( \langle \vv_r , \xx^{(i,1)} \rangle + c_r \big) > 1
$$
for all $i \in [N], r \in [R]_{+}$ without loss of generality. In other words, by combining the case $r \in [R]_0$ and the case $r \in [R]_+$ it suffices to assume that the local minimizer $\bomega$ obeys
$$
\sign\big( 1 - y^{(i,r)}( \langle \vv_r , \xx^{(i,1)} \rangle + c_r ) \big) \in \{-1,1\}
$$
for all $i \in [N], r \in [R]$ without loss of generality. For such a local minimizer, let $\ww_h , h \in [H]$ denote the rows of $W$. By relabelling the data points if necessary assume that
$$
\langle \ww_h , \xx^{(1)} \rangle \leq \langle \ww_h , \xx^{(2)} \rangle \leq \cdots \leq \langle \ww_h , \xx^{(N-1)} \rangle  \leq \langle \ww_h , \xx^{(N)} \rangle, 
$$
and let $i_1$ respectively denote the greatest index for which the equality
$$
\langle \ww_h , \xx^{(i)}\rangle + \bb_h = 0
$$
holds. Thus $0 = \langle \ww_h , \xx^{(i_1)}\rangle + \bb_h < \langle \ww_h , \xx^{(i_1+1)}\rangle + \bb_h$, and so decreasing $\bb_h$ by any amount smaller than
$$
\langle \ww_h , \xx^{(i_1)} - \xx^{(i_1+1)}\rangle
$$
gives a row/bias pair $(\ww_h,\bb_h)$ for which
$$
\sign\big(  \langle \ww_h , \xx^{(i)}\rangle + \bb_h \big) \in \{-1,1\}
$$
for all $i \in [N]$. Applying such a decrease to all $\bb_h,h \in [H]$ if necessary gives
$$
\sign\big(  W\xx^{(i)} + \bb \big) \in \{-1,1\}^{H}
$$
for all $i \in [N]$. Taking the size of these decreases sufficiently small thus yields a local minimizer $\bomega$ of $\E_{\gamma}$ for which the signature functions obey
$$
\sss^{(i,1)}(W,\bb,\vv_1,c_1,\ldots,\vv_R,c_r) \in \{-1,1\}^{NH} \quad \text{and} \quad \sss^{(i,2)}(W,\bb,\vv_1,c_1,\ldots,\vv_R,c_r) \in \{-1,1\}^{NR} 
$$
for all $i \in [N],$ that is the point $\breve \bomega := (W,\bb,\vv_1,c_1,\ldots,\vv_R,c_R)$ lies in the interior of a cell on which the loss $\Loss(\breve \bomega)$ is smooth. Moreover, the corresponding replicated point $\bomega = (\bomega^{(1)},\ldots,\bomega^{(R)})$ is a local minimum of $\E_{\gamma}$ and, by lemma \ref{lem:locminlem}, of $\Loss$ as well. Thus $\Loss$ attains a local minimum on the interior of a cell. But as $\alpha > 0$ and the $\xx^{(i)}$ are separable, the decomposition of lemma \ref{one_hidden} shows that this can happen only if
$$
\Loss^{(r)}(W,\bb,\vv_r,c_r) = 0
$$
for all $r \in [R],$ and so $\bomega$ is a global minimizer as claimed.
\end{proof}

\end{document}